\newif\ifarxiv
\arxivtrue

\newif\ificlr
\iclrfalse

\documentclass{article} 

\ificlr
\usepackage{iclr2027_conference,times}
\else
\ifarxiv
\usepackage[margin=2.5cm]{geometry}
\usepackage{palatino}
\usepackage{natbib}
\usepackage{parskip}
\fi\fi

\usepackage{preamble}

\title{Two-Timescale Fine-tuning Provably Learns New Features for Two-Layer ReLU Networks}

\author{Etienne Boursier \\
INRIA, LMO, Université Paris-Saclay, \\
Orsay, France \\
\texttt{etienne.boursier@inria.fr}
\ificlr\And\else\ifarxiv\and\fi\fi
Nicolas Flammarion \\
EPFL, Lausanne, Switzerland\\
\texttt{nicolas.flammarion@epfl.ch}
}

\ifarxiv\date{}\fi

\begin{document}

\setcounter{tocdepth}{3}

\doparttoc 
\faketableofcontents 

\maketitle

\begin{abstract}
Fine-tuning pre-trained models on specialized tasks with scarce data is central to modern deep learning. Despite its empirical success, theoretical understanding of fine-tuning remains limited. 
We introduce a Gaussian multi-index setting to study fine-tuning from pre-trained weights, where the teacher network has $m+1$ features, $m$ of which are learned during pre-training and one of which must be learned during fine-tuning. 
For two-layer ReLU networks, we show that two-timescale training, i.e., updating the outer weights infinitely faster than the hidden ones, learns the new task-specific feature while preserving the pre-trained ones in the model representation. Moreover, only $\mathcal{O}(d)$ fine-tuning samples are required for this recovery, independently of the number of pre-trained features. In contrast, with random initialization, the same number of samples is insufficient to recover the target parameters. Our results therefore demonstrate that pre-training can induce an implicit bias with a clear statistical advantage over random initialization, enabling feature learning from scarce fine-tuning data.
\end{abstract}

\section{Introduction}

With the recent rise of foundation models, deep learning architectures are increasingly trained in two stages: they are first pre-trained on large and diverse datasets, and then fine-tuned on more specialized and typically much smaller datasets. This pre-training/fine-tuning paradigm has become central to modern machine learning and consistently outperforms \textit{single-task} learning, where a model is trained from scratch directly on the target task \citep{kornblith2019better}.

Despite this empirical success, our theoretical understanding of fine-tuning remains limited. A large body of work has studied the training of neural networks in the \textit{single-task} setting, often through simplified architectures such as two-layer ReLU neural networks \citep{chizat2018global,lyu2020Gradient,boursier2022gradient}.
These analyses, however, typically consider training from an uninformative random initialization. Fine-tuning is fundamentally different: optimization starts from a highly structured initialization that already encodes information about related tasks. This raises a basic question: 
\textit{
Does pre-training merely make optimization easier, helping fine-tuning reach a solution that could also be learned from the fine-tuning data alone? Or does the pre-trained initialization provide side information that changes what can be learned from these limited data?}
We show that the latter can indeed occur. Pre-training can induce a favorable implicit bias during fine-tuning, leading to better generalization even when the target task requires learning features that are absent from the pre-trained representation. 

To understand this advantage, we need to explain how fine-tuning can learn the missing features from limited target data without relearning the entire representation. Existing theoretical analyses have largely focused on the preservation and reuse of pre-trained
features \citep{shachaf2021theoretical,kumar2022finetuning,malladi2023kernel}. Yet empirical evidence shows that fine-tuning can also modify the representation and learn new task-specific features \citep{peters2019tune}. Our work addresses this tension by showing how fine-tuning can acquire a new relevant feature while preserving the useful features already learned during pre-training. In particular, we show that such a feature can be learned from a pre-trained initialization with limited target data, even when the same feature fails to emerge from random initialization.

\paragraph{Contributions.} 

We introduce a Gaussian multi-index setting designed to capture the key aspects of fine-tuning while remaining amenable to a precise analysis. 
We show that, under two-timescale training---where the outer weights are adapted much faster than the hidden weights---and in the limit where the \textit{void} features inherited from pre-training have vanishing scale, the training dynamics converge to a limiting process in which, during an initial phase, only the void features are effectively updated. The pre-trained features therefore remain intact while new features are learned.

We then show that these limiting dynamics can recover a new neuron, corresponding to a feature unseen during pre-training, from only $\cO(d)$ fine-tuning samples. Importantly, this sample complexity is independent of the number of relevant features already learned during pre-training. In contrast, learning the same model from scratch requires a sample complexity that grows with the number of features, highlighting a statistical advantage of fine-tuning over single-task learning.

Our analysis combines a detailed characterization of the optimization dynamics (Section~\ref{sec:population}) with sharp statistical concentration arguments (Section~\ref{sec:main}), yielding optimal statistical guarantees for the considered fine-tuning procedure.

\subsection{Related work}

\paragraph{Theory of fine-tuning.} The theoretical literature on fine-tuning from pre-trained weights remains limited, especially compared with the vast literature on single-task learning. Several works \citep{shachaf2021theoretical,kumar2022finetuning,wu2022power,lee2023surgical,lauditi2026transfer} study fine-tuning in linear networks and identify settings in which pre-trained features can be successfully or unsuccessfully adapted to a downstream task. 
In a related line of work, \citet{shachaf2021theoretical,malladi2023kernel,tomihari2024understanding} study fine-tuning from an NTK perspective. These analyses, however, do not capture the learning of new, task-specific features during fine-tuning, and instead focus primarily on adapting the pre-trained representation to the downstream task. With the exception of \citet{shachaf2021theoretical}, who consider the simultaneous fine-tuning of multiple layers under restrictive assumptions on both the pre-training and fine-tuning tasks, these approaches essentially reduce fine-tuning to a linear adaptation around the pre-trained weights. They therefore cannot capture the rich feature-learning dynamics that may arise during fine-tuning. 
We instead study fine-tuning of a two-layer ReLU network in the feature-learning regime. Our downstream task requires the pre-trained model not only to reuse features acquired during pre-training, but also to learn a genuinely new, task-specific feature. This setting allows us to show that fine-tuning from pre-trained weights can learn a feature that would not be learned from random initialization.

More recently, \citet{lippl2024inductive,anguita2026a} have studied how the scale of the pre-trained initialization affects the optimization dynamics of fine-tuning and their implicit bias. These works focus primarily on optimization and do not provide a statistical analysis translating the resulting dynamics into improved generalization guarantees on the fine-tuning task. 
\citet{jones-mccormick2025provable} establish a statistical advantage of fine-tuning over single-task learning in a single-index model. In their setting, however, the feature relevant to the fine-tuning task can already be recovered from the unsupervised pre-training task via PCA.  
By contrast, our fine-tuning procedure must learn a feature that is  absent from the pre-trained representation, while  preserving the useful features acquired during pre-training.

\paragraph{Parameter efficient fine-tuning.} Given the enormous number of parameters in foundation models, parameter-efficient fine-tuning methods such as LoRA \citep{hu2022lora} adapt only a small subset of the model parameters. We view these methods as a distinct class of algorithms, with different expressivity and applications from methods that update all model weights. They have also attracted increasing theoretical interest \citep[see e.g.,][]{jang2024lora,dayi2024gradient,zhang2025lora,kim2025lora}. This line of work is complementary to ours, although our setting is inspired by \citet{dayi2024gradient} and our fine-tuning task could be solved by LoRA, as explained in Section~\ref{sec:twotimescale}.

\paragraph{Learning ReLU neurons.} Our theoretical setting can be viewed as an extension of the widely studied problem of learning a single ReLU neuron \citep{soltanolkotabi2017learning,yehudai2020learning,xu2023over}. While this literature considers learning a single neuron from scratch, we study the learning of a new ReLU neuron in the presence of $m$ neurons already learned during pre-training. 
Extending our setting to multiple new features would be a natural and important direction, but is outside our scope. Learning multiple neurons remains largely unsolved in fully general settings \citep{zhong2017recovery,zhang2019learning}. 
Interestingly, this literature typically trains only the hidden layer. Our analysis instead crucially relies on training both layers at different learning rates. This two layer structure yields the limiting dynamics in Equation~\eqref{eq:oneneurondyn}, which are key to learning the new feature while preserving the pre-trained ones.

\paragraph{Two-timescale optimization.} Two-timescale optimization, in which different layers are trained with different learning rates, has been studied in several single-task settings \citep{marion2023leveraging,berthier2025learning,bietti2025learning,barboni2025ultra},  through the limiting dynamics obtained when the outer weights are trained infinitely faster than the inner ones. In these works, the two-timescale regime is primarily motivated by analytical tractability. Differential learning rates are also widely used in practice for fine-tuning, as they help preserve pre-trained features and mitigate feature distortion \citep{howard2018universal}. Thus, the two-timescale regime in our work is motivated both by its relevance to practical fine-tuning and by the tractable training dynamics it provides.

\subsection{Outline.} 

We introduce our fine-tuning setting and derive the limiting dynamics of interest in Section~\ref{sec:setting}. Section~\ref{sec:twotimescale} presents our main results, showing these dynamics recover the optimal weights both at the population level and from $n\gtrsim d$ samples. Section~\ref{sec:discussion} discusses the results and their relation to the existing literature. Finally, Section~\ref{sec:expe} illustrates our theoretical findings through a toy numerical experiment.

\section{Setting and limit dynamics}\label{sec:setting}

We consider a teacher-student setting in which data $(x_i, y_i)_{i\in[n]}$ are generated i.i.d. as follows:
\begin{equation*}
 \textstyle x_i \sim \cN(0,\id)\qquad\text{and}\qquad
y_i =\sum_{i=1}^{m+1} \sigma(x^\top w_i^\star),
\end{equation*}
where $\sigma(\cdot)=\max(0,\cdot)$ is the ReLU activation. The labels are generated by a two-layer teacher network of width $m+1$, whose output weights are all equal to one. 
On these data, we train a two-layer ReLU network parametrized by $\theta = (a,W)\in\R^{m+1}\times\R^{(m+1)\times d}$ by minimizing the empirical square loss
\begin{equation*}
 \textstyle\cL_n(\theta) \coloneqq  \frac{1}{n}\sum_{i=1}^n\left( f_\theta(x_i) - y_i\right)^2 \qquad \text{where}\qquad f_\theta(x) = \sum_{i=1}^{m+1} a_i \sigma(x^\top w_i).
\end{equation*}
Our goal is to understand fine-tuning from a pre-trained initialization that has already learned $m$ of the $m+1$ teacher features. Specifically, we assume that pre-training has recovered the features $w_1^\star,\ldots,w_m^\star$, whereas the remaining feature
$w_{m+1}^\star$ was absent from the pre-training task and must therefore be learned during fine-tuning. 
Motivated by this picture, we initialize the hidden weights as follows, for a small $\alpha>0$,
\begin{equation}\label{eq:init}
w_i(0) = w_i^\star \text{ for any }i \in [m],\qquad 
w_{m+1}(0) = \alpha \breve{w} \ \text{ with }\ \breve{w}\sim \unif(\bS_{d-1}).
\end{equation}
This initialization thus idealizes finite-time pre-training on data generated only by the first $m$ teacher features; it is discussed further in Section~\ref{sec:discussion}. 

\subsection{Two-timescale dynamics}

Fine-tuning is typically performed by gradient-based optimization of the empirical loss $\cL_n(\theta)$.
For the sake of the analysis, we model this optimization by gradient flow, which describes its continuous-time behavior in the small-step-size limit:
\ificlr\vspace{-0.5em}\fi
\begin{equation*}
\textstyle\dot\theta(t) \in - \partial \cL_n(\theta(t)),
\end{equation*}
where $\partial$ stands for the Clarke subdifferential, which accounts for the non-differentiability of ReLU.
In the regime studied in this work, updating all layers at the same rate can substantially modify the pre-trained features and thereby degrade their generalization properties (see Section~\ref{sec:discussion}). A natural alternative, considered in several previous works, is to use a larger learning rate for the output layer than for the hidden layer. This allows the output coefficients to adapt rapidly to the downstream task while limiting the displacement of the pre-trained features. 
In the limit where the ratio between the output- and hidden-layer learning rates tends to infinity, the dynamics converge to a two-timescale regime in which the output weights are instantaneously fitted to their optimal values, while the hidden weights evolve on a much slower timescale.
We focus on this theoretical limit, which corresponds to the following training dynamics:
\begin{equation}\label{eq:twotimescale}
\dot W(t) \in -\partial_{W}\cL_n(a_n(W(t)), W(t)) \qquad \text{where} \qquad a_n(W) \coloneqq \argmin_{a\in\R^{m+1}}\cL_n(a,W).
\end{equation}
The $\argmin$ above may not be unique, making the dynamics of Equation~\eqref{eq:twotimescale} ill-defined. However, with our choice of initialization and sufficiently many samples, the $\argmin$ is unique along the whole trajectory, making the dynamics well defined. In particular, the trajectory remains within a set $\cT\subseteq \R^{(m+1)\times d}$ that satisfies Assumption~\ref{ass:conditioning} below (see Theorem~\ref{thm:mainpop} for details).
\begin{assumption}[Conditioning of the empirical feature matrix in $\cT$]\label{ass:conditioning}
The set $\cT$ is compact and for any $W\in\cT$, $\lambda_{\min}\left(\Sigma(W,\bX)\Sigma(W,\bX)^\top \right) > 0$, where $\Sigma(W,\bX)\in\R^{(m+1)\times n}$ is the empirical feature matrix, given by $\Sigma(W,\bX)_{i k}= \sigma(w_i^\top x_k)$ and $\lambda_{\min}$ denotes the smallest eigenvalue function.
\end{assumption}

\subsection{Separated neuron dynamics}

Studying the dynamics of Equation~\eqref{eq:twotimescale} remains extremely challenging in full generality. We therefore exploit the specific structure of the pre-trained initialization in Equation~\eqref{eq:init}. Importantly, the ReLU activation is homogeneous, meaning that $\sigma(\lambda z) = \lambda \sigma(z)$ for any $\lambda\geq 0$. As a first consequence, the norm of each neuron $w_i$ remains constant throughout training.
\begin{lemma}\label{lemma:constantnorm}
Let $W\in\R^{(m+1)\times d}$ follow Equation~\eqref{eq:twotimescale}, then for any $i\in[m+1]$, $\|w_i(t)\| = \|w_i(0)\|$.
\end{lemma}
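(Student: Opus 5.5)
The plan is to show that $\frac{d}{dt}\|w_i(t)\|^2 = 2\langle w_i(t), \dot w_i(t)\rangle = 0$ for almost every $t$. The key ingredient is an Euler-type identity coming from the positive homogeneity of the ReLU, combined with the first-order optimality of the output weights $a_n(W)$.

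\textbf{Euler identity.} For every $z$, every element $g\in\partial\sigma(z)$ satisfies $g z = \sigma(z)$. Indeed, $g=1$ when $z>0$, $g=0$ when $z<0$, and at $z=0$ both sides vanish whatever $g\in[0,1]$ is.

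\textbf{Structure of the hidden-weight subgradient.} The loss $\cL_n$ depends on $w_i$ only through the terms $a_i\sigma(x_k^\top w_i)$. By the chain rule for Clarke subdifferentials, which here reduces to a product structure since each $\sigma(x_k^\top w_i)$ depends on a single neuron, any element of $\partial_{w_i}\cL_n(a,W)$ can be written as
\begin{equation*}
\textstyle \frac{2}{n}\sum_{k=1}^n \big(f_\theta(x_k)-y_k\big)\, a_i\, g_{ik}\, x_k, \qquad g_{ik}\in\partial\sigma(x_k^\top w_i).
\end{equation*}
To make this rigorous, I would note that the inclusion in this form is justified by the Clarke calculus for compositions with a linear map and sums of locally Lipschitz functions. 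A possible pitfall is that the sum rule gives only an inclusion rather than an equality, but an inclusion is all that is needed here.

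\textbf{Vanishing of the inner product.} Taking the inner product of the expression above with $w_i$ and applying the Euler identity gives
\begin{equation*}
\textstyle \langle w_i, \dot w_i\rangle = -\frac{2}{n}\sum_{k}\big(f_\theta(x_k)-y_k\big)\, a_i\,\sigma(x_k^\top w_i) = -a_i\,\partial_{a_i}\cL_n(a,W).
\end{equation*}
The loss is smooth and convex in $a$, and $a=a_n(W)$ minimizes it, so $\partial_{a_i}\cL_n(a_n(W),W)=0$. Hence $\langle w_i,\dot w_i\rangle=0$.

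\textbf{Conclusion and main obstacle.} The main technical point is regularity. Solutions of the differential inclusion are absolutely continuous, so $t\mapsto\|w_i(t)\|^2$ is absolutely continuous, and its derivative equals $2\langle w_i,\dot w_i\rangle=0$ almost everywhere. Integrating then gives $\|w_i(t)\|=\|w_i(0)\|$. This requires the argmin defining $a_n(W)$ to exist along the trajectory. I would take this from the well-posedness of the dynamics discussed above (Assumption~\ref{ass:conditioning}), or note that any minimizer works because only first-order optimality in $a$ was used.
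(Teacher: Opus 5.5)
Your proof is correct and follows the paper's argument. The paper also differentiates $\|w_i(t)\|^2$ and uses the homogeneity of the ReLU to rewrite $\langle w_i,\dot w_i\rangle$ as $-a_n(W)_i$ times the $a_i$-partial derivative of $\cL_n$, which vanishes by the first-order optimality condition $\nabla_a\cL_n(a_n(W),W)=0$. Your version makes explicit two points the paper leaves implicit: the identity $gz=\sigma(z)$ for every Clarke subgradient $g\in\partial\sigma(z)$, and the absolute continuity needed to integrate the almost-everywhere derivative.
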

In consequence, one only has to track the normalized neurons $\bw_i \coloneqq \frac{w_i}{\|w_i\|}$ during the dynamics. Using the homogeneity of the ReLU activation, one can easily derive the dynamics followed by the normalized neurons. If we denote by $\bW$ the matrix whose $i$-th row is given by $\bw_i$ and $W$ follows the differential inclusion of Equation~\eqref{eq:twotimescale}, then its row-normalized version satisfies the following differential inclusion for almost any $t\geq0$:
\begin{equation}\label{eq:twotimescalenormalized}
\textstyle\dot{\bw}_i(t) \in -\frac{1}{\|w_i(0)\|^2}\partial_{\bw_i}\cL_n(a_n(\bW(t)), \bW(t)) \qquad \text{for all }i\in[m+1].
\end{equation}
Notably, the evolution rate of neuron $i$ in Equation~\eqref{eq:twotimescalenormalized} scales as $\|w_i(0)\|^{-2}$. When initialized by Equation~\eqref{eq:init}, as $\alpha$ goes to $0$, the last neuron---that does not correspond to any pre-trained feature---therefore evolves on an infinitely faster timescale than the other neurons. In the limit, only this neuron evolves, while the other neurons, inherited from the pre-training procedure, remain fixed. This limiting dynamics is formalized in Proposition~\ref{prop:limitdyn} below.
\begin{proposition}\label{prop:limitdyn}
Consider $T\in\R_+$, any positive sequence $(\alpha_k)_{k\in\N}$ such that $\alpha_k \overset{k\to\infty}{\longrightarrow}0$, and corresponding solutions $\bW^{\alpha_k}$ of Equation~\eqref{eq:twotimescalenormalized}, where $W^{\alpha_k}(0)$ is initialized by Equation~\eqref{eq:init} with scale~$\alpha_k$.\\
Suppose that for $k$ large enough, the trajectory $\left(\bW^{\alpha_k}(\alpha_k^2 t)\right)_{t\in[0,T]}$ is included within some set $\cT$, independent of $k$, satisfying Assumption~\ref{ass:conditioning}. Then, one can extract a subsequence $\alpha_{\varphi(k)}$ such that, as $k\to \infty$, $(\bW(\alpha_{\varphi(k)}^2 t))_{t\geq 0}$ converges uniformly towards $\bW^\circ(t)$ on  $[0,T]$, where $\bW^\circ$ is a solution of the following differential inclusion for almost any $t\geq0$, with $\bw_{m+1}^\circ(0)=\breve{w}$,
\begin{equation}\label{eq:oneneurondyn}
\begin{cases}
\bw_i^\circ(t) = \frac{w_i^\star}{\|w_i^\star\|} \qquad \text{ for any }i\in[m],\\
\dot{\bw}_{m+1}^\circ(t) \in -\partial_{\bw_{m+1}}\cL_n(a_n(\bW^\circ(t)), \bW^\circ(t)).
\end{cases}
\end{equation}
\end{proposition}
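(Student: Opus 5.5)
Let $\bV^k(t) \coloneqq \bW^{\alpha_k}(\alpha_k^2 t)$ for $t\in[0,T]$. The argument is a standard Arzelà--Ascoli compactness argument followed by passage to the limit in the differential inclusion, as for differential inclusions with upper semicontinuous right-hand sides.

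\textbf{Rescaled dynamics.} By Lemma~\ref{lemma:constantnorm} and Equation~\eqref{eq:twotimescalenormalized}, for almost every $t$,
\begin{equation*}
\dot{\bv}^k_i(t) \in -\frac{\alpha_k^2}{\|w_i^\star\|^2}\,\partial_{\bw_i}\cL_n\big(a_n(\bV^k(t)),\bV^k(t)\big)\quad (i\in[m]),
\qquad
\dot{\bv}^k_{m+1}(t) \in -\partial_{\bw_{m+1}}\cL_n\big(a_n(\bV^k(t)),\bV^k(t)\big).
\end{equation*}

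\textbf{Uniform bounds and compactness.} On $\cT$, Assumption~\ref{ass:conditioning} gives $\lambda_{\min}(\Sigma\Sigma^\top)\ge c>0$, using compactness and continuity of $W\mapsto\Sigma(W,\bX)$. Hence $a_n(W)=(\Sigma\Sigma^\top)^{-1}\Sigma \mathbf{y}$ is unique, continuous on $\cT$, and uniformly bounded there. The Clarke subdifferential in $W$ of $\cL_n(a,W)$ is built from $a_i(f-y)\,\mathbf{1}\{w_i^\top x_k\ge0\}x_k$-type terms with indicators replaced by $[0,1]$ at the kinks. It is therefore bounded by a constant $C$ depending only on $\cT$, $\bX$ and $\mathbf y$. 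So all $\bV^k$ are $C$-Lipschitz on $[0,T]$ for $k$ large, and they take values in the compact set $\cT$. Arzelà--Ascoli yields a subsequence $\varphi(k)$ converging uniformly to a Lipschitz limit $\bW^\circ$. Moreover, for $i\le m$, $\|\bv_i^k(t)-\bv_i^k(0)\|\le C\alpha_k^2 T/\|w_i^\star\|^2\to0$, so $\bw_i^\circ(t)=w_i^\star/\|w_i^\star\|$. Also $\bw_{m+1}^\circ(0)=\breve w$ because $\bv^k_{m+1}(0)=\breve w$ for all $k$.

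\textbf{Passing to the limit in the inclusion (main obstacle).} We need $\dot\bw^\circ_{m+1}(t)\in -\partial_{\bw_{m+1}}\cL_n(a_n(\bW^\circ(t)),\bW^\circ(t))$ almost everywhere. The derivatives $\dot\bv^k_{m+1}$ converge only weakly (weak-$*$ in $L^\infty$). So we use the convergence theorem for differential inclusions (Aubin--Cellina):
\begin{enumerate}
\item Define the set-valued map $F(W)\coloneqq-\partial_{\bw_{m+1}}\cL_n(a_n(W),W)$. Show $F$ has nonempty compact convex values and a closed graph (upper semicontinuity) on $\cT$.
\item For upper semicontinuity, $a_n$ is continuous on $\cT$, and the map $(a,W)\mapsto\partial_W\cL_n(a,W)$ has a closed graph. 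The latter holds because each summand is a composition of the smooth square loss with ReLU. Its subdifferential is the convex hull of products of continuous terms with $\mathbf{1}\{w^\top x_k>0\}$-limits, and these indicator sets are upper semicontinuous in $w$.
\item Here one must check that the partial Clarke subdifferential in $\bw_{m+1}$ at the optimal $a$ is the projection of the subdifferential of the composite map $W\mapsto\cL_n(a_n(W),W)$ used in Equation~\eqref{eq:twotimescale}. By an envelope-theorem argument, since $\partial_a\cL_n=0$ at $a_n(W)$, we may take the explicit convex-hull formula as the definition, which is stable under limits.
\item With upper semicontinuity, convex values and uniform convergence $\bV^{\varphi(k)}\to\bW^\circ$, fix $\varepsilon>0$. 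For $k$ large, $\dot\bv^{\varphi(k)}_{m+1}(t)$ lies in the $\varepsilon$-neighbourhood of $F(\bW^\circ(t))$. By Mazur's lemma, weak limits of derivatives stay in the closed convex set $\overline{\mathrm{co}}\,F_\varepsilon(\bW^\circ(t))$ for almost every $t$. Letting $\varepsilon\to0$ gives the inclusion, completing the proof.
\end{enumerate}
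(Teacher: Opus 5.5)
Your proposal is correct and follows essentially the same route as the paper: you bound $a_n$ and the Clarke subdifferential on $\cT$, apply Arzel\`a--Ascoli, and use the $\alpha_k^2$ factor to freeze the first $m$ neurons. You then combine weak-$*$ convergence of the $(m+1)$-th neuron's derivatives with Mazur's lemma and with the upper semicontinuity and closed convex values of the Clarke subdifferential, which is exactly the convergence-theorem argument the paper writes out via Banach--Alaoglu.

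Your Step~3 is unnecessary, and the envelope argument in it is misplaced. Equation~\eqref{eq:twotimescale} uses the partial subdifferential evaluated at $a=a_n(W)$, not the subdifferential of the profiled map $W\mapsto\cL_n(a_n(W),W)$. So continuity of $a_n$ together with the joint closed-graph property from your Step~2 already gives the upper semicontinuity you need, with no envelope argument and no redefinition.
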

Proposition~\ref{prop:limitdyn} implies that, as $\alpha\to 0$, the trajectories $\bW^\alpha(\alpha^2 \cdot)$ converge uniformly, on any compact of $\R_+$, towards the set of solutions of the limit differential inclusion \eqref{eq:oneneurondyn}. 
Importantly, the convergence is only to a set of trajectories, since Equation~\eqref{eq:oneneurondyn} may admit multiple solutions due to the non-differentiability of the loss. Characterizing these different solutions is the focus of Section~\ref{sec:twotimescale}.

Proposition~\ref{prop:limitdyn} also requires the trajectories to remain in a set $\cT$ satisfying Assumption~\ref{ass:conditioning}. In Section~\ref{sec:twotimescale}, we show that, with high probability, all trajectories of the limiting process remain in such a set (see Theorem~\ref{thm:mainempirical}). %
Lemma~\ref{lemma:conditioning}, given in Appendix~\ref{app:mainproof}, then implies that, for $\alpha$ sufficiently small, the trajectories $\bW^\alpha(\alpha^2 \cdot)$ also remain in a set $\cT$ satisfying Assumption~\ref{ass:conditioning}.

\section{Two-timescale small initialization dynamics}\label{sec:twotimescale}

The two-timescale dynamics in Equation~\eqref{eq:twotimescale} are difficult to analyze directly. In the limit $\alpha\to 0$, however, they reduce to the simpler dynamics in Equation~\eqref{eq:oneneurondyn}, which we study throughout the remainder of the paper. In this limiting regime, the pre-trained features $w_i^\circ$, $i\in[m]$, remain fixed, and only the new feature evolves. The dynamics are nevertheless nontrivial because the corresponding output weights $a_i^\circ$ of the pre-trained features continue to evolve and must be carefully controlled. In particular, as shown in the proof of Lemma~\ref{lemma:populationloss}, these output weights are not initially equal to their optimal value, namely $1$, and reach this value only at convergence. Early in training, they instead compensate for the fact that the $(m+1)$-th feature has not yet been learned. This transient compensation also explains why the \textit{Linear Probe then Fine-Tune} method (see Section~\ref{sec:discussion}) does not achieve zero test loss in our setting, as confirmed experimentally in Section~\ref{sec:expe}.

For the same reason,  the dynamics are more complex than simply freezing the first $m$  neurons---both their inner and output weights---and training only the $(m+1)$-th neuron. Such a procedure would be closer to LoRA fine-tuning and, in our setting, would reduce to learning a single ReLU neuron.
To keep the analysis tractable, we make the following assumption on the teacher features.
\begin{assumption}\label{ass:orthofeatures}
The teacher features $(w_i^\star)_{i\in[m+1]}$ form an orthonormal system of $\R^d$.
\end{assumption}
Assumption~\ref{ass:orthofeatures} simplifies the analysis by yielding a closed-form expression for $a_n(W)$, at least in the population limit studied in Section~\ref{sec:population}. Orthogonality of features is a standard assumption in the literature on learning from multi-neuron teacher networks \citep{safran2018spurious,simsek2023should,dayi2024gradient}. We discuss the dependence of our results on this assumption in Section~\ref{sec:discussion}.

\subsection{Warm up: gradient flow on population loss}\label{sec:population}

Before studying the training dynamics of Equation~\eqref{eq:oneneurondyn}, we first focus on the population loss case, where fine-tuning is done with access to an infinite number of data points. More precisely, we here consider the population loss dynamics given by the following ODE\footnote{In the population case, the loss becomes differentiable.}, for any $t\geq 0$:
\begin{equation}\label{eq:ODEpop}
w_i(t) = w_i^\star \text{ for any }i\in[m] ,\qquad \text{and}\qquad
\dot{w}_{m+1}(t) \in -\nabla_{w_{m+1}}\cL(a(W(t)), W(t));
\end{equation}
where $w_{m+1}(0)=\breve{w}$, $a(W) \coloneqq\argmin_{a\in \R^{m+1}} \cL(a, W)$ and
\begin{equation*}
\cL(a,W)  \coloneqq\textstyle \bE_{x\sim\cN(0,\id)}\left[\left( \sum_{i=1}^{m+1} a_i(t) \sigma(x^\top w_i(t)) - \sum_{i=1}^{m+1} \sigma(x^\top w_i^\star)\right)^2\right].
\end{equation*}
Although this simpler setting avoids any statistical consideration, it already provides a good grasp of what is happening from an optimization point of view. 
We show in this section that $w_{m+1}$ converges to $w_{m+1}^\star$ in the population loss dynamics. For that, we merely need to track the angular deviation between these two vectors.  
In the population case, explicit computations can be done, relying on the arc-cosine kernel formula \citep{cho2009kernel}, for any $u,v\in\bS_{d-1}$:
\begin{equation*}\label{eq:arccosine}
\bE_{x\sim\cN(0,\id)}\left[ \sigma(x^\top u) \sigma(x^\top v)\right] = \frac{1}{2\pi}\left(\sqrt{1-(u^\top v)^2} + (\pi-\arccos(u^\top v))u^\top v\right).
\end{equation*}
In particular, for some functions $H$ and $D$ depending on this formula and defined precisely by Equation~\eqref{eq:HDpop} in Appendix~\ref{app:popdynamics}, one can express precisely the loss and its gradient.

\begin{restatable}{lemma}{poplosslemma}
\label{lemma:populationloss}
Let Assumption~\ref{ass:orthofeatures} hold and $w_{m+1}\in\bS_{d-1}$ such that $D(w_{m+1})>0$. Then, noting $W=[w_1^{\star\,\top}, \ldots, w_m^{\star\,\top}, w_{m+1}^{\top}]$, $a(W)$ is uniquely defined and satisfies
$a(W)_{m+1} = \frac{H(w_{m+1})}{D(w_{m+1})}$.\\
Moreover, the loss and its subdifferential satisfy
\begin{gather*}
\cL(a(W),W) = D(w_{m+1}^\star)-\frac{H(w_{m+1})^2}{D(w_{m+1})},\\
\nabla_{w_{m+1}}\cL(a(W),W) = -\frac{H(w_{m+1})}{D(w_{m+1})}  \left(\id - w_{m+1}w_{m+1}^\top \right)\Psi(w_{m+1}),
\end{gather*}
where $\Psi(w) \coloneqq 2\nabla H(w) - \frac{H(w)}{D(w)}\nabla D(w)$.
\end{restatable}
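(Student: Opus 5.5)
The plan is to solve the least-squares problem in $a$ in closed form and then use the envelope theorem. Write $k(u,v)\coloneqq \bE[\sigma(x^\top u)\sigma(x^\top v)]$ for the arc-cosine kernel and set $\kappa_0 \coloneqq k(u,u)=\tfrac12$ and $\kappa_1 \coloneqq k(u,v)=\tfrac{1}{2\pi}$ for orthogonal unit $u,v$. The population loss is a quadratic in $a$,
\begin{equation*}
\cL(a,W)=a^\top G a - 2 a^\top b + c ,
\end{equation*}
where $G_{ij}=k(w_i,w_j)$, $b_i=\sum_{j=1}^{m+1} k(w_i,w_j^\star)$, and $c=\bE[y^2]$. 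Under Assumption~\ref{ass:orthofeatures}, the upper-left $m\times m$ block of $G$ is $A\coloneqq(\kappa_0-\kappa_1)\id+\kappa_1 \mathbf{1}\mathbf{1}^\top$, which is explicitly invertible by Sherman--Morrison. The last column of $G$ has entries $g_i=k(w_i^\star,w_{m+1})$ for $i\in[m]$ and $G_{m+1,m+1}=\kappa_0$.

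I then eliminate $a_{1:m}$ by a Schur complement. For fixed $a_{m+1}$, the optimal first block is $a_{1:m}=A^{-1}(b_{1:m}-a_{m+1}g)$. Substituting this back leaves a one-dimensional quadratic in $a_{m+1}$ whose leading coefficient is the Schur complement $\kappa_0-g^\top A^{-1}g$. I expect this to be exactly $D(w_{m+1})$ as defined in \eqref{eq:HDpop}, and the linear coefficient to be $H(w_{m+1})=b_{m+1}-g^\top A^{-1}b_{1:m}$, i.e.\ the residual correlation of $\sigma(x^\top w_{m+1})$ with the target after projecting out the pre-trained features. I will check these identities against the definitions in Appendix~\ref{app:popdynamics}, using $b_{1:m}=A\mathbf{1}+g^\star$ with $g^\star_i=k(w_i^\star,w_{m+1}^\star)=\kappa_1$. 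Since $G\succeq0$, the condition $D>0$ means the Schur complement is positive, and because $A\succ0$ this makes $G\succ0$. Hence the minimizer is unique and $a_{m+1}=H/D$. Minimizing out gives $\cL=c-b_{1:m}^\top A^{-1}b_{1:m}-H^2/D$. Evaluating at $w_{m+1}=w_{m+1}^\star$, where the loss must be zero, identifies the constant $c-b_{1:m}^\top A^{-1}b_{1:m}$ with $H(w_{m+1}^\star)^2/D(w_{m+1}^\star)$. This should reduce to $D(w_{m+1}^\star)$ because $H(w^\star_{m+1})=D(w^\star_{m+1})$ there, which is consistent with the optimal output being $1$.

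For the gradient, I use the envelope/Danskin argument. Since $a(W)$ is unique and smooth near $w_{m+1}$ (as $G\succ0$), we have $\nabla_{w_{m+1}}\cL(a(W),W)=\nabla_{w_{m+1}}\bigl[-H^2/D\bigr]$ as a function on the sphere, viewed through $H$ and $D$ extended homogeneously or as given by \eqref{eq:HDpop}. This gives
\begin{equation*}
\nabla\Bigl(-\frac{H^2}{D}\Bigr)=-\frac{H}{D}\Bigl(2\nabla H-\frac{H}{D}\nabla D\Bigr)=-\frac{H}{D}\Psi .
\end{equation*}
The factor $\id-w_{m+1}w_{m+1}^\top$ comes from homogeneity. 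The function $w\mapsto \cL(a(W),W)$, with $w_{m+1}$ allowed off the sphere, is invariant under positive rescaling of $w_{m+1}$, because $a_{m+1}$ absorbs the scale. Euler's relation then forces the Euclidean gradient to be orthogonal to $w_{m+1}$. Equivalently, if $H$ and $D$ are defined only through $u^\top v$ on the sphere, the projection is needed to pass to the Riemannian gradient matching \eqref{eq:twotimescalenormalized}. Differentiability of $\cL$ itself holds because the arc-cosine kernel is $C^1$ away from $u=\pm v$, and the relevant arguments are orthogonal unit vectors or $w_{m+1}$ against fixed $w_i^\star$, so I will handle the boundary case $w_{m+1}=\pm w_i^\star$ by noting that $k(\cdot,v)$ is in fact $C^1$ on the sphere, since the derivative of $\sqrt{1-s^2}+(\pi-\arccos s)s$ is $\pi-\arccos s$, which is continuous.

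The main obstacle is the bookkeeping: matching the Schur-complement expressions to the paper's exact definitions of $H$ and $D$ in \eqref{eq:HDpop}, and computing the constant term to confirm it equals $D(w^\star_{m+1})$. I will do the Sherman--Morrison computation explicitly, using $A^{-1}=\frac{1}{\kappa_0-\kappa_1}\bigl(\id-\frac{\kappa_1}{\kappa_0+(m-1)\kappa_1}\mathbf{1}\mathbf{1}^\top\bigr)$, and verify it against the formula.
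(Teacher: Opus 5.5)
Your proposal is correct and follows essentially the paper's route. The Schur-complement elimination of $a_{1:m}$ is the matrix form of the paper's projection $\Pi$ onto $\{\phi_{w_1^\star},\ldots,\phi_{w_m^\star}\}^\perp$, which yields $\tilde D=D$ and $\tilde H=H$ via Sherman--Morrison. This is followed by the envelope theorem and the orthogonality of the gradient to $w_{m+1}$; your Euler/$0$-homogeneity argument stands in for the paper's appeal to the first-order condition in $a$ from Lemma~\ref{lemma:constantnorm}.

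The one step to state cleanly is the gradient identity. The envelope identity holds exactly for the homogeneous extensions $\tilde H,\tilde D$. The lemma's $\Psi$, built from gradients of the formulas in \eqref{eq:HDpop}, is recovered only after applying $\id-w_{m+1}w_{m+1}^\top$, because $\tilde H=H$ and $\tilde D=D$ on the sphere only forces their tangential gradients to agree.
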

Lemma~\ref{lemma:populationloss} then provides an intricate closed-form expression of the training dynamics, in terms of the function $H$ and $D$. From there, a careful analysis can be done to show convergence of the learned parameter towards the optimal choice, as given by Theorem~\ref{thm:mainpop} below. 
\begin{theorem}\label{thm:mainpop}
Let Assumption~\ref{ass:orthofeatures} hold and initialize as $w_{m+1}(0)\sim \unif(\bS_{d-1})$. There exist positive universal constants $m_0$, $c$ and $C$ such that for any $\delta\in(0,1/2)$, if $m\geq m_0$ and $d\geq Cm^2\ln(1/\delta)$, then with probability at least $1-\delta$ over the initialization, the solution of Equation~\eqref{eq:ODEpop} satisfies:
\begin{equation*}
w_{m+1}(t)^\top w_{m+1}^\star \geq 1- e^{-c(t-Cm)} \qquad \text{for any }t\geq Cm.
\end{equation*}
\end{theorem}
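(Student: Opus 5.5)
The plan is to reduce the dynamics of Equation~\eqref{eq:ODEpop} to a low-dimensional ODE. Write $w=w_{m+1}$ and decompose it as $u_i = w^\top w_i^\star$ for $i\in[m+1]$, plus an orthogonal remainder of norm $r=\sqrt{1-\sum_i u_i^2}$. By rotational invariance of the Gaussian and Assumption~\ref{ass:orthofeatures}, $H$ and $D$ (via the arc-cosine formula) depend on $w$ only through $(u_1,\dots,u_{m+1})$. They are symmetric in $u_1,\dots,u_m$, so $\Psi(w)$ lies in $\mathrm{span}(w_1^\star,\dots,w_{m+1}^\star)$.

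By Lemma~\ref{lemma:populationloss}, the dynamics read
\[
\dot u_j = \tfrac{H}{D}\bigl(\Psi_j - u_j\, w^\top\Psi\bigr).
\]
The tracked quantities are $s=u_{m+1}$ (the target alignment) and the pre-trained alignments $u_1,\dots,u_m$. First I would establish the initialization statistics. With probability $1-\delta$, $|u_i(0)|\lesssim \sqrt{\ln(m/\delta)/d}$ for all $i$, and we choose the sign so that $H/D>0$ initially; the condition $D>0$ holds since $w\notin\mathrm{span}(w_1^\star,\ldots,w_m^\star)$. The assumption $d\ge Cm^2\ln(1/\delta)$ makes $\sum_{i\le m}|u_i(0)|\lesssim m\sqrt{\ln(1/\delta)/d}\le 1/\sqrt{C}$ small. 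This is what keeps the compensating output weights and the cross terms in $H,D$ close to their ``orthogonal'' values. Concretely, I would expand $H$ and $D$ at $u\approx 0$. Since $\kappa(t)=\frac{1}{2\pi}(\sqrt{1-t^2}+(\pi-\arccos t)t)=\frac{1}{2\pi}+\frac{t}{4}+O(t^2)$, projecting out the first $m$ features yields an effective kernel in which the constant $1/(2\pi)$ contributions of all $m$ features are absorbed by $a_1,\dots,a_m$. I expect this to reduce $H/D$ and $\Psi$ to the single-neuron-like expressions $\Psi_{m+1}\approx c_1\kappa'(s)>0$ and $\Psi_j \approx c_2 + O(\text{small})$ for $j\le m$, with errors of order $m\cdot\max_{i\le m}|u_i|$.

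Phase one (time $\lesssim Cm$) then shows that $s$ grows from $O(1/\sqrt d)$ to a constant while the $u_j$, $j\le m$, stay small. The escape is exponential, $\dot s\gtrsim \kappa'$-type positive drift times $(1-s^2)$ from the projection, so reaching constant alignment costs at most about $\ln d$ time. The factor $Cm$ probably comes instead from the prefactor $H/D$, which at initialization is only of order $1/m$: the residual correlation $H$ is diluted among the $m$ compensating directions, making the drift on $s$ roughly $\Theta(s/m)$ or a constant over $m$. I would control $\sum_{j\le m}u_j^2$ by a Grönwall/Lyapunov argument, showing its growth is dominated by that of $s$, e.g.\ by monitoring the ratio $\sum_{j\le m}u_j^2/s^2$ or showing $\dot u_j\le -c u_j+\text{small}$. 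Phase two shows local exponential convergence near $w^\star_{m+1}$. There the loss $D(w_{m+1}^\star)-H^2/D$ is strongly convex along the sphere around $w^\star_{m+1}$ in the reduced coordinates, giving $\frac{d}{dt}(1-s)\le -c(1-s)$, hence $1-s(t)\le e^{-c(t-Cm)}$.

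The main obstacle is phase one: obtaining uniform-in-$m$ expansions of $H$, $D$, $\Psi$ precise enough to prove that the $m$ pre-trained alignments cannot collectively drift. Their symmetric component $\bar u=\frac1m\sum_{j\le m}u_j$ is the dangerous direction, since spurious correlation there couples with $m$ neurons. I would handle it by writing the ODE for the pair $(s,\bar u)$ plus the spread $\sum_j(u_j-\bar u)^2$, exploiting the exact permutation symmetry, and then doing a phase-plane analysis of the two-dimensional system. The requirement $d\gtrsim m^2$ is used precisely to start $\bar u$ within the basin where it is driven toward $0$.
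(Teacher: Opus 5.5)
Your architecture (reduced coordinates $u_i=w^\top w_i^\star$, a slow escape phase with Gr\"onwall control of $P_m w$, then local linear convergence) resembles the paper's. However, the ingredient that closes the argument is missing, and your initialization step is not valid as written. You cannot ``choose the sign so that $H/D>0$'': $w_{m+1}(0)$ is uniform on the sphere, $a_{m+1}=H/D$ is fixed by the inner minimization over $a$, and ReLU is not odd. In fact $s(0)=w_{m+1}(0)^\top w_{m+1}^\star$ is negative with probability $1/2$. If $H(w_{m+1}(0))<0$, then, since the flow increases $H^2/D$, $H$ stays negative forever and $w_{m+1}(t)$ never comes close to $w_{m+1}^\star$, where $H=D>0$. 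What must be proved is $H(w_{m+1}(0))\ge c/m$. Since $H\approx\frac{\pi-1}{2\pi(m+\pi-1)}+\frac{s}{4}-\frac{1}{4(m+\pi-1)}\sum_{i\le m}u_i$, this holds once $s(0)\ge-\frac{1}{4\pi m}$ and $\|P_m w_{m+1}(0)\|\lesssim m^{-1/2}$. This is where $d\gtrsim m^2\ln(1/\delta)$ enters, not to start $\bar u$ in a basin where it is driven to $0$. You then need $H/D\gtrsim 1/m$ for all $t\ge 0$, which your plan never secures. The paper gets it from monotonicity of the loss $D(w_{m+1}^\star)-H^2/D$ along the flow: $H^2/D$ never drops below its initial value $\gtrsim 1/m^2$, so with $D\le 1/2$ and continuity, $H$ cannot change sign and $H/D\ge c/m$ throughout.

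Your escape heuristics also point the wrong way. A drift $\gtrsim 1-s^2$ involves no $\ln d$. A drift $\Theta(s/m)$ would fail whenever $s(0)<0$ and would take time of order $m\ln d$, incompatible with a shift $Cm$ for arbitrarily large $d$. The correct fact (Lemma~\ref{lemma:poprate}) is that on $\mathcal{T}(c)=\{\|P_m w\|\le c,\ s\ge -1/4\}$ one has $w_{m+1}^{\star\top}P_{w^\perp}\Psi(w)\ge c(1-s)$, negative $s$ included, because $\kappa'(-1/4)>\frac{2\pi}{\pi-1}\kappa'(0)^2$. ReLU has a nonzero linear component, so $s=0$ is not a saddle.

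Likewise, each $\Psi_j$ for $j\le m$ is of order $1/m$, not a constant $c_2$; near $u=0$, $\Psi_j\approx-\frac{1}{2(m+\pi-1)}(1-H/D)$. Errors must be tracked in $\ell_2$ through $\|G^{-1}\|_{\mathrm{op}}\le\frac{2\pi}{\pi-1}$, not as $m\max_i|u_i|$, which need not be small along the trajectory. This gives $\|P_mP_{w^\perp}\Psi\|\le C(m^{-1/2}+\|P_m w\|)$. That scale separation is the heart of the proof, and it makes your $(s,\bar u)$ phase-plane analysis unnecessary. Note that $\bar u$ is in fact pushed negative early on, not toward $0$.

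The argument then closes as follows. After the time change $\tau=\int_0^t H/D$, Gr\"onwall gives $1-s\le\frac54 e^{-c\tau}$ and $\|P_m w\|\le(\|P_m w(0)\|+m^{-1/2})e^{C\tau}$. Combining with $\|P_m w\|^2\le 2(1-s)$ yields $\|P_m w\|\le 2\min(e^{C\tau}/\sqrt{m},\,e^{-c\tau/2})\le 2m^{-c'}$, so the trajectory never leaves $\mathcal{T}(c)$. Since $H/D\ge c/m$, reaching $1-s\le\eta$ costs real time $O(m)$. After that, $H\ge\underline{D}/2$, which gives the linear rate.
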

\begin{proof}[Sketch of proof.]
The key ingredient is Lemma~\ref{lemma:poprate} in Appendix~\ref{app:keypop}. Let $P_m w$ denote the orthogonal projection of $w$ onto $\Span (w_1^\star,\ldots,w_m^\star)$. The lemma relies on the orthogonal features assumption and shows that, as long as $\|P_m w_{m+1}(t)\|$ remains below some constant threshold,
\begin{enumerate}[topsep=0pt, itemsep=-2pt]
\item $D(w_{m+1}(t))\gtrsim 1$;
\item $\frac{\df }{\df t}(w_{m+1}(t)^\top w_{m+1}^\star ) \gtrsim \frac{H(w_{m+1}(t))}{D(w_{m+1}(t))}\left(1-w_{m+1}(t)^\top w_{m+1}^\star\right)$;
\item $\frac{\df }{\df t}\|P_m w_{m+1}(t)\|\lesssim \frac{H(w_{m+1}(t))}{D(w_{m+1}(t))}\left(\frac{1}{\sqrt{m}}+\|P_m w_{m+1}(t)\|\right)$.
\end{enumerate}
These inequalities reveal the key mechanism behind the result. As long as
$\frac{H(w_{m+1}(t))}{D(w_{m+1}(t))}>0$, 
the component of $w_{m+1}(t)$ along $w_{m+1}^\star$ grows quickly towards $1$, while its component in the span of the previously learned directions ${w_1^\star,\ldots,w_m^\star}$ evolves at a much slower rate, of order $\frac{1}{\sqrt{m}}+\|P_m w_{m+1}(t)\|$. 
In particular, when $m$ is sufficiently large and  $\|P_m w_{m+1}(0)\|$ is sufficiently small, the undesirable component $P_m w_{m+1}(t)$ grows slowly enough to remain small throughout the trajectory. This ensures that the conditions of Lemma~\ref{lemma:poprate} continue to hold.

It remains to verify that the ratio $H/D$ is indeed positive along the trajectory. Lemma~\ref{lemma:poprate} also implies that, provided
\begin{equation}\label{eq:initcondition}
\textstyle\|P_m w_{m+1}(0)\|\lesssim 1/\sqrt{m}
\qquad\text{and}\qquad
w_{m+1}(0)^\top w_{m+1}^\star\gtrsim -1/m,
\end{equation}
we have
$\frac{H(w_{m+1}(0))^2}{D(w_{m+1}(0))}
\gtrsim \frac{1}{m^2}$. 
When $d\gtrsim m^2$, Equation~\eqref{eq:initcondition} holds with high probability. 
Moreover, by Lemma~\ref{lemma:populationloss}, the quantity $\frac{H(w_{m+1})^2}{D(w_{m+1})}$ is, up to an additive constant, the opposite of the population loss. Since the population loss is non-increasing along the gradient flow, it follows that $\frac{H(w_{m+1}(t))^2}{D(w_{m+1}(t))}$ remains bounded away from zero for all $t\geq 0$. 
Together with item~1 above, this implies that $\frac{H(w_{m+1}(t))}{D(w_{m+1}(t))}$  remains positive (by continuity) and bounded away from zero throughout the trajectory.

The preceding argument therefore applies for all $t\geq 0$: the component along the new direction $w_{m+1}^\star$ grows substantially faster than the component along the previously learned directions. A more precise analysis of the ratio $H/D$ then yields the exact convergence rate stated in Theorem~\ref{thm:mainpop}.
\end{proof}

Theorem~\ref{thm:mainpop} shows that, when trained on the full population distribution, the learned feature $w_{m+1}(t)$ eventually converges towards the optimal parameter $w_{m+1}^\star$ at a linear rate. 
This convergence can nevertheless be substantially delayed by the initialization phase. Indeed, the convergence rate contains a time shift of order $Cm$. This reflects the slow dynamics at initialization: the update of $w_{m+1}(t)$ is initially only of order $1/m$ (due to the ratio $\frac{H(w_{m+1})}{D(w_{m+1})}$ at initialization), and the feature therefore requires a time of order $m$ to escape this slow-evolution regime. Once this initial phase is overcome, the dynamics enter the linear convergence regime described by Theorem~\ref{thm:mainpop}.

A notable difference with classical analyses of learning a single ReLU neuron is that, in our setting, convergence requires a suitable initialization. This is precisely what leads to the dimensional requirement $d\gtrsim m^2$---see Section~\ref{sec:discussion} for further discussion. 
The reason is that the two-timescale dynamics of Equation~\eqref{eq:ODEpop} are not globally attracted to $w_{m+1}^\star$. Instead, the dynamics possess spurious local minima, distinct from the desired solution $w_{m+1}^\star$. Their existence is implied by Lemma~\ref{lemma:populationloss}, but their precise location is not characterized by our analysis. 
The initialization condition thus ensures that $w_{m+1}(0)$ lies outside the attraction basins of these spurious minima. Once this condition is satisfied, the dynamics are driven towards the desired solution~$w_{m+1}^\star$.

\subsection{Gradient flow on empirical loss}\label{sec:main}
Let us now go back to the empirical loss case, through the study of the (possibly multiple) solutions $\bW^\circ$ of the differential inclusion given by Equation~\eqref{eq:oneneurondyn}. 
In the empirical case, we can still derive closed form expressions of the loss and its subdifferential, replacing the key quantities $H$ and $D$, by their empirical counterpart defined in Equation~\eqref{eq:HDn}. Given a large enough number of training samples, the different quantities of interest concentrate sufficiently close to their population counterpart, so that we can derive a similar analysis of the training dynamics. It then yields our main theorem. 
\begin{theorem}\label{thm:mainempirical}
Let Assumption~\ref{ass:orthofeatures} hold and initialize as $\bw_{m+1}^\circ(0)\sim \unif(\bS_{d-1})$. There exist positive universal constants $m_0$, $c$ and $C$ such that for any $\delta\in(0,1/2)$, if $m\geq m_0$, $d\geq Cm^2\ln(1/\delta)$ and $n\geq C\left(d+\ln(1/\delta)\right)$ then with probability at least $1-\delta$ over both the data and initialization, any solution of Equation~\eqref{eq:oneneurondyn} satisfies:
\begin{equation*}
\bw_{m+1}^\circ(t)^\top w_{m+1}^\star \geq 1- e^{-c(t-Cm)} \qquad \text{for any }t\geq Cm.
\end{equation*}
Moreover under the same random event, there exists a set $\cT$ satisfying Assumption~\ref{ass:conditioning} such that $\bW^\circ(t)\in\cT$ for any $t\geq 0$.
\end{theorem}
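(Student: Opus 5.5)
We will follow the population argument of Theorem~\ref{thm:mainpop} and make it robust to perturbation. Fixing the first $m$ neurons at $w_i^\star$, we first write the empirical analogue of Lemma~\ref{lemma:populationloss}. Let $H_n,D_n$ be the empirical counterparts of $H,D$ (Equation~\eqref{eq:HDn}), i.e.\ the Schur-complement quantities obtained by eliminating $a_1,\dots,a_m$ from the least-squares problem. When the empirical Gram matrix of the $m+1$ features is invertible, $a_n(\bW)$ is unique with $a_n(\bW)_{m+1}=H_n/D_n$. Moreover $\cL_n(a_n(\bW),\bW)=\mathrm{const}-H_n^2/D_n$, and the Clarke subdifferential in $\bw_{m+1}$ is $-\frac{H_n}{D_n}(\id-\bw_{m+1}\bw_{m+1}^\top)\Psi_n$, where $\Psi_n$ is now a set-valued element built from $\frac1n\sum_k \mathbb 1\{x_k^\top \bw_{m+1}>0\}(\cdot)x_k$.

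The core step is a uniform concentration statement over $w\in\bS_{d-1}$. With probability $1-\delta$, when $n\ge C(d+\ln(1/\delta))$, we want:
\begin{itemize}
\item the empirical Gram matrix of $(\sigma(x^\top w_1^\star),\dots,\sigma(x^\top w_m^\star),\sigma(x^\top w))$ is within a small constant (in operator norm) of the population one, uniformly in $w$;
\item $|H_n-H|$ and $|D_n-D|$ are at most a small constant $\varepsilon$;
\item every element of the empirical subdifferential is within $\varepsilon$ of the population gradient direction, and this holds for its components along $w_{m+1}^\star$ and along $P_m$.
\end{itemize}
We obtain these from sub-Gaussian/sub-exponential empirical-process bounds for ReLU features indexed by the sphere: a covering-net argument, together with a VC-type bound on the halfspace indicators $\mathbb 1\{x^\top w>0\}$ to handle the discontinuous derivative. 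The bound must not degrade with $m$. This is important because $n\gtrsim d$ is claimed independently of $m$. We therefore exploit orthonormality: the $m$ fixed features contribute a Gram block whose deviation must be controlled in a way that does not accumulate $m$ errors. We plan to work with the projections onto $\mathrm{span}(w_i^\star)$ and the functionals $H_n, D_n$ directly, rather than with matrix entries. Invertibility of the Gram matrix then gives the set $\cT$ of Assumption~\ref{ass:conditioning}, namely $\{\bW:\bw_i=w_i^\star,\ \|P_m\bw_{m+1}\|\le c_0\}$. It is compact, and $\lambda_{\min}>0$ holds on it.

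With these bounds, we redo Lemma~\ref{lemma:poprate} for the inclusion: while $\|P_m\bw_{m+1}\|\le c_0$, items 1--3 hold with slightly worse constants for every selection of the subgradient. The difficulty is item 3, whose $1/\sqrt m$ rate must survive an $\varepsilon$ perturbation. We handle it by bounding the perturbation of $\frac{d}{dt}\|P_m\bw\|$ by $\frac{H_n}{D_n}(\varepsilon_P)$, where $\varepsilon_P\lesssim \sqrt{d/n}$ times quantities that are small on $\mathrm{span}(w_i^\star)$. If this is not sufficient, we will request a concentration error of order $1/\sqrt m$ for the $P_m$ component specifically. We expect this component to enjoy that rate because it is the projection of an averaged vector onto an $m$-dimensional subspace.

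Initialization is handled exactly as in the population case. Since $d\ge Cm^2\ln(1/\delta)$, condition~\eqref{eq:initcondition} holds and gives $H^2/D\gtrsim 1/m^2$. To transfer this to $H_n^2/D_n$, the absolute error $\varepsilon$ is too coarse, so at the initial point we use pointwise, not uniform, concentration. Because $\breve w$ is independent of the data, we obtain relative error control there. This is the second delicate point.

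Along the trajectory, $\cL_n$ is non-increasing, since the chain rule holds for Clarke-regular or semialgebraic losses along absolutely continuous curves. Hence $H_n^2/D_n$ stays $\gtrsim 1/m^2$ and $H_n/D_n>0$ by continuity. A bootstrap/continuity argument then keeps $\|P_m\bw\|\le c_0$ for all time. During the escape time $\sim Cm$, the ratio $H_n/D_n$ is first $\gtrsim 1/m$ and then becomes order one. Integrating item 2 yields $1-\bw^\top w_{m+1}^\star\le e^{-c(t-Cm)}$. Finally, we take a union bound over the data and initialization events.
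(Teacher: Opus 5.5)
Your outline follows the paper's architecture: the closed form via $H_n,D_n$, uniform concentration at constant accuracy $\varepsilon$ with $n\gtrsim d$, pointwise concentration at the data-independent $\breve w$, monotonicity of $H_n^2/D_n$, time reparametrization and Gr\"onwall. It has a genuine gap at the last step, however. You assert that items 1--3 of Lemma~\ref{lemma:poprate} transfer to every subgradient selection ``with slightly worse constants.'' For item 2, this does not follow from additive uniform concentration near $w_{m+1}^\star$. An additive error $\varepsilon$ only yields $\inf_{\psi\in\Psi_n(w)} w_{m+1}^{\star\,\top}P_{w^\perp}\psi\ge c(1-w^\top w_{m+1}^\star)-C\varepsilon$. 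This bound is vacuous once $1-w^\top w_{m+1}^\star\lesssim\varepsilon$, because the population lower bound vanishes at the target. Integrating it only brings $\bw_{m+1}^\circ$ into an $O(\varepsilon)$-neighbourhood of $w_{m+1}^\star$. That is still enough to keep $\|P_m\bw_{m+1}^\circ\|$ small, but it does not give $1-\bw_{m+1}^\circ(t)^\top w_{m+1}^\star\le e^{-c(t-Cm)}$, i.e.\ exact recovery. Sending $\varepsilon\to0$ would cost $n\gtrsim d/\varepsilon^2$, which contradicts the requirement $n\gtrsim d$.

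What is missing is a separate local argument that gives a relative (multiplicative) bound in a constant-size neighbourhood of $w_{m+1}^\star$, where both the empirical and population gradients vanish. The paper uses the exact decomposition of Lemma~\ref{lemma:P2decomposition}. Let $u=P_{w^\perp}w_{m+1}^\star$ and let $P_{n,w}$ be the projection onto the orthogonal complement of the columns of $[F_n,Z_n(w)]$. Then
\[ u^\top\psi=\frac{1}{n}\|P_{n,w}Ju\|^2+\frac{1}{n}(Ju)^\top P_{n,w}\big(Z_n(w_{m+1}^\star)-w^\top w_{m+1}^\star Z_n(w)-Ju\big),\qquad J\in J_n(w). \]
The first term is bounded below by $c(1-w^\top w_{m+1}^\star)$ using two ingredients:
\begin{itemize}
\item a lower-isometry bound for $P_{n,w_{m+1}^\star}J_n(w_{m+1}^\star)$ on $\{w_{m+1}^\star\}^\perp$, obtained from covariance concentration of the $(m+d)$-dimensional features $(\sigma(x^\top w_1^\star),\dots,\sigma(x^\top w_{m+1}^\star),\iind{x^\top w_{m+1}^\star>0}U^\top x)$, where $U$ is an orthonormal basis of $\{w_{m+1}^\star\}^\perp$ and the second-moment matrix has $\lambda_{\min}\gtrsim 1$;
\item a perturbation bound on $P_{n,w}-P_{n,w_{m+1}^\star}$.
\end{itemize}
The second term is at most $C\sqrt{\varepsilon}\,(1-w^\top w_{m+1}^\star)$. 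This comes from a Soltanolkotabi-type bound, uniform over $w$ near $w_{m+1}^\star$, on the empirical contribution of samples with $\mathrm{sign}(x_i^\top w)\neq\mathrm{sign}(x_i^\top w_{m+1}^\star)$ (Lemmas~\ref{lemma:P2conc1}--\ref{lemma:P2emprate}). All of this holds with $n\gtrsim d$.

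Conversely, item 3, which you flag as the main difficulty, needs no special treatment; a small constant additive $\varepsilon$ suffices. Gr\"onwall gives $\|P_m\tw(s)\|\le 2\min\big((1/\sqrt m+\varepsilon/C_0)e^{C_0 s},\,e^{-cs/2}\big)$, which stays below $c/2$ for $m\ge m_0$. So no $1/\sqrt m$-accurate concentration of the $P_m$ component is required.
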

\begin{proof}[Sketch of proof.]
From an optimization perspective, the proof follows the same lines as the one of Theorem~\ref{thm:mainpop}. The main additional challenge is to establish sufficiently tight concentration bounds for the relevant empirical quantities around their population counterparts. The main concentration result is provided by Proposition~\ref{prop:empiricalrates} in Appendix~\ref{app:concetration}. Similarly to the population case, it shows that, under the stated sample complexity, with high probability and as long as $\|P_m \bw^\circ_{m+1}(t)\|$ is small enough,
\begin{enumerate}[topsep=0pt, itemsep=-2pt]
\item $D_n(\bw^\circ_{m+1}(t))\gtrsim 1$;
\item $\frac{\df }{\df t}(\bw^\circ_{m+1}(t)^\top w_{m+1}^\star) \gtrsim \frac{H_n(\bw^\circ_{m+1}(t))}{D_n(\bw^\circ_{m+1}(t))}\left(1-\bw^\circ_{m+1}(t)^\top w_{m+1}^\star\right)$;
\item $\frac{\df }{\df t}\|P_m \bw^\circ_{m+1}(t)\|\lesssim \frac{H_n(\bw^\circ_{m+1}(t))}{D_n(\bw^\circ_{m+1}(t))}\left(\frac{1}{\sqrt{m}}+\|P_m \bw^\circ_{m+1}(t)\|+\varepsilon\right)$;
\end{enumerate}
for a small $\varepsilon>0$. 
The proof of these three statements relies on two complementary concentration arguments. First, we establish uniform concentration on the sphere $\bS_{d-1}$ for the different empirical quantities involved. These bounds directly yield points~1 and~3, and also point~2 whenever $\bw^\circ_{m+1}$ lies outside a neighborhood of $w_{m+1}^\star$ (see Appendix~\ref{app:uniform_concentration}). The required uniform concentration bounds are obtained from empirical process concentration arguments, as detailed in Appendix~\ref{app:generalconcentration}, i.e., with tight chaining and VC dimension techniques \citep[we refer to][for a useful introduction to those techniques]{vaart1996weak}.

The second argument is needed to control the dynamics when $\bw^\circ_{m+1}$ enters a neighborhood of $w_{m+1}^\star$. Borrowing techniques from \citet{soltanolkotabi2017learning}, we obtain the required lower bound in point~2 in this regime as well (see Appendix~\ref{app:neighborconc}). 
Once these properties are established with high probability, the remainder of the proof follows the same lines as the one of Theorem~\ref{thm:mainpop}, with the additional ingredient of a sharper concentration bound at initialization, detailed in Appendix~\ref{app:concinit}.
\end{proof}
Interestingly, Theorem~\ref{thm:mainempirical} shows that perfectly recovering the optimal parameter $w_{m+1}^\star$ and achieving zero test loss only requires a sample complexity of $n\gtrsim d$, independent of $m$. This is particularly striking when compared with learning all $m+1$ features from scratch: from an information-theoretic perspective, this would require at least $\Omega(md)$ samples. 
In contrast, the optimization considered here effectively leverages the information already encoded in the pre-trained weights. As a result, it only needs to learn the single new feature $w_{m+1}^\star$, leading to a sample complexity of the same order as that required for learning a single neuron.

\section{Discussion and limitations}\label{sec:discussion}

In this section, we discuss the limitations of our results, and also compare the two-timescale optimization scheme considered here to other standard methods.
%
%
%
\paragraph{Comparison with other fine-tuning methods.}
The most natural fine-tuning strategy is \textit{full fine-tuning}, which adapts all model parameters using gradient-based methods with the same learning rate across layers. Despite its simplicity and strong empirical performance, full fine-tuning can substantially \textit{distort} pre-trained features \citep{kumar2022finetuning}, thereby losing the statistical benefits of pre-training and harming generalization. We observe the same phenomenon in our toy setting (see Section~\ref{sec:expe}). 
At the opposite extreme, \textit{linear probing} freezes the inner weights and adapts only the last layer. This prevents feature distortion and can be particularly effective with limited fine-tuning data, but at the cost of expressivity: it cannot learn tasks requiring new or substantially modified features.

Several intermediate strategies aim to preserve pre-trained features while retaining this expressivity. \textit{Two-timescale fine-tuning}, which we study here, assigns smaller learning rates to inner layers than to outer layers \citep{howard2018universal}. We show that this induces a favorable implicit regularization: new features can be learned while the pre-trained representation remains essentially unchanged. The mechanism is particularly striking in the limiting regime where the \textit{void} features vanish at initialization: their small magnitude is compensated by large outer weights, amplifying their gradients and causing them to evolve much faster than the pre-trained features.

A related strategy is \textit{Linear Probe then Finetune} (LP-FT), which first performs linear probing and then switches to full fine-tuning \citep{kumar2022finetuning}. While effective in practice, LP-FT only delays feature distortion rather than fully preventing it. Indeed, even with infinitely many data points, linear probing does not initially assign the optimal output weights to the pre-trained features: in our setting, the weights satisfy $a_i(0)\neq 1$ for $i\in[m]$ (see proof of Lemma~\ref{lemma:populationloss}). To learn $w_{m+1}^\star$, the output weights must therefore be readjusted. However, because LP-FT does not preserve the two-timescale structure during this second phase, this readjustment is accompanied by a distortion of the pre-trained features. 
In our experiments, this distortion is comparable to that of full fine-tuning and is sufficient to deteriorate test performance (see Section~\ref{sec:expe}). 
Finally, \textit{$\ell_2$ distance to starting point} (L2-SP) explicitly penalizes deviations from the pre-trained weights \citep{xuhong2018explicit}. This regularization directly preserves the pre-trained representation and is particularly well suited to our setting, as illustrated empirically in Section~\ref{sec:expe} and discussed in detail in Appendix~\ref{app:expe_details}. However, in more general settings, this regularization can also hinder the adaptation of the pre-trained features, for instance when the new task requires small but non-negligible modifications of the pre-trained weights.

\paragraph{Requirement $d\gtrsim m^2$.} 
As explained after Theorem~\ref{thm:mainpop}, the condition $d\gtrsim m^2$ is primarily needed to ensure that, with high probability, the $(m+1)$-th neuron is initialized within the attraction basin of the target weight $w_{m+1}^\star$. In lower dimensions, such a guarantee cannot hold with high probability for a single randomly initialized neuron. Overparameterization could potentially circumvent this limitation: by training multiple new neurons, one could hope that at least one is initialized in the correct attraction basin, thereby relaxing the dimensional requirement. Although similar dynamics appear to persist in the overparameterized setting (see Appendix~\ref{app:expe_overparam}), making this argument rigorous would require a substantially more intricate analysis, as the dynamics become considerably more challenging to characterize. This difficulty is already apparent in the single-task setting for learning a single ReLU \citep{xu2023over}.

\paragraph{Pre-trained initialization of the weights.}  A key assumption of our work concerns the structure of the initialization: we assume that the first $m$ neurons are initialized at the optimal weights $w_1^\star,\ldots,w_m^\star$, while the $(m+1)$-th neuron is initialized with a small norm and a uniformly random direction. This setting is inspired by pre-training an overparameterized model on a teacher network with features $w_1^\star,\ldots,w_m^\star$ \citep[see, e.g.,][]{bietti2025learning}. If pre-training succeeds, one would expect the learned network to contain features closely aligned with these optimal weights, while overparameterization may leave additional \textit{void} features that are not aligned with any of them. Assuming that such void features have small norm is also natural, as loss minimization tends to suppress unused features, and this effect is further reinforced by weight decay \citep{han2015learning}. 

We thus view Equation~\eqref{eq:init} as an idealized model of weights obtained after pre-training, chosen to make the resulting dynamics amenable to analysis. In Appendix~\ref{app:expe_real}, we complement this idealized setting with experiments initialized from an actually pre-trained model. The resulting dynamics are consistent with our theoretical findings, supporting the relevance of our idealized initialization.

\paragraph{Orthogonality of features.} 
Assumption~\ref{ass:orthofeatures} is primarily made for analytical convenience. In particular, orthogonality provides a simple and explicit expression for the population Gram matrix $G$, whose $(i,j)$-th entry is
$G_{i,j} = \bE_x\big[\sigma(x^\top w_i^\star)\sigma(x^\top w_j^\star)\big]$. 
This assumption can readily be relaxed to allow features with different norms and nonzero correlations. However, the analysis still requires an analogue of Item~2 in Lemma~\ref{lemma:poprate}, which imposes an angular condition on the different features, albeit a substantially weaker one than orthogonality. At a high level, Item~2 ensures that, along the relevant portion of the dynamics, the signal associated with the target feature $w_{m+1}^\star$ dominates the signal associated with the pre-trained features.


\section{Experiments}\label{sec:expe}

We complement our theoretical results with numerical experiments in a slightly more general setting, where the teacher weights are drawn independently from a standard Gaussian distribution and are therefore neither normalized nor pairwise orthogonal. Before fine-tuning, we initialize the network by setting the first $m$ weights to the corresponding optimal weights $w_1^\star,\ldots,w_m^\star$, while initializing the $(m+1)$-th weight with a uniformly random direction and a small norm. With problem parameters $m=20$, $d=500$ and $n=4000$, we then fine-tune the two-layer ReLU network using the different algorithms described in Section~\ref{sec:discussion}. Experimental details and additional experiments are provided in Appendix~\ref{app:expe}.

Figure~\ref{fig:MSEperfectpretrain} shows the evolution of the test loss during fine-tuning for the different methods. LP-FT performs well during the initial epochs, but its performance subsequently deteriorates as the model starts overfitting the fine-tuning data and, at the same time, degrading its pre-trained features (see below). Full fine-tuning exhibits a similar, but more pronounced, behavior: without an initial linear probing phase, it starts distorting the pre-trained features from the beginning of training, which leads to even worse generalization. The other methods are less affected by feature distortion and overfitting. Linear probing cannot reduce the test loss below a certain threshold because of its limited expressivity. In contrast, both two-timescale and L2-SP perform much better, although they require substantially more epochs to reach their best performance. 
L2-SP performs well as we regularize only the hidden layer, a choice that is particularly well suited to our setting; regularizing all parameters substantially degrades performance. We provide additional discussion and details on L2-SP in Appendix~\ref{app:expe_details}.

Figure~\ref{fig:MSEperfectpretrain} also highlights a trade-off between statistical performance and optimization speed: two-timescale and L2-SP achieve the best generalization by preserving the pre-trained features while learning the new feature, but might require more training epochs to do so. 
Importantly, all methods (except linear probing) eventually achieve nearly zero training loss. This indicates that their differences in test performance do not stem from their ability to minimize the training objective, but rather from the different forms of implicit regularization induced by their respective optimization schemes.

\begin{figure}[htbp]
\centering
\begin{subfigure}{.5\textwidth}
  \centering
  \includegraphics[width=0.95\linewidth, trim={0 0.7cm 0 0.7cm}, clip]{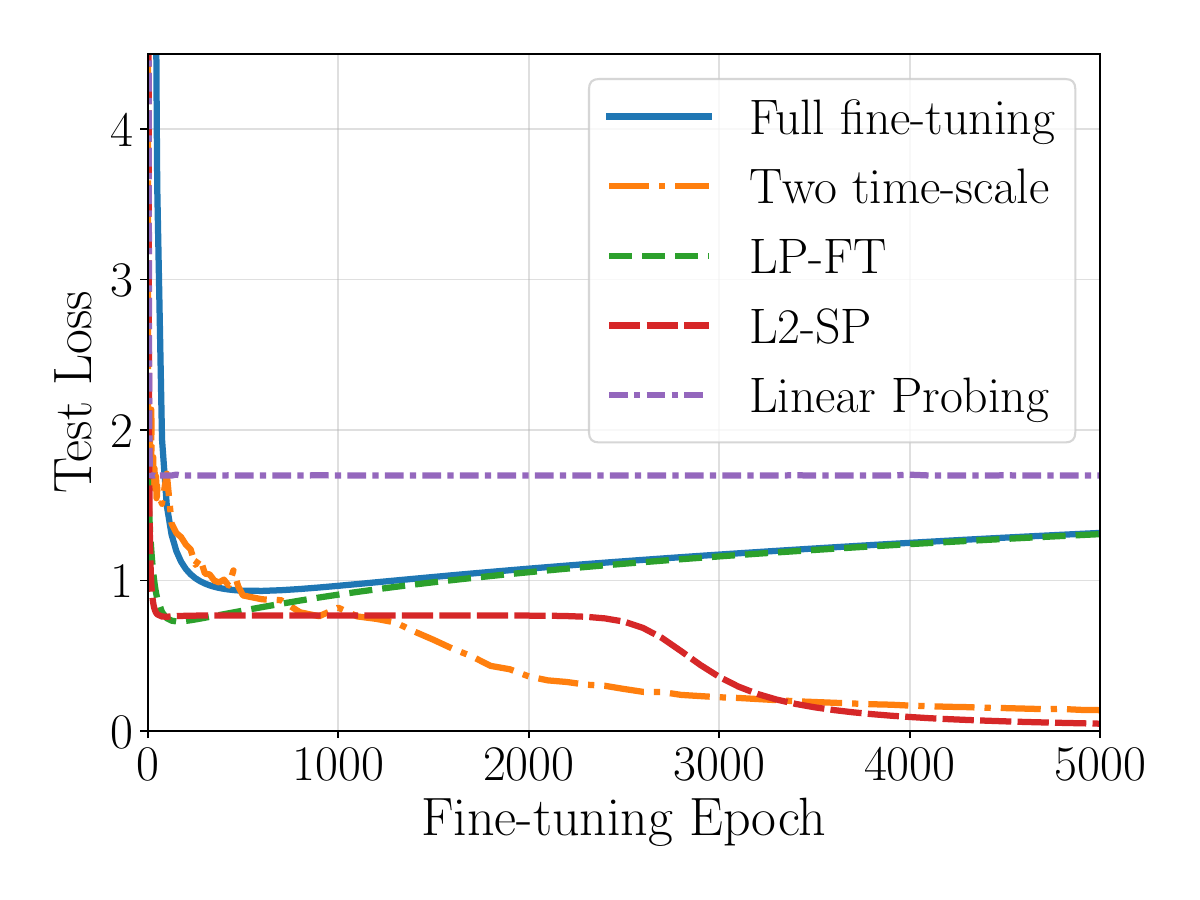}
  \caption{\label{fig:MSEperfectpretrain}Test loss.}
\end{subfigure}%
\begin{subfigure}{.5\textwidth}
  \centering
  \includegraphics[width=0.95\linewidth, trim={0 0.7cm 0 0.7cm}, clip]{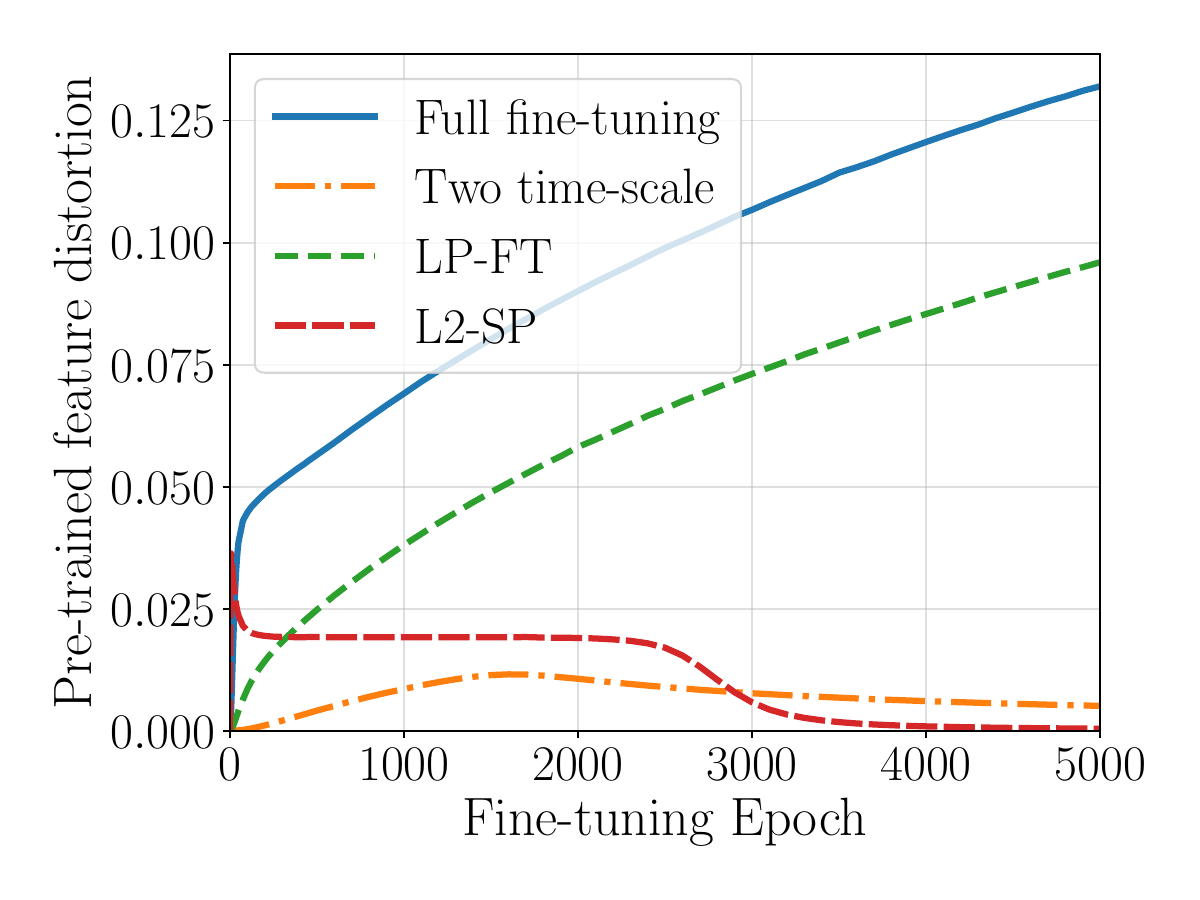}
  \caption{\label{fig:distortion}Distortion of pre-trained features.}

\end{subfigure}
\caption{\label{fig:main}Fine-tuning from pre-trained weights with different methods.}
\end{figure}
Figure~\ref{fig:distortion} quantifies the distortion of the pre-trained features induced by fine-tuning. \textit{Pre-trained feature distortion} is defined as the mean squared error of the fine-tuned model on data generated by the pre-training teacher, after re-optimizing the output weights by linear probing. Evaluating the pre-training error after this re-optimization isolates the quality of the learned features: it measures how well the fine-tuned representation still captures the features learned during pre-training.

Full fine-tuning and LP-FT lead to comparable levels of feature distortion. While feature distortion under full fine-tuning is well documented, LP-FT is generally expected to mitigate this effect. As explained in Section~\ref{sec:discussion}, however, LP-FT only delays feature distortion rather than preventing it, which is precisely what we observe in our experiments. 
On the other hand, two-timescale fine-tuning induces almost no feature distortion, suggesting that the limiting dynamics of Proposition~\ref{prop:limitdyn} accurately capture the fine-tuning dynamics even for finite initialization scales and learning-rate ratios. L2-SP also maintains low feature distortion, consistent with its explicit regularization toward the pre-trained hidden weights.

\subsection*{AI use statement}

In this work, we used generative AI tools to assist with polishing the writing, coding, and identifying relevant literature references. Generative AI tools were also used to explore some mathematical arguments. All mathematical proofs presented in the paper were developed, verified, and written by the human authors. Where AI tools provided useful suggestions, the authors critically evaluated, adapted, clarified, and improved them before incorporating the resulting arguments into the paper.

%

\ificlr
\subsection*{Reproducibility statement}

Experimental details are given in Appendix~\ref{app:expe} and are extensive enough for the reader to reproduce our experiments. Moreover, the code is provided in the supplementary material.
\fi

\ifarxiv
\subsubsection*{Acknowledgments}
This work was partially funded by the Swiss National Science Foundation, grant number 212111. 
This work benefited from the support of the FMJH Program PGMO. 
\fi

\bibliography{biblio.bib}
\bibliographystyle{iclr2027_conference}
\clearpage
\appendix

\addcontentsline{toc}{section}{Appendix} 
\part{Appendix} 
\parttoc 

\section{Additional experiments and experimental details}\label{app:expe}

\subsection{Experimental details}\label{app:expe_details}

For the experimental setup of Section~\ref{sec:expe}, we generate the ground truth features i.i.d. as $w_{i}^\star \cN(0,\id)$ and both pre-training teacher (used to evaluate feature distortion) and fine-tuning teacher are two-layer ReLU networks, where the outer weights are also drawn i.i.d. as standard Gaussian. The $m$ first features of the network are initialized as the ground truth features $w_i^\star$, while the $m+1$ feature is drawn as a random centered Gaussian of covariance $10^{-6}\id$. Output weights are also initialized at random, with unitary Gaussian for the $m$ first ones, and a Gaussian of variance $10^{-6}$ for the last one.

All the models are trained via stochastic gradient descent (SGD) over batches of size $200$. The learning rates are tuned among three or four values. For two-timescale, the learning rate associated to the output layer is $10^{-2}$, and $10^{-5}$ for the hidden layer. The regularization strength of $L2-SP$ is chosen as the best value among a dozen tested values. The Python code is available in the supplementary material.

\paragraph{L2-SP.} An important detail is that our implementation of L2-SP formulation does not regularize the squared distance between all model parameters and their pre-trained values. Instead, we only regularize the hidden parameters. This algorithmic choice is particularly well suited to the problem at hand and leads to substantially better performance for L2-SP.

This implementation choice is in fact crucial to the strong performance of L2-SP in our different experiments. When the new features are initialized close to $\mathbf{0}$, L2-SP can move them toward essentially any direction at little regularization cost. The corresponding output weights can then be scaled accordingly, allowing the model to add a new feature without incurring a significant regularization penalty. 

\paragraph{Random initialization.} We also trained the model from random initialization, corresponding to single-task learning without pre-trained weights. As expected, the model achieves nearly zero training loss but a much larger test loss, illustrating that the benefit of pre-training in our setting is statistical rather than purely optimization-based. The resulting test loss falls outside the range of the figures and is therefore not shown.

\subsection{Cosine similarities of learned features}\label{app:cosine}

\begin{figure}[htbp]
    \centering
\includegraphics[width=0.8\linewidth]{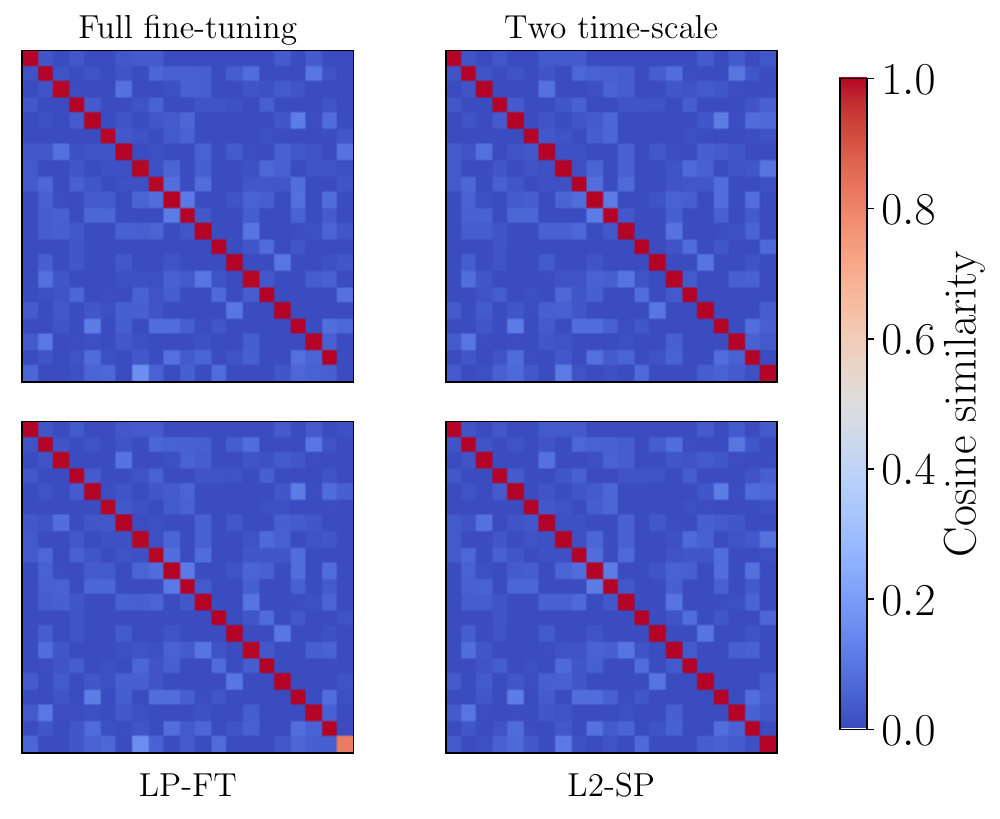}
    \caption{\label{fig:cosine}Cosine similarities between learned features and ground truth.}
\end{figure}

Figure~\ref{fig:cosine} illustrates the cosine similarities between the learned features and ground truth ones. Precisely, each colormap represents a matrix $C$ whose components are given by
\begin{equation*}
C_{ij} = \frac{w_j^\top w_{i}^{\star}}{\| w_j\| \cdot \|w_{i}^{\star}\|},
\end{equation*}
where the $w$ are the hidden weights of the represented model, after $5000$ training epochs. The columns are also permuted here for clarity of the figure. The setting is the same as in Figures~\ref{fig:MSEperfectpretrain} and \ref{fig:distortion}. Interestingly, it confirms that after fine-tuning, both two-timescale and L2-SP perfectly learned the new feature (given by the last red square on the diagonal), while other algorithms might not have especially learned it.

We here truncate the values below $0$ for a better visibility, as no value is much smaller than $0$.

LP-FT has learned the new feature up to some extent. However, this is only true at the late stage of fine-tuning. If one had stopped the fine-tuning of LP-FT at epoch $160$ (where its test loss is minimal in Figure~\ref{fig:MSEperfectpretrain}), this last diagonal square would also be blue, i.e., the new feature was not correctly learned at that time. If the model performance degrades although it learns this new feature between epoch $160$ and $5000$, it is thus because the pre-trained features are distorted in the meantime.
The features do not appear as distorted in Figure~\ref{fig:cosine} for both full fine-tuning and LP-FT. It is because even a tiny distortion, i.e., having a cosine similarity of approximately $99.7\%$ is sufficient enough to degrade the predictive performance, as observed in Figure~\ref{fig:distortion}. Thus, this kind of colormap is not necessarily the best to visualise feature distortion, but is very helpful to see whether the new feature has somehow been learned by the model.

\subsection{Overparameterized student}\label{app:expe_overparam}

In this section, we run the same experiment as in Section~\ref{sec:expe}, with the exception that the student network is overparameterized. To clarify, we consider the exact same $4000$ fine-tuning data samples as in Section~\ref{sec:expe} (see details in Appendix~\ref{app:expe_details}). However now, the learned model contains $100$ neurons instead of $21$: the first $m=20$ neurons are again initialized by the ground truth weights $w_1^\star, \ldots, w_m^\star$. And the remaining $80$ neurons are, again, initialized as i.i.d. random centered Gaussian of variance $10^{-6}$. 

Figure~\ref{fig:overparam} illustrates our findings in this setup. The empirical observations are very similar to the ones of Section~\ref{sec:expe}, suggesting that our theoretical findings should also hold when the student network is overparameterized.

\begin{figure}[htbp]
\centering
\begin{subfigure}{.5\textwidth}
  \centering
  \includegraphics[width=\linewidth, trim={0 0.7cm 0 0.7cm}, clip]{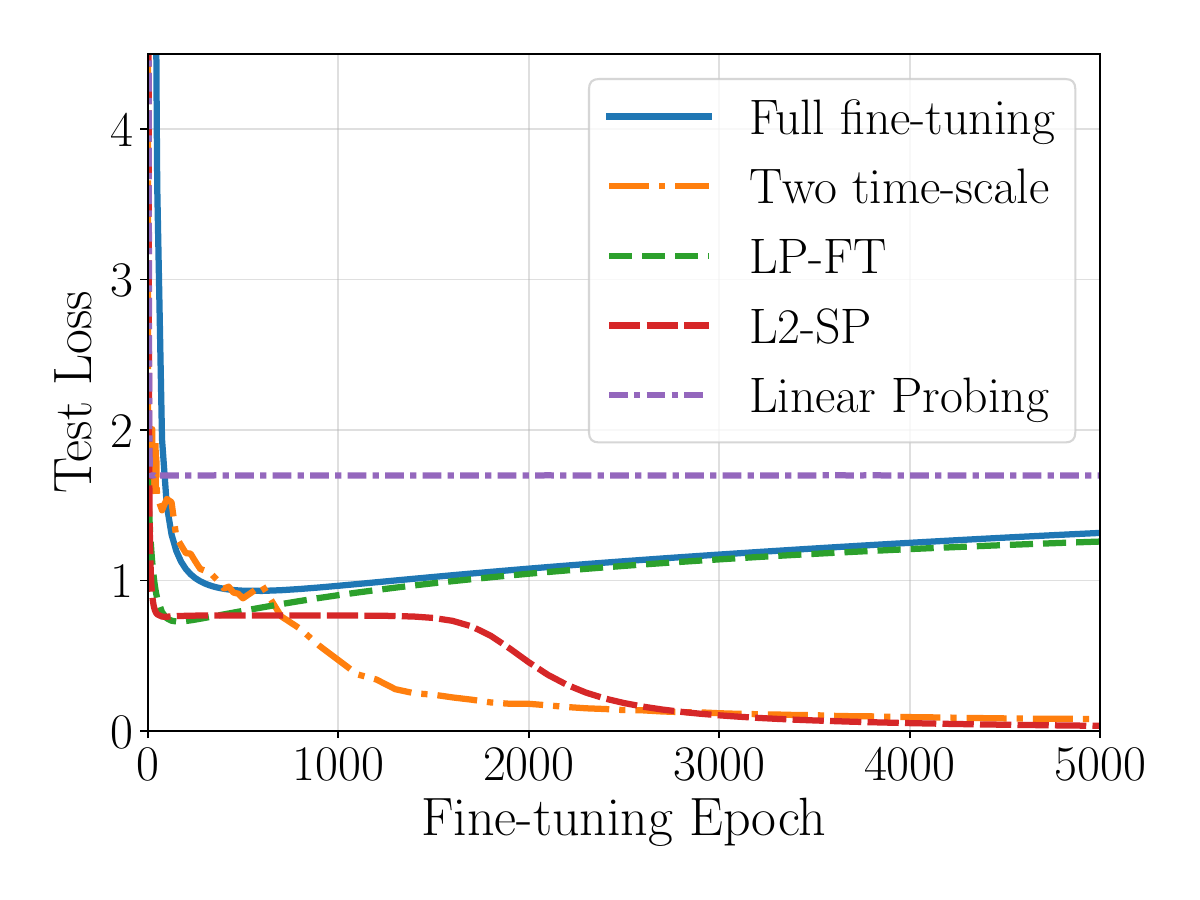}
  \caption{\label{fig:MSEperfectpretrain_overparam}Test loss.}
\end{subfigure}%
\begin{subfigure}{.5\textwidth}
  \centering
  \includegraphics[width=\linewidth, trim={0 0.7cm 0 0.7cm}, clip]{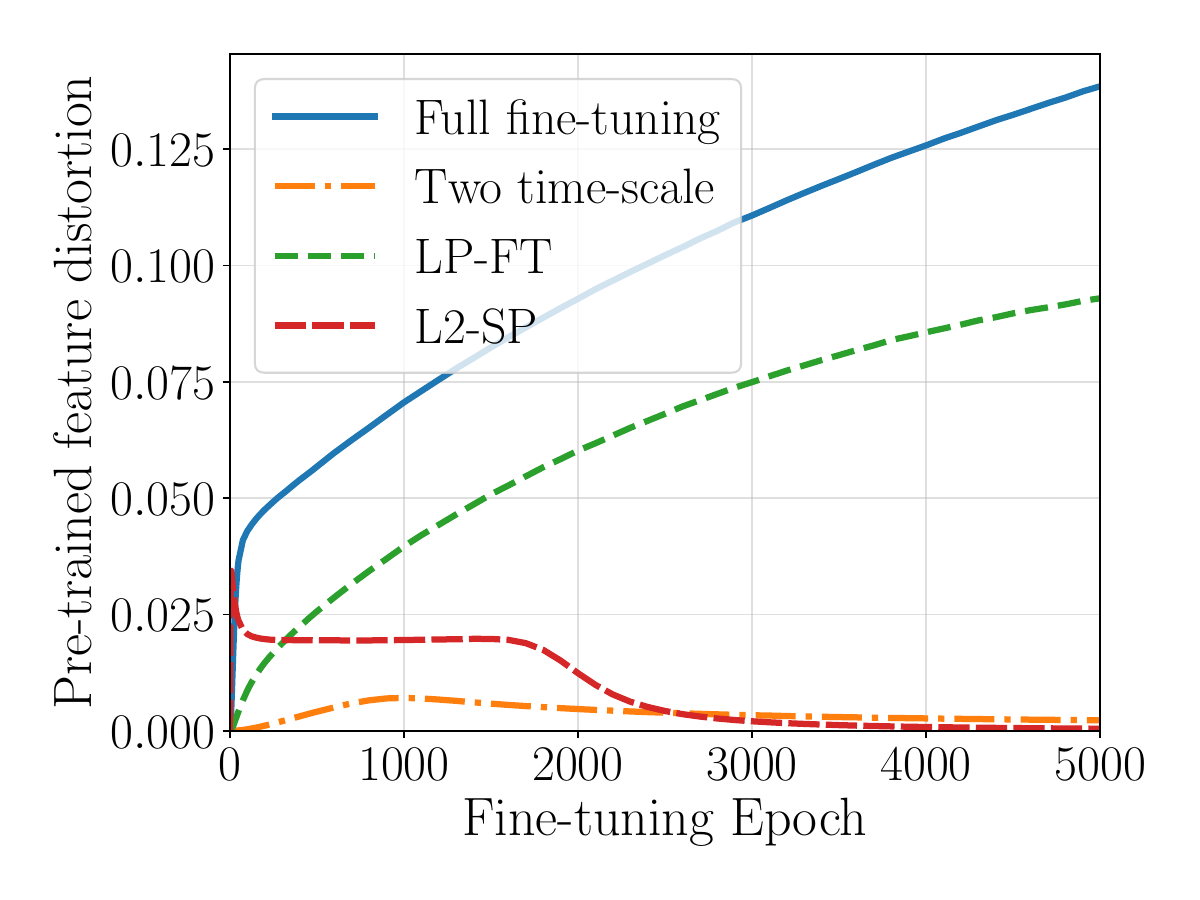}
  \caption{\label{fig:distortion_overparam}Distortion of pre-trained features.}

\end{subfigure}
\caption{\label{fig:overparam}Fine-tuning from pre-trained weights with an overparameterized model.}
\end{figure}

\subsection{Fine-tuning from real pre-training}\label{app:expe_real}

Until now, we considered fine-tuning from weights initialized according to Equation~\eqref{eq:init}. As discussed above, it idealizes the weights obtained after pre-training on data generated by a teacher given by the $m$ first ground truth parameters. This section empirically illustrates that similar fine-tuning behaviors happen, when fine-tuning on such a pre-trained model. Here is our empirical setup.

\textbf{Model pre-training.} We consider pre-training data of dimension $d = 500$, and represented by a teacher of $m_{\rm pre}=20$ neurons, $w_{1}^\star, \ldots, w_{m_{\rm pre}}^\star$ which are drawn i.i.d. as standard Gaussian. The pre-training data is then generated i.i.d. as follows:
\begin{equation*}
x \sim \cN(0,\id), \qquad y = \sum_{i=1}^{m_{\rm pre}} \sigma(x^\top w_i^\star).
\end{equation*}
We then consider an overparameterized two-layer ReLU network of $100$ neurons, initialized i.i.d. as centered Gaussian of variance $0.01$. We then pre-train the model via online SGD with an initial learning rate of $10^{-3}$, weight decay of $10^{-2}$, batches of size $256$ for $T=10^6$ steps. To stabilize the pre-training dynamics, we also use the standard PyTorch cosine learning rate scheduler.

\textbf{Fine-tuning.} We then fine-tune the model on a new fine-tuning task defined as follows. We first generate a teacher, by drawing at random $m_{\rm fine}=10$ features among the pre-trained ones, and add an additional feature $w^\star_{\rm fine}$ drawn at random following a standard Gaussian. More precisely, we let $\cI$ be a subset of $\{1,\ldots, m_{\rm pre}\}$ of size $m_{\rm fine}$---chosen uniformly at random---and the $n=8000$ fine-tuning samples are drawn i.i.d. as 
\begin{equation*}
x_k \sim \cN(0,\id), \qquad y_k =  \sigma(x_k^\top w_{\rm fine}^\star)+\sum_{i\in \cI} \sigma(x_k^\top w_i^\star).
\end{equation*}
We then fine-tune the model, from the pre-trained weights obtained by the pre-training procedure described above, for the different fine-tuning algorithms, following the same optimization procedure as described in Appendix~\ref{app:expe_details}.\footnote{We also use a cosine scheduler for L2-SP here for stability reasons.}

The fine-tuning procedure considered here is more challenging than the one in Section~\ref{sec:expe} for three reasons: 1) the initialization is obtained from an actually pre-trained model rather than from the idealized initialization considered in our theoretical analysis; 2) the model is overparameterized; 3) not all pre-trained features are relevant to the fine-tuning task.

The first point is the most challenging in practice. Overparameterization does not pose a significant difficulty, as shown in Appendix~\ref{app:expe_overparam}, and is in fact useful here, as it allows the model to retain \textit{void} features after pre-training. The third point simply considers a more general setting in which some of the features learned during pre-training are not reused by the fine-tuning task.

\begin{figure}[htbp]
\centering
\begin{subfigure}{.5\textwidth}
  \centering
  \includegraphics[width=\linewidth, trim={0 0.7cm 0 0.7cm}, clip]{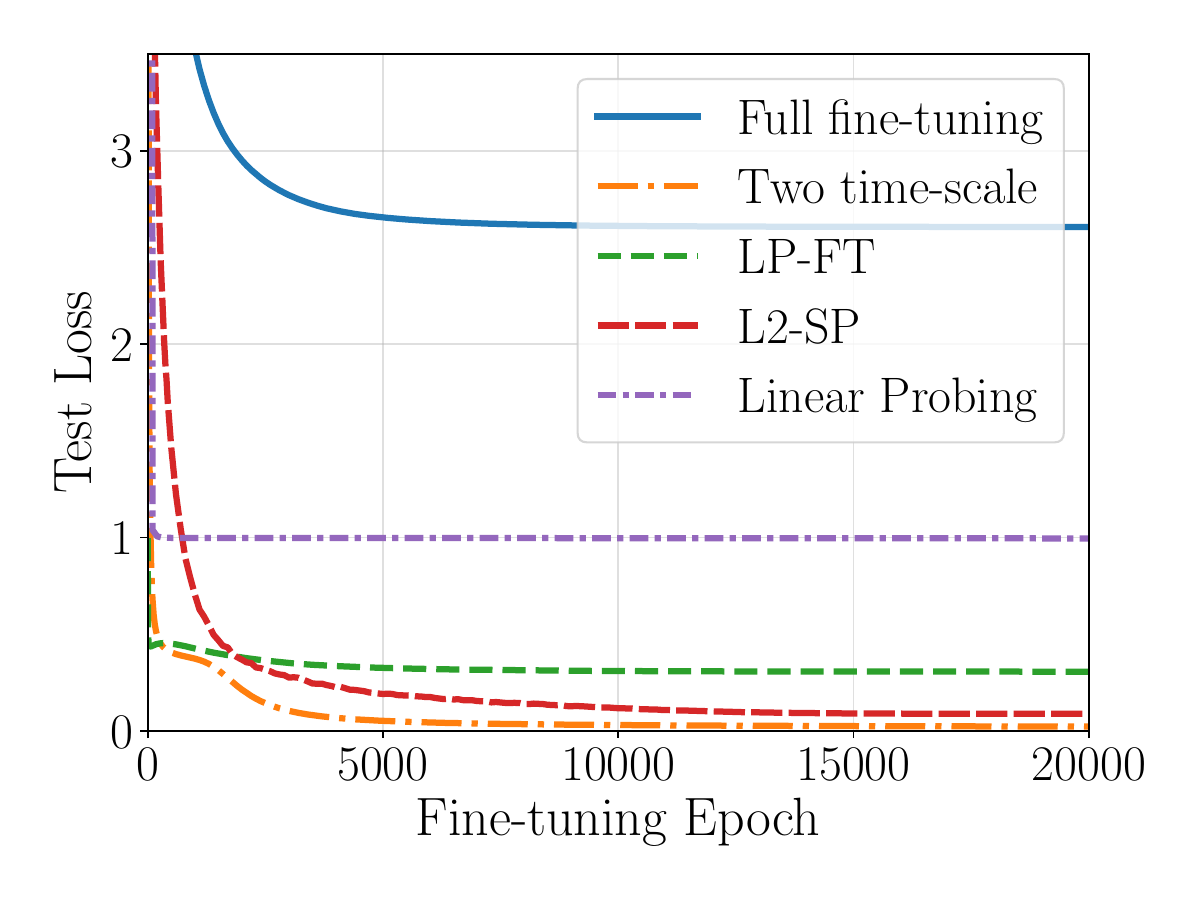}
  \caption{\label{fig:MSErealpretrain}Test loss.}
\end{subfigure}%
\begin{subfigure}{.5\textwidth}
  \centering
  \includegraphics[width=\linewidth, trim={0 0.7cm 0 0.7cm}, clip]{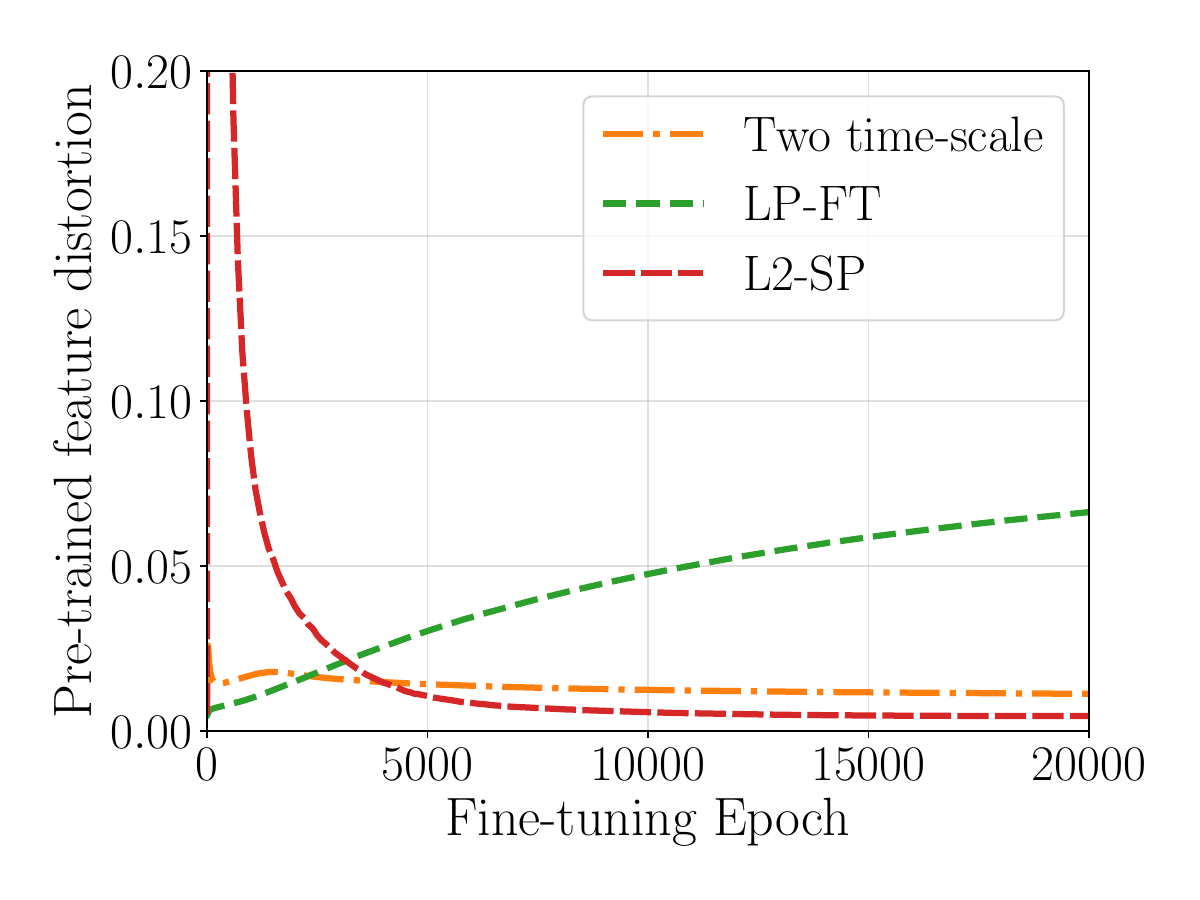}
  \caption{\label{fig:distortion_real}Distortion of pre-trained features.}

\end{subfigure}
\caption{\label{fig:realpretrain}Fine-tuning from real pre-trained weights (no idealized initialization).}
\end{figure}

Figure~\ref{fig:realpretrain} illustrates the evolution of the test loss and feature distortion during fine-tuning in this setting. We first note that achieving good performance here requires a larger sample size---and more epochs---than in the idealized setting of Section~\ref{sec:expe}. Thus, the theoretical setting does not directly translate to the practical setting considered here.

Nevertheless, the observations remain consistent with the main insights from our theoretical analysis. Two-timescale fine-tuning and L2-SP successfully recover the new feature while inducing little distortion of the pre-trained features---two-timescale even seems to perform better than L2-SP here. In contrast, both full fine-tuning and LP-FT suffer from feature distortion, although the effect is less pronounced for LP-FT.

Overall, the experiments in this section support the relevance of the idealized initialization scheme in Equation~\eqref{eq:init} as a model of practical fine-tuning, at least for the pre-training setups considered here.

\section{Proofs of Section~\texorpdfstring{\ref{sec:setting}}{2}}\label{app:setting}

\subsection{Proof of Lemma~\texorpdfstring{\ref{lemma:constantnorm}}{1}}
By definition of the two timescale dynamics, $\nabla_{a}\cL_n(a_n(W(t)), W(t))=0$, which can be rewritten for a any $i\in[m+1]$ as:
\begin{align*}
 - \frac{1}{n} \sum_{k=1}^n  (f_{a_n(W(t)), W(t)}(x_k)-y_k) \sigma(w_i(t)^\top x_k)=0.
\end{align*}
Lemma~\ref{lemma:constantnorm} is then proven by computing the time derivative of $\|w_i(t)\|^2$. Indeed, we have a.e.
\begin{align*}
\frac{1}{2}\frac{\df \|w_i(t)\|^2}{\df t} & =  - a_n(W(t))_{i}  \frac{1}{n} \sum_{k=1}^n  (f_{a_n(W(t)), W(t)}(x_k)-y_k) \sigma(w_i(t)^\top x_k)\\
& = 0.
\end{align*}
So that $\|w_i(t)\|$ is constant over time. \qed

\subsection{Proof of Proposition~\texorpdfstring{\ref{prop:limitdyn}}{1}}

The proof of Proposition~\ref{prop:limitdyn} is split in two main parts. The first one ensures that we can apply Arzel\`a-Ascoli theorem, ensuring the existence of a limit function -- up to extraction -- $\bW^\circ$. The second part then proves that this limit function is necessarily a solution of the limit process of Equation~\eqref{eq:oneneurondyn}.

\begin{lemma}\label{lemma:uniquea}
Suppose that Assumption~\ref{ass:conditioning} holds for some set $\cT$. We then have the following:
\begin{enumerate}
\item $a_n(W)$ is uniquely defined for any $W\in\cT$;
\item $W\mapsto a_n(W)$ is Lipschitz on $\cT$.
\end{enumerate}
\end{lemma}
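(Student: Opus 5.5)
The plan is to write the empirical loss as a least-squares problem in $a$. Denote $\by=(y_1,\ldots,y_n)^\top$. Then
\[
\cL_n(a,W)=\tfrac1n\|\Sigma(W,\bX)^\top a-\by\|^2 ,
\]
which is a convex quadratic in $a$ with Hessian $\frac{2}{n}\Sigma(W,\bX)\Sigma(W,\bX)^\top$. Under Assumption~\ref{ass:conditioning} this Hessian is positive definite for every $W\in\cT$. The quadratic is therefore strictly convex, and its unique minimizer is given by the normal equations:
\[
a_n(W)=\bigl(\Sigma(W,\bX)\Sigma(W,\bX)^\top\bigr)^{-1}\Sigma(W,\bX)\by .
\]
This proves item~1.

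For item~2, we study the map $W\mapsto \Sigma(W,\bX)$. Each entry $\sigma(w_i^\top x_k)$ is $\|x_k\|$-Lipschitz in $w_i$, because ReLU is $1$-Lipschitz. Hence $\Sigma(\cdot,\bX)$ is Lipschitz on $\R^{(m+1)\times d}$ in Frobenius norm, with constant $\max_k\|x_k\|$ (or $\|\bX\|_F$). Since $\cT$ is compact and $\Sigma$ is continuous, $\Sigma$ is bounded on $\cT$, say $\|\Sigma(W,\bX)\|\le S$. Then $M(W)\coloneqq\Sigma\Sigma^\top$ is Lipschitz on $\cT$: a product of bounded Lipschitz maps is Lipschitz, via $\|\Sigma\Sigma^\top-\Sigma'\Sigma'^\top\|\le 2S\|\Sigma-\Sigma'\|$.

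The main step is a uniform lower bound on the smallest eigenvalue. The map $W\mapsto\lambda_{\min}(M(W))$ is continuous, since eigenvalues of symmetric matrices are $1$-Lipschitz in operator norm by Weyl's inequality. It is positive on the compact set $\cT$, so it attains a minimum $\lambda_0>0$. Therefore $\|M(W)^{-1}\|\le 1/\lambda_0$ on $\cT$. Using the identity
\[
M^{-1}-M'^{-1}=M^{-1}(M'-M)M'^{-1},
\]
we get $\|M(W)^{-1}-M(W')^{-1}\|\le \lambda_0^{-2}\|M(W)-M(W')\|$, so $W\mapsto M(W)^{-1}$ is Lipschitz on $\cT$.

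Finally, $a_n(W)=M(W)^{-1}\Sigma(W,\bX)\by$ is a product of bounded Lipschitz maps on $\cT$, with $\by$ fixed. The telescoping bound
\[
\|AB-A'B'\|\le\|A\|\|B-B'\|+\|A-A'\|\|B'\|
\]
then gives a Lipschitz constant for $a_n$ depending on $S$, $\lambda_0$, $\max_k\|x_k\|$ and $\|\by\|$. The only nontrivial point is obtaining $\lambda_0>0$ uniformly, which is exactly where compactness of $\cT$ in Assumption~\ref{ass:conditioning} is needed. No convexity of $\cT$ is required, since all the bounds are pointwise pairwise estimates.
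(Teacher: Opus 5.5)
Your proposal is correct and follows the paper's proof essentially step for step. Uniqueness comes from the normal equations with $\Sigma(W,\bX)\Sigma(W,\bX)^\top$ invertible. Lipschitzness then comes from: $\Sigma(\cdot,\bX)$ being Lipschitz, $\Sigma\Sigma^\top$ being Lipschitz on the bounded set $\cT$, a uniform eigenvalue lower bound from continuity and compactness, and inversion being Lipschitz on that range, which you make explicit via the resolvent identity.

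One harmless slip: in Frobenius norm the Lipschitz constant of $W\mapsto\Sigma(W,\bX)$ is $\|\bX\|_F$, not $\max_k\|x_k\|$. The argument only needs it to be finite, so nothing else changes.
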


\begin{proof}
By definition, $a_n(W) \in \argmin_{a\in\R^{m+1}} \|a^\top \Sigma(W,\bX)-Y\|^2$. This loss is convex and differentiable in $a$, so that it is equivalent to the following gradient condition:
\begin{align*}
\Sigma(W,\bX)\Sigma(W,\bX)^\top a_n(W)  = \Sigma(W,\bX) Y.
\end{align*}
In particular, if Assumption~\ref{ass:conditioning} holds for some set $\cT$, $\Sigma(W,\bX)\Sigma(W,\bX)^\top$ is invertible, so that this equation admits a unique solution $a_n(W)$, given by
\begin{equation*}
a_n(W)  = \left(\Sigma(W,\bX)\Sigma(W,\bX)^\top\right)^{-1}\Sigma(W,\bX) Y.
\end{equation*}
Note that, $W\mapsto \Sigma(W,\bX)$ is Lipschitz. Moreover, it is bounded on the bounded set $\cT$, so that $g:W\mapsto \Sigma(W,\bX)\Sigma(W,\bX)^\top$ is also Lipschitz on $\cT$. By continuity and compactness, the eigenvalues of $g(W)$ are bounded away from $0$ on $\cT$. Thus, the matrix inversion is Lipschitz on $g(\cT)$, so that $a_n$ is indeed Lipschitz on $\cT$ by composition of Lipschitz functions, and multiplication with bounded, Lipschitz functions. 
\end{proof}

\begin{corollary}\label{coro:uniformlipschitz}
Let $(\alpha_k)_{k\in\N}$ a positive sequence such that $\alpha_k \overset{k\to\infty}{\longrightarrow}0$. 
Suppose that for $k$ large enough, the trajectory $\left(\bW^{\alpha_k}(\alpha_k^2 t)\right)_{t\in[0,T]}$ is included within some set $\cT$, independent of $k$, satisfying Assumption~\ref{ass:conditioning}. Then for $k$ large enough, the sequence $\bW^{\alpha_k}(\alpha_k^2 \cdot)$ is uniformly Lipschitz.
\end{corollary}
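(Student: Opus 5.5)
We need to show that $t\mapsto \bW^{\alpha_k}(\alpha_k^2 t)$ has velocity bounded uniformly in $k$ and $t\in[0,T]$. The plan is to take the time-rescaled normalized dynamics of Equation~\eqref{eq:twotimescalenormalized} and bound every element of the subdifferential by a constant depending only on $\cT$ and the data. Write $\bV^k(t)\coloneqq\bW^{\alpha_k}(\alpha_k^2 t)$. By the chain rule, for almost every $t$,
\begin{equation*}
\dot{\bv}^k_i(t)\in -\frac{\alpha_k^2}{\|w_i^{\alpha_k}(0)\|^2}\,\partial_{\bw_i}\cL_n\big(a_n(\bV^k(t)),\bV^k(t)\big).
\end{equation*}
For $i\in[m]$, Lemma~\ref{lemma:constantnorm} together with Equation~\eqref{eq:init} gives $\|w_i^{\alpha_k}(0)\|=\|w_i^\star\|$, so the prefactor is $\alpha_k^2/\|w_i^\star\|^2$. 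For $i=m+1$, the norm is $\alpha_k$, so the prefactor equals exactly $1$. Since $\alpha_k\to0$, for $k$ large all prefactors are at most $\max(1,\max_{i\le m}\|w_i^\star\|^{-2})$, which is independent of $k$.

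Next we bound the subgradients uniformly over $\cT$. Every element of $\partial_{\bw_i}\cL_n(a,\bW)$ has the form
\begin{equation*}
\frac{2}{n}\sum_{k'} \big(f_{a,\bW}(x_{k'})-y_{k'}\big)\, a_i\, g_{k'}\, x_{k'},
\end{equation*}
where each $g_{k'}\in[0,1]$ is an element of the Clarke subdifferential of $\sigma$. Its norm is therefore at most $\frac2n\sum_{k'}|f_{a,\bW}(x_{k'})-y_{k'}|\,|a_i|\,\|x_{k'}\|$. By Lemma~\ref{lemma:uniquea}, $a_n$ is Lipschitz, hence continuous, on the compact set $\cT$, so $\sup_{\cT}\|a_n(\bW)\|<\infty$. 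Since $\cT$ is compact, the rows of $\bW$ are also bounded, so $|f_{a_n(\bW),\bW}(x_{k'})|$ is uniformly bounded for the fixed finite sample. Together these give a constant $L$, independent of $k$, with $\|\dot\bV^k(t)\|\le L$ for almost every $t\in[0,T]$, once $k$ is large enough that the trajectory lies in $\cT$.

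Finally, solutions of the differential inclusion are absolutely continuous, so $\|\bV^k(t)-\bV^k(s)\|\le\int_s^t\|\dot\bV^k\|\le L|t-s|$, which is the claimed uniform Lipschitz property. The only delicate point is the uniform bound on $a_n$ along the trajectory. This is exactly what Assumption~\ref{ass:conditioning} supplies through Lemma~\ref{lemma:uniquea}, since compactness plus invertibility of $\Sigma\Sigma^\top$ keeps $a_n$ bounded. Everything else is a direct estimate.
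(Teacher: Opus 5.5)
Your proposal is correct and follows the same route as the paper. Both arguments rest on two facts: boundedness of $a_n$ on the compact set $\cT$ (via Lemma~\ref{lemma:uniquea}) gives a uniform bound on the subgradients, and the time-rescaled form of Equation~\eqref{eq:twotimescalenormalized} has prefactors $\alpha_k^2/\|w_i^\star\|^2$ for $i\in[m]$ and exactly $1$ for $i=m+1$, so the velocities are bounded almost everywhere uniformly in $k$. You simply spell out the prefactor computation and the explicit subgradient estimate that the paper's short proof leaves implicit.
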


\begin{proof}
$\cT$ is bounded and $a_n(W)$ is bounded on $\cT$ thanks to Lemma~\ref{lemma:uniquea}, so that $\partial \cL_n(a_n(W), W)$ is bounded on $\cT$. Equation~\eqref{eq:twotimescalenormalized} then directly yields that the time derivative of $\bW^{\alpha_k}(\alpha_k^2 \cdot)$ is uniformly bounded almost everywhere on $[0,T]$ for large enough $k$. In consequence, $\bW^{\alpha_k}(\alpha_k^2 \cdot)$ is uniformly Lipschitz on the considered time interval.
\end{proof}

Thanks to Corollary~\ref{coro:uniformlipschitz}, we can now apply the Arzel\`a-Ascoli theorem, for which we give a \textit{simplified} version below, to the sequence $\left(\bW^{\alpha_k}(\alpha_k^2 t)\right)_{t\in[0,T]}$.

\begin{theorem}[Arzel\'a-Ascoli, see e.g., \citealt{brezis2011functional}, Theorem 4.25]\label{thm:arzela-ascoli}
Let $(f_n)_{n\in\N}$ be a sequence of uniformly bounded and equicontinuous functions from a compact $K$ to $\R^d$. Then $(f_n)_{n\in\N}$ contains a uniformly convergent subsequence.
\end{theorem}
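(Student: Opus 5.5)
This is the classical Arzelà–Ascoli theorem, cited from \citet{brezis2011functional}. I would prove it with the standard diagonal-extraction argument on a countable dense subset of $K$, followed by an equicontinuity step that upgrades pointwise convergence on that subset to uniform convergence on $K$.

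\emph{Step 1 (dense set and diagonal extraction).} Since $K$ is a compact metric space, it is separable. Fix a countable dense subset $\{x_j\}_{j\in\N}\subseteq K$; for instance, take the union over $p$ of finite $1/p$-nets. For each $j$, the sequence $(f_n(x_j))_n$ is bounded in $\R^d$, so Bolzano–Weierstrass applies. I extract nested subsequences $\varphi_1\supseteq\varphi_2\supseteq\cdots$ such that $f_{\varphi_j(n)}(x_j)$ converges. The diagonal sequence $g_n\coloneqq f_{\varphi_n(n)}$ then converges at every $x_j$.

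\emph{Step 2 (uniform Cauchy property).} Fix $\varepsilon>0$. I read equicontinuity as uniform equicontinuity; this is automatic on a compact set by the usual Lebesgue-number argument, and it is immediate in our application, where the functions are uniformly Lipschitz by Corollary~\ref{coro:uniformlipschitz}. So there is $\eta>0$ with $\|g_n(x)-g_n(y)\|\le\varepsilon$ whenever $|x-y|\le\eta$, for all $n$. By compactness, finitely many points $x_{j_1},\dots,x_{j_r}$ of the dense set form an $\eta$-net of $K$. Choose $N$ such that $\|g_n(x_{j_l})-g_p(x_{j_l})\|\le\varepsilon$ for all $n,p\ge N$ and all $l\le r$. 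For any $x\in K$, pick $l$ with $|x-x_{j_l}|\le\eta$. The triangle inequality then gives
\[
\|g_n(x)-g_p(x)\|\le\|g_n(x)-g_n(x_{j_l})\|+\|g_n(x_{j_l})-g_p(x_{j_l})\|+\|g_p(x_{j_l})-g_p(x)\|\le3\varepsilon .
\]

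\emph{Step 3 (conclusion).} Step 2 shows that $(g_n)$ is uniformly Cauchy. Since $\R^d$ is complete, $(g_n)$ converges uniformly to some function, which is continuous as a uniform limit of continuous functions.

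The only mildly delicate point is the passage from pointwise equicontinuity to uniform equicontinuity. In our application it is immediate, because the sequence $\bW^{\alpha_k}(\alpha_k^2\cdot)$ is uniformly Lipschitz, and uniform boundedness follows from the normalization of the rows.
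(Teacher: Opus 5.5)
Your proof is correct: it is the standard diagonal-extraction argument on a countable dense subset, followed by the equicontinuity/finite-net step that upgrades convergence on that subset to uniform convergence. The paper does not prove this result itself but only cites Brezis; note also that right after the statement it defines equicontinuity in the uniform sense you adopt in Step~2, so your remark about passing from pointwise to uniform equicontinuity is not even needed there.
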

In Theorem~\ref{thm:arzela-ascoli}, a sequence of functions on $K$ is said to be uniformly equicontinuous (for the distance $d$) if for every $\varepsilon>0$, there exists $\delta>0$, such that for any $n\in\N$ and $x,y\in K$ with $d(x,y)<\delta$,
\begin{equation*}
\|f_n(x)-f_n(y)\| \leq \varepsilon.
\end{equation*}
Obviously, a sequence of uniformly Lipschitz functions is also equicontinuous. Thus, thanks to Corollary~\ref{coro:uniformlipschitz}, the sequence of functions $\bW^{\alpha_k}(\alpha_k^2 \cdot))_k$ satisfies the conditions of Theorem~\ref{thm:arzela-ascoli}, where the compact $K$ is given by $[0,T]$. We can then extract a subsequence $\alpha_{\varphi(k)}$ such that, as $k\to \infty$, $(\bW^{\alpha_{\varphi(k)}}(\alpha_{j_k}^2 t))_{t\geq 0}$ converges uniformly towards $\bW^\circ(t)$ on  $[0,T]$. It now remains to show that $\bW^\circ(t)$ is a solution of the differential inclusion given by Equation~\eqref{eq:oneneurondyn}.


First note that, by Equation~\eqref{eq:twotimescalenormalized}, for any $i\in[m]$, $\bw_{i}^{\alpha}(\alpha^2 t)$ obviously converges to $\frac{w_i^\star}{\|w_i^\star\|}$ for any $t\in\R_+$ as $\alpha\to0$. It directly implies the first line of Equation~\eqref{eq:oneneurondyn}.

It remains to show the second line of Equation~\eqref{eq:oneneurondyn}, i.e., the dynamics of $\bw^\circ_{m+1}$. Denote for shortness in the remaining of this proof $V^{k}(t) = \bW^{\alpha_{\varphi(k)}}(\alpha_{\varphi(k)}^2 t)$. Equation~\eqref{eq:twotimescalenormalized} rewrites for the $m+1$-th neuron:
\begin{equation*}
\dot{v}^{k}_{m+1}(t) \in F_{m+1}(V^{k}(t)),
\end{equation*}
where $F_{m+1}(V) = -\partial_{v_{m+1}}\cL(V)$. For $k$ large enough, $V^{k}$ is contained in $\cT$ on $[0,T]$, so that there exists a bounded function $f_k$ such that for any $k\in\N$ and $t\in[0,T]$,
\begin{equation}\label{eq:vkm1}
v^{k}_{m+1}(t) = \breve{w} +\int_0^t  f_k(s) \df s,
\end{equation}
where, almost everywhere, $f_k(s)\in  F_{m+1}(V^{k}(s))$.
Importantly, $f_k$ is uniformly bounded as soon as it is contained in$\cT$.  By Banach–Alaoglu theorem and separability of $L^1([0,T];\R^d)$, there exists $f\in L^\infty([0,T];\R^d)$ and a subsequence $(f_{\ell_k})$ of $(f_k)$ such that $f_{\ell_k}\rightharpoonup^* f$ in $L^\infty([0,T];\R^d)$ \citep[see e.g.,][Chapter~4]{brezis2011functional}, i.e., for any $\psi \in L^1([0,T];\R^d)$
\begin{equation*}
\int_{0}^T f_{\ell_k}(t) \psi(t) \df  t \longrightarrow \int_{0}^T f(t) \psi(t) \df  t.
\end{equation*}
The weak-$\star$ convergence and Equation~\eqref{eq:vkm1} directly imply that for any $t\in[0,T]$,
\begin{equation*}
v^k_{m+1}(t) \underset{k\to\infty}{\longrightarrow} \breve{w}+\int_{0}^t f(s)\df s.
\end{equation*}
In consequence, $\bw^\circ_{m+1}(t) =   \breve{w}+\int_{0}^t f(s)\df s$. As $f$ is bounded, $\bw^\circ_{m+1}$ is absolutely continuous (Lipschitz), and almost everywhere, 
\begin{equation}\label{eq:dotbwcirc}
\dot{\bw}^\circ_{m+1}(t) = f(t) .
\end{equation}
Since $L^2([0,T];\R^d) \subseteq L^1([0,T];\R^d)$, we also have $f_{\ell_k}\rightharpoonup f$ in $L^2([0,T];\R^d)$. 
By Mazur's lemma \citep[see e.g.,][Corollary 3.8]{brezis2011functional}, there exists a sequence $(g_k)_k$, of finite convex combinations of $(f_{\ell_k})_k$, more precisely,
\begin{equation*}
g_k = \sum_{j\geq k}\lambda^{(k)}_j f_{\ell_j},
\end{equation*}
such that $g_k$ converges strongly to $f$ in $L^2([0,T];\R^d)$. Passing to a subsequence \citep[][Theorem 4.9]{brezis2011functional}, we may therefore assume that $g_k(t)$ converges to $f(t)$ for almost any $t\in[0,T]$.

Moreover for any $k$, $f_k(t)\in  F_{m+1}(V^{k}(t))$. By upper semi-continuity of the Clarke subdifferential, $V^k(t) \to \bW^\circ(t)$ implies that for almost any $t\in[0,T]$,
\begin{equation*}
d(f_k(t),F_{m+1}(\bW^\circ(t))) \underset{k\to\infty}{\longrightarrow} 0,
\end{equation*}
where $d(x, S) = \inf_{y\in S} \|x-y\|_2$. By triangle inequality we thus also have  $d(g_k(t),F_{m+1}(\bW^\circ(t))) \underset{k\to\infty}{\longrightarrow} 0$. By closedness of the Clarke subdifferential and convergence of $g_k$ to $f$, it implies that, almost everywhere, $f(t)\in F_{m+1}(\bW^\circ(t))$, which combined with Equation~\eqref{eq:dotbwcirc} allows to conclude:
\begin{equation*}
\dot{\bw}^\circ_{m+1}(t) \in -\partial_{\bw^\circ_{m+1}}\cL(\bW^\circ(t)) .
\end{equation*}
\qed

\section{Proofs of Section~\texorpdfstring{\ref{sec:population}}{3.1}}\label{app:population}

\subsection{Notations and preliminaries}\label{app:prelipopulation}

In this section, we denote by $\phi_v:\R^d\to \R_+$ the function defined as $\phi_v(x) = \sigma(v^\top x)$. We also consider the Hilbert space $L^2(\cN(0,\id))$, defined by the scalar product, for any functions $f,g$ from $\R^d$ to $\R$:
\begin{equation*}
\langle f,g \rangle = \bE_{x\sim\cN(0,\id)}[f(x)g(x)].
\end{equation*}
We then define the linear operator $\Phi : \R^m \to L^2(\cN(0,\id))$, 
\begin{equation*}
\Phi(a) = \sum_{k=1}^m a_k \phi_{w_k^\star}.
\end{equation*}
Its adjoint is then given by $\Phi^*:L^2(\cN(0,\id))\to\R^m$ such that its $i$-th coordinate is given by
\begin{equation*}
\Phi^*(f)_i = \langle f, \phi_{w_i^\star} \rangle.
\end{equation*}
We also denote by $G=\Phi^* \Phi$ the $m\times m$ Gram matrix given for any $i,j\in[m]$ by $G_{ij} = \langle \phi_{w_i^\star}, \phi_{w_j^\star}\rangle$.
Whenever  it is invertible, we denote in the following by $\Pi = \id[L^2(\cN(0,\id))] - \Phi G^{-1} \Phi^*$ the orthogonal projection on $\{\phi_{w_1^\star}, \ldots, \phi_{w_m^\star}\}^\perp$.

Let also $P_m\in\R^{d\times d}$ be the orthogonal projection on $\Span(w_{1}^\star, \ldots, w_{m}^\star)$ and for any $w\in\bS_{d-1}$, let $P_{w^\perp} \coloneqq \id - ww^\top$ be the projection on the orthogonal of $w$.

The trajectory of $\bw^\circ_{m+1}$ will be confined within some set $\cT(r)$ defined for $r>0$ as:
\begin{equation*}
\cT(r) \coloneqq \{w\in\bS_{d-1} \mid \|P_m w\|_2\leq r \text{ and } -\frac{1}{4} \leq w^\top w_{m+1}^\star \}.
\end{equation*}
Define also $\kappa(x)=\frac{\sqrt{1-x^2}+(\pi-\arccos(x))x}{2\pi}$, so that the arc-cosine kernel yields for any $u,v\in\bS_{d-1}$, $\langle \phi_u, \phi_v \rangle = \kappa(u^\top v)$.

\subsection{Population dynamics}\label{app:popdynamics}

Define in the following
\begin{equation}\label{eq:HDpop}
\begin{aligned}
&H(w) \coloneqq \kappa(w^\top w^\star_{m+1})- \frac{1}{m+\pi-1}\sum_{i=1}^m \kappa(w^\top w_{i}^\star)\\
&D(w) \coloneqq \frac{1}{2} - \sum_{i,j\in[m]} (G^{-1})_{i,j} \kappa(w^\top w_{i}^\star)\kappa(w^\top w_{j}^\star).
\end{aligned}
\end{equation}

Recall Lemma~\ref{lemma:populationloss} here.

\poplosslemma*

\begin{proof}
\textbf{Computation of $a(W)$. }
By definition, $a(W)$ minimizes the optimization problem:
\begin{align}\label{eq:optpopa}
\min_{a\in\R^{m+1}} \|\sum_{k=1}^{m+1} a_k \phi_{w_k} - \sum_{k=1}^{m+1}\phi_{w_k^\star}\|^2_{L^2(0,\cN(0,\id))},
\end{align}
Note that for a given $a_{m+1}$, it admits a unique minimizer -- thanks to the invertibility of the matrix $G$ given below -- on its first coordinates, given by 
\begin{equation*}
a_{1:m}= G^{-1}\Phi^*\left(\sum_{k=1}^{m+1}\phi_{w_k^\star} - a_{m+1}Z(w_{m+1})\right).
\end{equation*}
$G$ is indeed invertible under Assumption~\ref{ass:orthofeatures}, since the arc-cosine kernel then directly gives: $G= \frac{1}{2\pi}\left((\pi-1)\id[m] + \mathbf{1}_m\mathbf{1}_m^\top\right)$.

The optimization problem \eqref{eq:optpopa}, then reaches the minimal value for a fixed $a_{m+1}$:
\begin{equation*}
\left\|\Pi \left( a_{m+1}\phi_{w_{m+1}} - \phi_{w_{m+1}^\star} \right)\right\|^2_{L^2(\cN(0,\id))},
\end{equation*}
which is exactly the squared norm of the projection on the orthogonal of $\{\phi_{w_1^\star} \phi_{w_m^\star}, \ldots,\}$. We can then develop this term, so that,
\begin{equation}\label{eq:optamplus1}
\left\|\Pi \left( a_{m+1}\phi_{w_{m+1}} - \phi_{w_{m+1}^\star} \right)\right\|^2_{L^2(\cN(0,\id))}  = a_{m+1}^2 \tilde{D}(w_{m+1}) - 2a_{m+1}\tilde{H}(w_{m+1}) + \tilde{D}(w_{m+1}^\star) ,
\end{equation}
where
\begin{equation}\label{eq:tildedefn}
\tilde{D}(w) = \|\Pi \phi_{w}\|^2_{L^2(\cN(0,\id))}
\quad\text{and}\quad 
\tilde{H}(w) = \langle \Pi \phi_{w}, \Pi \phi_{w_{m+1}^\star} \rangle_{L^2(\cN(0,\id))}.
\end{equation}
This value is then minimal for a unique value of $a_{m+1}$ whenever $\tilde{D}(w_{m+1})>0$, which is given by 
\begin{equation*}
a_{m+1} = \frac{\tilde{H}(w_{m+1})}{\tilde{D}(w_{m+1})}.
\end{equation*}

\textbf{Functions $\tilde{D}$ and $\tilde{H}$.} To conclude on the value of $a_{m+1}(W)$, let us now show that $\tilde{D}$ and $\tilde{H}$ actually coincide with $D$ and $H$, defined in Equation~\eqref{eq:HDpop}, under Assumption~\ref{ass:orthofeatures} on the sphere $\bS_{d-1}$. Since $\Pi = \id[L^2(\cN(0,\id))] - \Phi G^{-1}\Phi^*$, we have for $w\in\bS_{d-1}$,
\begin{align}
\tilde{D}(w) & = \|\phi_{w}\|^2_{L^2(\cN(0,\id))} - \Phi^*(\phi_w)^\top G^{-1} \Phi^*(\phi_w)\notag\\
& = \frac{1}{2}-\Phi^*(\phi_w)^\top G^{-1} \Phi^*(\phi_w)\label{eq:Dupper1}\\
& = D(w).\notag
\end{align}
where we used the arc-cosine kernel to get that $\|\phi_w\|^2_{L^2(\cN(0,\id))}=\kappa(1)=\frac{1}{2}$ and $\Phi^\star(\phi_{w}) = \left(\kappa(w^\top w_{i}^\star)\right)_{i\in[m]}$. 

Note that the arc-cosine kernel also implies, with Assumption~\ref{ass:orthofeatures}, that $G=\frac{1}{2\pi}\left((\pi-1)\id[m] + \mathbf{1}_m\mathbf{1}_m^\top\right)$. 
For $\tilde{H}$, we can first use the Sherman-Morrison formula to get
\begin{align}
G^{-1} & = 2\pi \left((\pi-1)\id[m] - \mathbf{1}_m\mathbf{1}_m^\top\right)^{-1}\notag\\
& = \frac{2\pi}{\pi-1} \id[m] - \frac{2\pi}{(\pi-1)^2} \frac{\mathbf{1}_m \mathbf{1}_m^\top}{1+ \frac{2\pi}{\pi-1} m} \notag\\
& = \frac{2\pi}{\pi-1}\left( \id[m] - \frac{\mathbf{1}_m \mathbf{1}_m^\top}{\pi+m-1}\right). \label{eq:sherman}
\end{align}
Then, thanks to Assumption~\ref{ass:orthofeatures} and the arc-cosine kernel, $\Phi^* \phi_{w_{m+1}^\star} = \frac{1}{2\pi}\mathbf{1}_m$, so that
\begin{align*}
G^{-1}\Phi^* \phi_{w_{m+1}^\star} & = \frac{1}{\pi-1}\left( 1 - \frac{m}{\pi+m-1} \right) \mathbf{1}_m\\
& = \frac{1}{\pi+m-1}\mathbf{1}_m.
\end{align*}
For any $w\in\bS_{d-1}$,
\begin{align*}
\tilde{H}(w) & = \langle \Pi \phi_w, \Pi \phi_{w_{m+1}^\star} \rangle \\
& = \langle \phi_w ,\phi_{w_{m+1}^\star}\rangle -    \Phi^*(\phi_w)^\top G^{-1}\Phi^* \phi_{w_{m+1}^\star} \\
& = \langle \phi_w ,\phi_{w_{m+1}^\star}\rangle -\frac{1}{\pi+m-1}\Phi^*(\phi_w)^\top\mathbf{1}_m = H(w),
\end{align*}
which allows to conclude on the closed-formula of $a(W)_{m+1}$.

\textbf{Population Loss. }
The value of $\cL(a(W),W)$ is then given by the value of Equation~\eqref{eq:optamplus1} when $a_{m+1} = \frac{H(w)}{D(w)}$, which yields for any $w_{m+1}$ such that $\widetilde{D}(w_{m+1})>0$
\begin{align} \label{eq:popvalue}
\cL(a(W), W)  = \left( \widetilde{D}(w_{m+1}^\star) - \frac{\widetilde{H}(w_{m+1})^2}{\widetilde{D}(w_{m+1})}\right).
\end{align}
This directly allows to conclude for the population loss value when $w_{m+1}\in\bS_{d-1}$, thanks to the previous paragraph.

\paragraph{Gradient of population loss.}
First note that $\widetilde{H}$, $\widetilde{D}$ and $\cL$ are all differentiable by differentiability under the integral sign. Using the envelope theorem, it then comes whenever $\widetilde{D}(w_{m+1})>0$ from Equation~\eqref{eq:popvalue} that
\begin{equation}\label{eq:envelope}
\nabla_{w_{m+1}}\cL(a(W),W) = -\frac{\widetilde{H}(w_{m+1})}{\widetilde{D}(w_{m+1})} \left(2\nabla \widetilde{H}(w_{m+1})-\frac{\widetilde{H}(w_{m+1})}{\widetilde{D}(w_{m+1})}\nabla \widetilde{D}(w_{m+1})\right).
\end{equation}
Although $\widetilde{D}$ and $D$ coincide on the sphere, their gradient does not (and similarly with $\widetilde{H},H$). This equivalence is only true for their Riemannian gradient on the sphere, i.e.,
\begin{equation*}
P_{w^\perp}\nabla \widetilde D(w) = P_{w^\perp}\nabla D(w) \quad \text{for any }w\in\bS_{d-1}.
\end{equation*}
A similar equality holds for $\nabla H$ and $\nabla \widetilde{H}$. Now, thanks to Lemma~\ref{lemma:constantnorm} (or a similar argument in the population case), we also have $w_{m+1}^\top \nabla_{w_{m+1}}\cL(a(W),W)=0$, or said otherwise, $P_{w^\perp}\nabla_{w_{m+1}}\cL(a(W),W) = \nabla_{w_{m+1}}\cL(a(W),W)$. So that finally, Equation~\eqref{eq:envelope} yields for any $w_{m+1}\in\bS_{d-1}$,
\begin{align*}
\nabla_{w_{m+1}}\cL(a(W),W) &= -\frac{\widetilde{H}(w_{m+1})}{\widetilde{D}(w_{m+1})} \left(2P_{w^\perp}\nabla \widetilde{H}(w_{m+1})-\frac{\widetilde{H}(w_{m+1})}{\widetilde{D}(w_{m+1})}P_{w^\perp}\nabla \widetilde{D}(w_{m+1})\right)\\
& =  -\frac{H(w_{m+1})}{D(w_{m+1})} \left(2P_{w^\perp}\nabla H(w_{m+1})-\frac{H(w_{m+1})}{D(w_{m+1})}P_{w^\perp}\nabla D(w_{m+1})\right).
\end{align*}
\end{proof}

\begin{corollary}\label{coro:populationODE}
If $\bW^\circ$ follows the population dynamics given by Equation~\eqref{eq:ODEpop} and Assumption~\ref{ass:orthofeatures} holds, then for any $t\geq 0$:
\begin{align*}
&\frac{\df}{\df t} \bw_{m+1}^{\circ\,\top}w_{m+1}^{\star} = \frac{H(\bw_{m+1}^{\circ})}{D(\bw_{m+1}^{\circ})} w_{m+1}^{\star\,\top}P_{\bw_{m+1}^{\circ}(t)^\perp} \Psi(\bw_{m+1}^{\circ}(t)),\\
&\frac{\df}{\df t}\|P_m \bw_{m+1}^{\circ}(t)\| \leq \frac{H(\bw_{m+1}^{\circ})}{D(\bw_{m+1}^{\circ})} \|P_m P_{\bw_{m+1}^{\circ}(t)^\perp} \Psi(\bw_{m+1}^{\circ}(t))\|.
\end{align*}
\end{corollary}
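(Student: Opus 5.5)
This corollary follows from Lemma~\ref{lemma:populationloss} and the chain rule; the main issue is that the dynamics live on the sphere, so both derivatives have to be handled with the Riemannian form of the gradient. Write $w(t)=\bw^\circ_{m+1}(t)$ for brevity. By Lemma~\ref{lemma:constantnorm}, or the same computation for the population loss using $\nabla_a\cL(a(W),W)=0$, $w(t)$ stays on $\bS_{d-1}$. We need $D(w(t))>0$ along the trajectory so that Lemma~\ref{lemma:populationloss} applies; this is the standing condition under which $a(W)$ is well defined and the ODE \eqref{eq:ODEpop} makes sense. On this set the lemma gives
\begin{equation*}
\dot w(t) = -\nabla_{w_{m+1}}\cL(a(W),W)=\frac{H(w)}{D(w)}P_{w^\perp}\Psi(w).
\end{equation*}
I would also note that $\cL$ is differentiable in the population case, so the inclusion reduces to an equality and $w$ is $C^1$.

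\textbf{First identity.} Since $w_{m+1}^\star$ is fixed, $\frac{\df}{\df t} w^\top w_{m+1}^\star = w_{m+1}^{\star\,\top}\dot w$. Substituting the expression for $\dot w$ gives exactly the claimed formula, with no inequality needed.

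\textbf{Second claim.} Write $\frac{\df}{\df t}\|P_m w\|=\frac{(P_m w)^\top P_m\dot w}{\|P_m w\|}$ whenever $P_m w\neq 0$. By Cauchy--Schwarz this is at most $\|P_m\dot w\|$, and $P_m\dot w = \frac{H(w)}{D(w)}P_mP_{w^\perp}\Psi(w)$. This gives the bound with an absolute value on the ratio, i.e.\ $\bigl|\frac{H}{D}\bigr|\,\|P_mP_{w^\perp}\Psi\|$. To drop the absolute value as stated, I would either note that $\frac{H(w)}{D(w)}\geq 0$ in the regime of interest (which is where the corollary is later used), or read the inequality with $\bigl|\frac{H}{D}\bigr|$.

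\textbf{Main obstacle.} The delicate point is the case $P_m w(t)=0$, where $t\mapsto\|P_m w(t)\|$ may fail to be differentiable. I would handle it by interpreting the derivative as an upper Dini derivative. The map $t\mapsto\|P_m w(t)\|$ is Lipschitz, and $\|P_m w(t+h)\|\le \|P_m w(t)\|+\|P_m(w(t+h)-w(t))\|$, so its upper Dini derivative at every $t$ is at most $\|P_m\dot w(t)\|$. This is sufficient for the later Grönwall-type comparison arguments.
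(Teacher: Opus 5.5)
Your proof is correct and follows the paper's route. The first identity is a direct substitution of Lemma~\ref{lemma:populationloss} into \eqref{eq:ODEpop}. The second comes from $\frac{\df}{\df t}\|P_m w\|\leq\|P_m\dot w\|$: the paper justifies this through absolute continuity and almost-everywhere differentiability, while you use an upper Dini derivative, which works at every $t$.

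Your observation that the bound really carries $\bigl|\frac{H}{D}\bigr|$ is a correct refinement that the paper's proof passes over. As written, the statement therefore needs $H/D\geq 0$, which is the regime where the paper later applies it.
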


\begin{proof}
The first point is a direct application of Lemma~\ref{lemma:populationloss} to Equation~\eqref{eq:ODEpop}. For the second point, $\|P_m \bw_{m+1}^{\circ}(t)\|$ is absolutely continuous. So it is differentiable almost everywhere, and
\begin{equation*}
\frac{\df}{\df t}\|P_m \bw_{m+1}^{\circ}(t)\| \leq \left\|\frac{\df}{\df t}P_m \bw_{m+1}^{\circ}(t)\right\|,
\end{equation*}
which then allows to conclude with Lemma~\ref{lemma:populationloss}.
\end{proof}

\subsection{Key inequalities}\label{app:keypop}

\begin{lemma}\label{lemma:poprate}
Let Assumption~\ref{ass:orthofeatures} hold. There exist positive universal constants $\underline{D}, c, C, m_0$, such that if $m\geq m_0$, then for any $w\in \cT(c)$:
\begin{enumerate}
\item $\underline{D} \leq D(w) \leq \frac{1}{2}$;
\item $w_{m+1}^{\star\,\top}P_{w^\perp} \Psi(w) \geq c (1-w^\top w_{m+1}^\star)$;
\item $\|P_m P_{w^\perp} \Psi(w)\|\leq C\left(\frac{1}{\sqrt{m}}+\|P_m w\|\right)$;
\end{enumerate}
Moreover, if $\|P_m w\|\leq\frac{1}{4(\pi-1)\sqrt{m}}$ and $w^\top w_{m+1}^\star\geq - \frac{1}{4 \pi m}$, 
\begin{enumerate}[start=4]
\item $H(w) \geq \frac{c}{m}$.
\end{enumerate}
\end{lemma}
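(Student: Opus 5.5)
The plan is to work entirely with explicit closed forms. Write $u_i = w^\top w_i^\star$ for $i\in[m]$ and $s = w^\top w_{m+1}^\star$, so that $\|P_m w\|^2=\sum_i u_i^2$. Plugging the Sherman--Morrison form \eqref{eq:sherman} of $G^{-1}$ into \eqref{eq:HDpop} gives
\begin{equation*}
D(w) = \frac12 - \frac{2\pi}{\pi-1}\Big(\sum_i \kappa(u_i)^2 - \frac{(\sum_i\kappa(u_i))^2}{\pi+m-1}\Big),\qquad H(w) = \kappa(s) - \frac{1}{m+\pi-1}\sum_i\kappa(u_i).
\end{equation*}
The main tool is the Taylor expansion of $\kappa$ around $0$:
\begin{equation*}
\kappa(0)=\tfrac{1}{2\pi},\qquad \kappa'(x) = \tfrac{\pi-\arccos x}{2\pi},\qquad \kappa'(0)=\tfrac14,\qquad \kappa'' = \tfrac{1}{2\pi\sqrt{1-x^2}},
\end{equation*}
so $\kappa(u_i) = \frac1{2\pi} + \frac{u_i}{4} + O(u_i^2)$ uniformly on $|u_i|\le c$.

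\textbf{Item 1.} The upper bound is immediate, since $D=\|\Pi\phi_w\|^2\le\|\phi_w\|^2=\frac12$. For the lower bound, write $\kappa(u_i)=\frac1{2\pi}+\eta_i$. The constant part $\frac{1}{2\pi}\mathbf 1$ contributes $\frac{1}{2\pi}\cdot\frac{m}{\pi+m-1}\le \frac{1}{2\pi}$ through the quadratic form. The fluctuation $\eta$ contributes at most $\frac{2\pi}{\pi-1}\|\eta\|^2 + O(\|\eta\|_1/m)$, and both are $O(\|P_m w\|^2+\|P_mw\|/\sqrt m)$. Hence $D\ge \frac12-\frac1{2\pi}-O(c^2)-O(c/\sqrt m)$, which is at least $\underline D>0$ for small $c$.

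\textbf{Items 2 and 3.} Compute $\nabla H = \kappa'(s)w_{m+1}^\star - \frac{1}{m+\pi-1}\sum_i \kappa'(u_i)w_i^\star$. For $\nabla D$, differentiating gives $\nabla D = -\frac{4\pi}{\pi-1}\sum_i \kappa'(u_i)\big(\kappa(u_i)-\frac{\sum_j\kappa(u_j)}{\pi+m-1}\big)w_i^\star$, which lies in $\Span(w_i^\star)_{i\le m}$.

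For item 2, note that $w_{m+1}^{\star\top}P_{w^\perp} v = w_{m+1}^{\star\top}v - s\, w^\top v$. The term $2\kappa'(s)(1-s^2)$ is at least $c'(1-s)$ because $s\ge -1/4$ gives $1+s\ge 3/4$ and $\kappa'(s)\ge \kappa'(-1/4)>0$. The other terms are of the form $s\, w^\top(\text{vector in }\Span(w_i^\star)_{i\le m})$. Since $\frac{1}{m}\sum_i u_i\kappa'(u_i)$ is $O(\|P_mw\|/\sqrt m)$, and similarly $H/D$ is bounded and $\nabla D$ has coefficients of size $O(\eta_i+\text{mean})$, these terms have size $O(c\,(\|P_mw\|+1/\sqrt m))$.

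The main obstacle is that item 2 requires these cross terms to be bounded by a multiple of $1-s$, not merely to be small. I would use $\|P_m w\|^2\le 1-s^2\le 2(1-s)$. Then every cross term, being linear or quadratic in $u$ and $O(\|u\|^2)+O(\|u\|\cdot\text{small})$, needs rewriting. Concretely, I would expand $\sum_i\kappa'(u_i)u_i = \frac14\sum u_i + O(\|u\|^2)$ and $\frac1m|\sum u_i|\le \|u\|/\sqrt m$. The point is that the $1/m$ prefactor in $\nabla H$, together with the centered structure of $\nabla D$ (the mean of $\kappa(u_i)$ is subtracted), makes each term either $O(\|u\|^2)\le O(1-s)$ or $O(\|u\|/\sqrt m)$. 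The latter needs care: bound it as $\le \|u\|^2/2 + 1/(2m)$ is useless, so instead I would show that the coefficient multiplying $\sum_iu_i/m$ is itself $O(\|u\|)$ or combines with $s\,w^\top w_{m+1}^\star$-type terms. If that fails, I would use $\|u\|\le\sqrt{2(1-s)}$ together with $\|u\|/\sqrt m\le \sqrt{2(1-s)}\cdot\sqrt{(1-s)/(1-s)}$, by treating the regime $1-s\lesssim 1/m$ separately.

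Item 3 is easier. It follows from $\|P_m\nabla H\|\lesssim 1/\sqrt m$, from $\|P_m\nabla D\|\lesssim \|\eta\|+\sqrt m\cdot\frac{|\sum\eta_j|}{m}\lesssim\|P_mw\|$, and from $\|P_m w\|\,|w^\top\Psi|$ being bounded by $C\|P_mw\|$.

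\textbf{Item 4.} Use $\kappa(s)\ge \frac1{2\pi}+\frac s4 - O(s^2)$ for small $|s|$, and $\kappa$ increasing in general. Then
\begin{equation*}
\frac{1}{m+\pi-1}\sum_i\kappa(u_i)\le \frac{m}{m+\pi-1}\Big(\frac1{2\pi}+\frac{\sum u_i}{4m}+O(\|u\|^2/m)\Big)\le \frac{1}{2\pi}-\frac{\pi-1}{2\pi(m+\pi-1)}+\frac{\|u\|}{4\sqrt m}+O(1/m^2).
\end{equation*}
With $\|u\|\le\frac1{4(\pi-1)\sqrt m}$ and $s\ge-\frac1{4\pi m}$, this yields $H\ge \frac{\pi-1}{2\pi(m+\pi-1)}-\frac{1}{16(\pi-1)m}-\frac{1}{16\pi m}-O(1/m^2)\ge c/m$ for $m\ge m_0$.
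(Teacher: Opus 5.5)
\textbf{Item 2 has a genuine gap.} Items 1, 3 and 4 follow essentially the paper's route. In item 4 you use a second-order Taylor bound where the paper uses convexity of $\kappa$, and both work. But item 2, which is the heart of the lemma, is left open, and neither of your fallbacks closes it.

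The missing idea is a cancellation that your description of $\nabla D$ as ``centered'' hides. In fact
$\kappa(u_i)-\frac{\sum_j\kappa(u_j)}{\pi+m-1}=\frac{\pi-1}{2\pi(\pi+m-1)}+\eta_i-\frac{\sum_j\eta_j}{\pi+m-1}$
has a non-centered part of order $1/m$. Incidentally, this also means $\|P_m\nabla D\|\lesssim\frac{1}{\sqrt m}+\|P_mw\|$ rather than $\lesssim\|P_mw\|$; that is harmless for item 3. After multiplication by $-H/D$, this residual pairs with the $\frac{1}{m+\pi-1}$ term of $2\nabla H$. The dangerous contribution to $w_{m+1}^{\star\,\top}P_{w^\perp}\Psi(w)$ is therefore exactly
\[
\frac{2s}{m+\pi-1}\Bigl(1-\frac{H(w)}{D(w)}\Bigr)\sum_{k=1}^m\kappa'(u_k)u_k .
\]
This term is harmless because $D-H$ vanishes at $w_{m+1}^\star$. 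Writing
$D(w)-H(w)=\kappa(1)-\kappa(s)+\Phi^*(\phi_w)^\top G^{-1}\bigl(\Phi^*\phi_{w_{m+1}^\star}-\Phi^*\phi_w\bigr)$
gives
$|D-H|\le\frac12(1-s)+\frac12\|P_mw\|\bigl(\frac{1}{\sqrt m}+\frac{\pi}{\pi-1}\|P_mw\|\bigr)$.
Hence the displayed term is at most $\frac{\|P_mw\|}{\sqrt m}\bigl|1-\frac{H}{D}\bigr|\lesssim(1-s)\bigl(\frac{c}{\sqrt m}+\frac1m\bigr)$, using $\|P_mw\|^2\le 2(1-s)$.

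Your first fallback asserts that this coefficient is $O(\|P_mw\|)$, which is false. At $s=\frac12$ with $P_mw\to0$ one gets $H/D\approx 0.43$, so $s(1-H/D)$ is of order one. What is true, and what suffices, is $O\bigl((1-s)+\|P_mw\|/\sqrt m+\|P_mw\|^2\bigr)$.

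Your second fallback only handles $1-s\ge K/m$, where $\|P_mw\|/\sqrt m\le\sqrt{2/K}\,(1-s)$. For $1-s\ll 1/m$, take $P_mw$ parallel to $\sum_{i\le m}w_i^\star$ with $\|P_mw\|^2=1-s^2$. Without the factor $1-H/D$ the term is then of order $\sqrt{(1-s)/m}\gg 1-s$. So that regime cannot be treated at all without the estimate on $D-H$.

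\textbf{A second, smaller gap in item 2.} The fluctuation part $\frac{H}{D}G^{-1}\eta$ of $\nabla D$ produces a quadratic term of size roughly $\frac{\pi}{4(\pi-1)}|s|\frac{H}{D}\|P_mw\|^2$, with an order-one constant. Moreover, $\|P_mw\|^2$ can be as large as $1-s^2$. So ``$O(\|u\|^2)\le O(1-s)$'' does not yield a positive lower bound by itself. You must check that this constant stays strictly below the $2\kappa'(s)$ of the main term $2\kappa'(s)(1-s^2)$. The paper does this via $\kappa'(-\frac14)>\frac{2\pi}{\pi-1}\kappa'(0)^2=\frac{\pi}{8(\pi-1)}$, which is numerically about $0.21>0.18$. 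Alternatively, note that this term only competes when $1-s^2\lesssim c^2$, i.e.\ $s\approx 1$. There $H/D\approx 1$ and $2\kappa'(s)\approx 1>\frac{\pi}{4(\pi-1)}$.
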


\begin{proof}
\textbf{Preliminaries.} 
By monotonicity of $\kappa'$ for any $w\in\bS_{d-1}$,
\begin{align}
\|\Phi^*(\phi_w) - \frac{1}{2\pi}\mathbf{1}_m\|_2 & = \sqrt{\sum_{k=1}^m \left(\kappa(w^\top w_k^\star)-\kappa(0)\right)^2}\notag\\
& \leq \max_{k} \kappa'(|w^\top w_k^\star|) \sqrt{\sum_{k=1}^m \left(w^\top w_k^\star\right)^2}\notag\\
& \leq \kappa'(\|P_m w\|) \|P_m w\|_2 \label{eq:lipschitzbound1}
\end{align}
where we used the orthogonality property for the last line. Also, $\kappa'$ is upper bounded by $1/2$ in general, which we might also use as a looser but sufficient bound in the proof. 

In the following, we will also rely on the fact that $G^{-1}$ is a positive definite matrix, with its largest eigenvalue equal to $\frac{2\pi}{\pi-1}$ thanks to Equation~\eqref{eq:sherman}; and $G^{-1}\mathbf{1}_m=\frac{2\pi}{\pi+m-1}\mathbf{1}_m$. In consequence, we also have the bound for any $w\in\bS_{d-1}$:
\begin{align}
\|G^{-1} \Phi^*(\phi_w)\|_2 & \leq \frac{1}{2\pi}\|G^{-1}\mathbf{1}_m \|_2 + \| G^{-1}\|_{\op}\ \|\Phi^*(\phi_w)-\frac{1}{2\pi}\mathbf{1}_m\|_2\notag\\
&\leq \frac{1}{\sqrt{m}}+ \frac{\pi}{\pi-1}\|P_m w\|_2, \label{eq:boundnD1}
\end{align}
By orthogonality of the features, another important observation is that
\begin{align*}
\|P_m w\|_2^2 \leq 1-\left(w^\top w_{m+1}^\star\right)^2 \leq 2\left(1-w^\top w_{m+1}^\star\right).
\end{align*}
Let now $w\in\cT(r)$ for some constant $r\in(0,1)$ to be fixed later.

\textbf{1. Bound on $D$.} The upper bound is direct by Equation~\eqref{eq:Dupper1}. 
Define $D_0 = \frac{1}{2} - \frac{1}{(2\pi)^2}\mathbf{1}_m^\top G^{-1}\mathbf{1}_m$. Recall that the Sherman-Morrison formula of Equation~\eqref{eq:sherman} implies that
\begin{align*}
D_0 & = \frac{1}{2}-\frac{m}{2\pi(\pi+m-1)}\\
& =\frac{(\pi+m)(\pi-1)}{2\pi(\pi+m-1)} \geq \frac{\pi-1}{2\pi}.
\end{align*}
From there, observe that
\begin{align}
D_0 - D(w) & =  \Phi^*\phi_w^\top G^{-1} \Phi^*\phi_w - \frac{1}{(2\pi)^2} \mathbf{1}_m^\top G^{-1}\mathbf{1}_m \notag\\
& =  \left(\Phi^*\phi_w+\frac{1}{2\pi} \mathbf{1}_m\right)^\top G^{-1} \left(\Phi^*\phi_w-\frac{1}{2\pi} \mathbf{1}_m\right)\notag\\
& = \frac{1}{\pi} \mathbf{1}_m^\top G^{-1} \left(\Phi^*\phi_w-\frac{1}{2\pi} \mathbf{1}_m\right)+ \left\|\Phi^*\phi_w-\frac{1}{2\pi} \mathbf{1}_m\right\|_{G^{-1}}^2. \label{eq:lipschitzkappa}
\end{align}
so that finally, using the bound on  $\left\|G^{-1}\right\|_{\op}$ and the value of $\frac{1}{\pi}\mathbf{1}_m^\top G^{-1}$,
\begin{align}
|D(w) - D_0| & \leq \frac{2}{\pi+m-1} \mathbf{1}_m^\top \left(\Phi^*\phi_w-\frac{1}{2\pi} \mathbf{1}_m\right) + \frac{2\pi}{\pi-1}\left\|\Phi^*\phi_w-\frac{1}{2\pi} \mathbf{1}_m\right\|_2^2 \notag\\
& \leq \frac{2}{\sqrt{m}}\left\|\Phi^*\phi_w-\frac{1}{2\pi} \mathbf{1}_m\right\|_2 + \frac{2\pi}{\pi-1}\left\|\Phi^*\phi_w-\frac{1}{2\pi} \mathbf{1}_m\right\|_2^2 \notag\\
& \leq \frac{1}{\sqrt{m}} \|P_m w\|_2 + \frac{\pi}{\pi-1}\|P_m w\|_2^2\label{eq:boundD}
\end{align}
where we used Equation~\eqref{eq:lipschitzkappa} for the last line. By definition of $\cT(r)$, $\|P_m w\|_2\leq r$. By noting that $D_0\geq \frac{\pi-1}{2\pi}$, we conclude for the first item when $c$ is small enough.

\textbf{2. Bound on $w_{m+1}^{\star\,\top}P_{w^\perp} \Psi(w)$.} 
First recall that $\Psi(w) =  2\nabla H(w) - \frac{H(w)}{D(w)}\nabla D(w)$. The definitions of $H$ in Equation~\eqref{eq:HDpop}, along with the function $\kappa$ that allows to rewrite $\langle \phi_u,\phi_v\rangle = \kappa(u^\top v)$, yields the gradient:
\begin{align*}
\nabla H (w) = \kappa'(w^\top w_{m+1}^\star) w_{m+1}^\star - \frac{1}{m+\pi-1} \sum_{k=1}^m \kappa'(w^\top w_{k}^\star) w_{k}^\star.
\end{align*}
So that, using the orthogonality of the features,
\begin{align*}
w_{m+1}^{\star\,\top}P_{w^\perp} \nabla H(w) & = \kappa'(w^\top w_{m+1}^\star)  (1-(w^\top w_{m+1}^\star)^2) + \frac{w^\top w_{m+1}^\star}{m+\pi-1} (\sum_{k=1}^m \kappa'(w^\top w_{k}^\star) w^\top w_{k}^\star.
\end{align*}
Similarly, we have by definition of $D$:
\begin{align}\label{eq:nablaD}
\nabla D(w) & = -2 \sum_{k=1}^m \left(G^{-1} \Phi^*(\phi_w)\right)_k \kappa'(w^\top w_k^\star)w_k^\star,
\end{align}
so that,
\begin{align}
w_{m+1}^{\star\,\top}P_{w^\perp} \nabla D(w) & = 2 w_{m+1}^{\star\,\top}w \sum_{k=1}^m \left(G^{-1} \Phi^*(\phi_w)\right)_k \kappa'(w^\top w_k^\star) w^\top w_k^\star \notag.
\end{align}
We thus have
\begin{align}
w_{m+1}^{\star\,\top} P_{w^\perp} \Psi(w) & = 2\kappa'(w^\top w_{m+1}^\star) \left(1-\left(w^\top w_{m+1}^\star\right)^2\right) \notag\\
& \phantom{=}+ 2w^\top w_{m+1}^\star \sum_{k=1}^m \kappa'(w^\top w_k^\star) w^\top w_k^\star \left(\frac{1}{m+\pi-1} - \frac{H(w)}{D(w)} (G^{-1}\Phi^*(\pi_w))_k\right)\notag\\
&\geq 2\kappa'(w^\top w_{m+1}^\star) \left(1-\left(w^\top w_{m+1}^\star\right)^2\right) \notag\\
& \phantom{\geq}- 2\max_{k} |\kappa'(w^\top w_k^\star)| \|P_m w\| \cdot \left\| \frac{1}{m+\pi-1} \mathbf{1}_m - \frac{H(w)}{D(w)}G^{-1}\Phi^*(\phi_w) \right\|,\label{eq:P2popbound1}
\end{align}
where the inequality is just Cauchy-Schwarz. We then decompose as follows
\begin{multline}\label{eq:P2popbound2}
\left\| \frac{1}{m+\pi-1} \mathbf{1}_m - \frac{H(w)}{D(w)}G^{-1}\Phi^*(\phi_w) \right\|_2  \leq \|G^{-1}\|_{\op} \left\|\frac{1}{m+\pi-1} G\mathbf{1}_m - \Phi^*(\phi_w)\right\| \\ + \left|1-\frac{H(w)}{D(w)}\right| \|G^{-1}\Phi^*(\phi_w)\|.
\end{multline}
Recall that $\|G^{-1}\|_{\op}\leq \frac{2\pi}{\pi-1}$, $\|G^{-1}\Phi^*(\phi_w)\|\leq \frac{1}{\sqrt{m}}+\frac{\pi}{\pi-1}\|P_m w\|$, and by Equation~\eqref{eq:lipschitzbound1}
\begin{align*}
\left\|\frac{1}{m+\pi-1} G\mathbf{1}_m - \Phi^*(\phi_w)\right\| & = \left\|\frac{1}{2\pi} \mathbf{1}_m - \Phi^*(\phi_w)\right\| \\
& \leq \kappa'(\|P_m w\|) \|P_m w\|.
\end{align*}
It thus remains to bound $\left|1-\frac{H(w)}{D(w)}\right|$, or equivalently the difference $|D(w) - H(w)|$ as $D$ is lower bounded by $\underline{D}$. It then comes, using the bounds above, that
\begin{align}
|D(w) - H(w)|  &= \left|\kappa(1) - \kappa(w^\top w_{m+1}^\star) + \Phi^*(\phi_w)^\top G^{-1} \left(\Phi^*(\phi_{w_{m+1}^\star}) -\Phi^*(\phi_{w}) \right)\right| \notag\\
& \leq \max_{x\in[-1,1]} \kappa'(x) (1-w^\top w_{m+1}^\star) + \|G^{-1}\Phi^*(\phi_w)\| \cdot \|\Phi^*(\phi_{w_{m+1}^\star}) -\Phi^*(\phi_{w})\|\notag\\
& \leq \frac{1}{2}(1-w^\top w_{m+1}^\star) +  \frac{1}{2}\|P_m w\| \left(\frac{1}{\sqrt{m}}+\frac{\pi}{\pi-1}\|P_m w\|\right).\label{eq:ratioHD}
\end{align}
So wrapping things up back to Equation~\eqref{eq:P2popbound1}, and using the fact that $\|P_m w\|^2 \leq (1-(w^\top w_{m+1}^\star)^2)$, there is a universal constant $C$ such that, as long as $D(w)\geq \underline{D}$,
\begin{multline*}
\left\| \frac{1}{m+\pi-1} \mathbf{1}_m - \frac{H(w)}{D(w)}G^{-1}\Phi^*(\phi_w) \right\|_2  \leq \kappa'(\|P_m w\|)\frac{2\pi}{\pi-1}\|P_m w\| \\
+ C \left(1-w^\top w_{m+1}^\star + \frac{\|P_m w\|}{m}\right).
\end{multline*}
And plugging it back in Equation~\eqref{eq:P2popbound1}, again using that $\|P_m w\|^2 \leq 1-(w^\top w_{m+1}^\star)^2$ and that $1-w^\top w_{m+1}^\star \leq \frac{4}{3}(1-(w^\top w_{m+1}^\star)^2)$ on the considered set then yields,
\begin{multline*}
w_{m+1}^{\star\,\top} P_{w^\perp} \Psi(w) \geq 2 \left(\kappa'(w^\top w_{m+1}^\star)-\frac{2\pi}{\pi-1}\kappa'(\|P_m w\|)^2-\frac{2C}{3\underline{D}}\left(\|P_m w\|+\frac{1}{m}\right)\right) \cdot\\ \left(1-(w^\top w_{m+1}^\star)^2\right)
\end{multline*}
Now note that $\kappa'(-\frac{1}{4}) \geq \frac{1}{4} > \frac{2\pi}{\pi-1}\kappa'(0)^2 = \frac{\pi}{8(\pi-1)}$, so we can choose universal positive constants $m_0$ large enough and $c$ small enough, such that if $m\geq m_0$ and $w\in\cT(c)$,
\begin{equation}\label{eq:P2popbound3}
w_{m+1}^{\star\,\top} P_{w^\perp} \Psi(w) \geq c\left(1-w^\top w_{m+1}^\star\right).
\end{equation}

\textbf{3. Bound on $\|P_m P_{w^\perp} \Psi(w)\|$.} 
Similarly to 2., we have by orthogonality
\begin{align*}
P_m \nabla H(w) & = - \frac{1}{m+\pi-1} \sum_{k=1}^m \kappa'(w^\top w_{k}^\star) w_{k}^\star,
\end{align*}
so that its norm is bounded by $\frac{1}{2\sqrt{m}}$. Moreover by orthogonality and thanks to Equation~\eqref{eq:boundnD1},
\begin{align}
\left\|\nabla D(w)\right\| & \leq  \|G^{-1} \Phi^*(\phi_w)\|_2 \notag\\
&\leq \frac{1}{\sqrt{m}}+ \frac{\pi}{\pi-1}\|P_m w\|_2, 
\end{align}
where we used Equation~\eqref{eq:boundnD1}.  
In consequence, $\|P_m \nabla D(w) \|_2 \leq \frac{1}{\sqrt{m}} + \frac{5}{2}\|P_m w\|_2$, so that
\begin{align*}
\|P_m P_{w^\perp} \Psi(w)\| & \leq  \|P_m \Psi(w)\| + |w^\top \Psi(w)| \ \|P_m w\|_2\\
& \leq \frac{1}{\sqrt{m}} +  \frac{1}{\underline{D}} \left( \frac{1}{\sqrt{m}} + \frac{5}{2}\|P_m w\|_2\right) + \|\Psi(w)\| \ \|P_m w\|_2.
\end{align*}
We then conclude by observing that $\|\Psi(w)\|_2 \leq \sqrt{2} + \frac{1}{\underline{D}} \|\nabla D(w) \|_2$ on $\cT(c)$, so that we recover the final bound on $\|P_m P_{w^\perp} \Psi(w)\|$ for some constant $C$.

\textbf{4. Bound on $H$.} 
Lastly, $\kappa$ is convex, so that for any $x\in[-1,1]$,
\begin{equation*}
\kappa(x) \geq \kappa(0)+\kappa'(0) x = \frac{1}{2\pi} + \frac{x}{4}.
\end{equation*}
Moreover it is below its chords, so for $x\in[0,1]$,
\begin{equation*}
\kappa(x) \leq \kappa(0)+(\kappa(1)-\kappa(0))x = \frac{1}{2\pi} + \frac{\pi-1}{2\pi}x.
\end{equation*}
Using these two inequalities, and the monotonicity of $\kappa$, it comes by definition of $H$ that
\begin{align*}
H(w) & =\kappa(w^\top w^\star_{m+1}) - \frac{1}{m+\pi-1}\sum_{k=1}^m \kappa(w^\top w^\star_k)\\
&\geq \frac{1}{2\pi}(1-\frac{m}{m+\pi-1}) + \frac{w^\top w^\star_{m+1}}{4} + \frac{\pi-1}{2\pi(\pi+m-1)}\sum_{k=1}^m |w^\top w^\star_k| \\
&\geq \frac{\pi-1}{2\pi(m+\pi-1)} -\frac{w^\top w^\star_{m+1}}{4m}- \frac{\pi-1}{2\pi m} \|P_m w\|_1\\
& \geq  \frac{1}{4\pi m}(1-\pi m \ w^\top w^\star_{m+1} - 2(\pi-1)\sqrt{m}\|P_m w\|_2),
\end{align*}
which allows to conclude for the considered values of $\|P_m w\|$ and $w^\top w^\star_{m+1}$.
\end{proof}

\begin{lemma}\label{lemma:initpop}
Let $\delta\in(0,1/2)$. If Assumption~\ref{ass:orthofeatures} holds, $\breve{w}\sim \unif(\bS_{d-1})$ and
\begin{equation*}
d \geq C m^2\ln(1/\delta),
\end{equation*}
where $C$ is a positive universal constant. Then with probability at least $1-\delta$,
\begin{equation*}
\|P_m \breve{w}\|\leq \frac{1}{4(\pi-1)\sqrt{m}}\quad \text{and}\quad \breve{w}^\top w_{m+1}^\star \geq -\frac{1}{4 \pi \ m}.
\end{equation*}
\end{lemma}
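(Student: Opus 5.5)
We prove the two inequalities separately and combine them with a union bound, each failing with probability at most $\delta/2$. The starting point is the Gaussian representation $\breve{w} = g/\|g\|$ with $g\sim\cN(0,\id)$. By Assumption~\ref{ass:orthofeatures}, we may complete $(w_i^\star)_{i\in[m+1]}$ into an orthonormal basis of $\R^d$. The coordinates $g_i = g^\top w_i^\star$, $i\in[m+1]$, are then i.i.d.\ $\cN(0,1)$ and independent of the remaining coordinates. Consequently $\|P_m \breve{w}\| = \|g_{1:m}\|/\|g\|$ and $\breve{w}^\top w_{m+1}^\star = g_{m+1}/\|g\|$.

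First, we lower bound the denominator. By standard $\chi^2$ concentration (e.g.\ Laurent--Massart), $\|g\|^2 \geq d/2$ with probability at least $1-e^{-d/16}$. Since $d\geq Cm^2\ln(1/\delta)$, this failure probability is at most $\delta/6$ once $C$ is large enough.

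Second, we control $\|g_{1:m}\|$, a $\chi$ variable with $m$ degrees of freedom. With probability at least $1-\delta/6$,
\begin{equation*}
\|g_{1:m}\|^2 \leq m + 2\sqrt{m\ln(6/\delta)} + 2\ln(6/\delta) \leq 5m\ln(6/\delta).
\end{equation*}
On the intersection with the first event,
\begin{equation*}
\|P_m\breve{w}\|^2 \leq \frac{10m\ln(6/\delta)}{d} \leq \frac{10\ln(6/\delta)}{Cm\ln(1/\delta)}.
\end{equation*}
Because $\delta<1/2$, we have $\ln(6/\delta)\leq c'\ln(1/\delta)$ for a universal $c'$. The right-hand side is therefore at most $\frac{1}{16(\pi-1)^2 m}$ for $C$ large, which is the first claim.

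Third, we handle the one-sided bound on $g_{m+1}$. We only need $g_{m+1}\geq -\frac{\|g\|}{4\pi m}$. On the norm event this is implied by $g_{m+1}\geq -\frac{\sqrt{d/2}}{4\pi m}$, and the Gaussian tail gives failure probability at most $\exp\bigl(-\frac{d}{64\pi^2 m^2}\bigr)$. Using $d\geq Cm^2\ln(1/\delta)$ with $C\geq 64\pi^2\ln(6)/\ln 2$ or similar, this is at most $\delta/6$. Summing the three failure probabilities gives at most $\delta/2\leq\delta$.

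The main obstacle is bookkeeping rather than substance. The scale $d\gtrsim m^2$ is dictated by the third step, since the threshold $1/m$ on $\breve{w}^\top w_{m+1}^\star$ is much finer than the typical magnitude $1/\sqrt d$. The projection bound only needs $d\gtrsim m\ln(1/\delta)$. The logarithmic factors require care: $\ln(6/\delta)$ must be absorbed into $\ln(1/\delta)$, which uses $\delta<1/2$. I would track these constants explicitly but expect no difficulty.
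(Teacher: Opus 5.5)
Your proof is correct, but it takes a different route from the paper. The paper applies concentration of Lipschitz functions on the sphere (Theorem 5.1.4 in Vershynin's book) directly to two functions:
\begin{itemize}
\item $x\mapsto\|P_m x\|^2$, which has mean $m/d$ and is $2$-Lipschitz;
\item $x\mapsto x^\top w_{m+1}^\star$.
\end{itemize}
This gives tails of the form $\exp(-cdt^2)$, and the paper takes $t\asymp 1/m$ in both cases. The argument is short, relies on a single black-box theorem, and never has to handle a random normalisation.

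You instead write $\breve{w}=g/\|g\|$. This costs one extra event, the lower bound $\|g\|^2\geq d/2$. In exchange, every step is an elementary $\chi^2$ or Gaussian tail bound with explicit constants. Your reduction of the inequality on $\breve{w}^\top w_{m+1}^\star$ to $g_{m+1}\geq -\sqrt{d/2}/(4\pi m)$ on the norm event goes in the right direction.

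Your route is also sharper if you keep the Laurent--Massart bound in the form $m+2\sqrt{mL}+2L$, with $L=\ln(6/\delta)$, rather than the crude $5mL$. The projection condition then holds as soon as $d\gtrsim m^2+m\ln(1/\delta)$. By contrast, the paper's application to $\|P_mx\|^2$ controls fluctuations only at scale $d^{-1/2}$, so it needs $d\gtrsim m^2\ln(1/\delta)$ for that part as well.

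Your closing remark that the projection bound ``only needs $d\gtrsim m\ln(1/\delta)$'' is incorrect. Since $\bE\|P_m\breve{w}\|^2=m/d$ must fall below $\frac{1}{16(\pi-1)^2m}$, that part also requires $d\gtrsim m^2$. Indeed, your own display uses $d\geq Cm^2\ln(1/\delta)$ to close it.

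A minor constant point: for $\delta<1/2$ one has $\ln(6/\delta)\leq(1+\ln 6/\ln 2)\ln(1/\delta)$. So the third step needs $C\geq 64\pi^2(1+\ln 6/\ln 2)$ rather than $64\pi^2\ln 6/\ln 2$, which your ``or similar'' covers. Neither point affects the validity of the proof.
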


\begin{proof}
\textbf{Tail of $\|P_m w\|_2$.} First note that by orthogonality and symmetry, $\bE[\|P_m w\|^2_2] = \frac{m}{d}$. As $x\mapsto \|P_m x\|^2$ is $2$-Lipschitz on the unit sphere, we can apply Theorem 5.1.4 of \citet{vershynin2018high}, so that there exists a universal constant $c>0$, such that for any $t\geq0$,
\begin{align*}
\bP\left(\|P_m \breve{w}\| \geq \sqrt{t + \frac{m}{d}}\right) & \leq \exp\left(-cd t^2\right).
\end{align*}
Denote for shortness $r=\frac{1}{4(\pi-1)}$. Taking $t$ such that $\sqrt{t+\frac{m}{d}}=\frac{r}{\sqrt{m}}$ -- $t\geq0$ when $r\leq \frac{m}{\sqrt{d}}$. It then comes for $d\geq 2\frac{m^2}{r^2}$,
\begin{align*}
\bP\left(\|P_m \breve{w}\| \geq \frac{r}{\sqrt{m}}\right) & \leq \exp\left(-cd(r^2/m-m/d)^2\right)\\
&\leq  \exp\left(-cd\frac{r^4}{4m^2}\right)
\end{align*}
Thus, if $d\geq C m^2\ln(1/\delta)$ for a large enough universal constant $C$, $\bP\left(\|P_m \breve{w}\| \geq \frac{r}{\sqrt{m}}\right)\leq \delta/2$.

\textbf{Concentration of $\breve{w}^\top w_{m+1}^\star$.} 
Similarly, $\bE[\breve{w}^\top w_{m+1}^\star]=0$, so that applying Theorem 5.1.4 of \citet{vershynin2018high} again, for any $t\geq 0$:
\begin{align*}
\bP\left(|\breve{w}^\top w_{m+1}^\star|\geq t\right) &\leq \exp\left(-cd t^2\right).
\end{align*}
In particular,
\begin{align*}
\bP\left(|\breve{w}^\top w_{m+1}^\star|\geq \frac{1}{4 \pi m}\right) &\leq \exp\left(-cd \frac{1}{(4 \pi)^2 m^2}\right).
\end{align*}
Thus, if $d\geq C m^2\ln(1/\delta)$ for a large enough universal constant $C$, $\bP\left(|\breve{w}^\top w_{m+1}^\star|\geq \frac{\eta}{m}\right)\leq \delta/2$, which concludes by taking a union bound on the two events above.
\end{proof}

\subsection{Proof of Theorem~\texorpdfstring{\ref{thm:mainpop}}{1}}\label{app:popproof}

Consider the positive universal constants $C,c, \underline{D}$ and $m_0$ appearing in Lemma~\ref{lemma:poprate}.

Lemma~\ref{lemma:initpop} yields that with probability at least $1-\delta$, 
\begin{equation}\label{eq:goodinitpop}
\|P_m w_{m+1}(0)\|\leq \frac{1}{4(\pi-1)\sqrt{m}}\quad \text{and}\quad w_{m+1}(0)^\top w_{m+1}^\star \geq -\frac{1}{4 \pi \ m}.
\end{equation}
with the considered regime for $d$---for a large enough universal constant $C$. Let us assume that the random event of Equation~\eqref{eq:goodinitpop} holds in the following. 

Define now $\tau \coloneqq \inf \left\{t\geq 0 : w_{m+1}(t)\not\in \cT\left(c\right)\right\}$. Thanks to Lemma~\ref{lemma:poprate}, we have for any $t\in[0,\tau]$, a positive constant $\underline{D}$ such that $D(w_{m+1}(t))\in[\underline{D}, \frac{1}{2}]$. 
We then also have by the fourth point of Lemma~\ref{lemma:poprate} that $H(w_{m+1}(0))\geq \frac{c}{m}$. 
Note that thanks to Lemma~\ref{lemma:populationloss}, 
\begin{equation*}
\cL(a(W(t)),W(t)) = D(w_{m+1}^\star)-\frac{H(w_{m+1}(t))^2}{D(w_{m+1}(t))},
\end{equation*}
so that by chain rule, $\frac{H(w_{m+1}(t))^2}{D(w_{m+1}(t))}$ is increasing over time. In particular, $\frac{H(w_{m+1}(t))^2}{D(w_{m+1}(t))}\geq 2\frac{c^2}{m^2}$ on $[0,\tau]$. By continuity, $H(w_{m+1}(t))$ is of constant sign, so that, using the bound on $D$ (and $H$)
 \begin{equation}
\frac{H(w_{m+1}(t))}{D(w_{m+1}(t))}\in \left[\frac{Zc}{m}, \frac{1}{2\underline{D}}\right] \qquad \text{for any }t\in[0,\tau].
 \end{equation}
We can thus reparametrize time, i.e., define 
\begin{equation*}
\gamma : s \mapsto \int_{0}^s\frac{D(w_{m+1}(u))}{H(w_{m+1}(u))}\df u \qquad \text{and}\qquad\tw(s) \coloneqq w_{m+1}(\gamma(s)).
\end{equation*}
The function $\gamma$ is increasing and $ 2\underline{D} s \leq \gamma(s)\leq \frac{m}{2c} s$. By reparametrization, it comes for any $s\geq 0$:
 \begin{align}
 \dot{\tw}(s) & = \frac{D(w_{m+1}(\gamma(s)))}{H(w_{m+1}(\gamma(s)))} \dot{w}_{m+1}(\gamma(s))\notag\\
 & = P_{w_{m+1}(\gamma(s))^\perp}\Psi(w_{m+1}(\gamma(s))). \label{eq:reparamODE}
 \end{align}
Let\footnote{Letting $\overline{s}=\infty$ if $\tau=\infty$.} $\overline{s} = \gamma^{-1}(\tau)$. In particular, $\tw(s)\in\cT(c)$ for any $s\in[0,\overline{s}]$. Thanks to Equation~\eqref{eq:reparamODE} and Lemma~\ref{lemma:poprate}, for any $s\in[0,\overline{s}]$:
\begin{gather}
\frac{\df }{\df s}w_{m+1}^{\star\,\top}\tw(s) \geq c (1-w_{m+1}^{\star\,\top}\tw(s)),\label{eq:growthpop0}\\
\frac{\df}{\df s}\|P_m \tw(s)\|\leq C_0\left(\frac{1}{\sqrt{m}}+\|P_m w\|\right),
\end{gather}
for some constant $C_0$. 
In particular, the first point implies that $(1-w_{m+1}^{\star\,\top}\tw(s))$ is increasing over time, and by a Gr\"onwall argument, for any $s\in[0,\overline{s}]$ 
\begin{equation}\label{eq:growthpop1}
1- w_{m+1}^{\star\,\top}\tw(s) \leq \frac{5}{4}	e^{-cs}.
\end{equation}
Moreover the second point implies, again by a Gr\"onwall argument that for any $s\in[0,\overline{s}]$:
\begin{equation*}
\|P_m \tw(s)\| \leq \left(\|P_m \tw(0)\|+\frac{1}{\sqrt{m}}\right)e^{C_0 s}.
\end{equation*}
Moreover, $\|P_m \tw(s)\| \leq \sqrt{2(1-w_{m+1}^{\star\,\top}\tw(s))}$, so that for any $s\in[0,\overline{s}]$:
\begin{align*}
\|P_m \tw(s)\| & \leq 2\min\left(\frac{e^{C_0 s}}{\sqrt{m}} , e^{-cs/2} \right)\\
& = 2 \exp\left( -\frac{c/2}{C_0+c/2}\ln(\sqrt{m})\right).
\end{align*}
Thus, for a large enough choice of $m_0$, $\|P_m \tw(s)\| \leq \frac{c}{2}$ on $[0, \overline{s}]$, so that $\overline{s} =  \infty$, i.e., $w_{m+1}(t) \in \cT(c)$ for any $t\geq 0$. 

Moreover, Equation~\eqref{eq:ratioHD} in the proof of Lemma~\ref{lemma:poprate} shows that for a small enough positive universal constant $\eta$, $H(w)\geq \frac{\underline{D}}{2}$ for any $w\in \bS_{d-1}$ such that $w^\top w_{m+1}^\star \geq 1 - \eta$. In consequence, letting now $s_\eta = \inf \{s \geq 0 \mid w_{m+1}^{\star\,\top}\tw(s) \geq 1 - \eta\}$, Equation~\eqref{eq:growthpop1} and the fact that $\gamma(s)\leq \frac{m}{2c} s$ imply that $\gamma(s_\eta) \leq C_0 m$ for a large enough universal constant $C_0$. Moreover, for any $s\geq s_\eta$,
\begin{equation*}
\gamma'(s) = \frac{D(w_{m+1}(s))}{H(w_{m+1}(s))} \leq \frac{1}{\underline{D}}.
\end{equation*}
And in particular, for any $t\geq \gamma(s_\eta)$, Equation~\eqref{eq:growthpop0} now implies:
\begin{align*}
1- \bw_{m+1}^\circ(t)^\top w_{m+1}^\star & = 1- \tw(\gamma^{-1}(t))^\top w_{m+1}^\star \\
& \leq \eta \exp\left(-c(\gamma^{-1}(t)-s_\eta)\right) \\
& \leq \exp\left(-c \underline{D}(t-\gamma(s_\eta))\right),
\end{align*}
which concludes the proof. \qed

\section{Proof of Section~\texorpdfstring{\ref{sec:main}}{3.2}}\label{app:mainproof}

In this whole section, we consider Assumption~\ref{ass:orthofeatures} and $\bW^\circ$ a solution of the differential inclusion \eqref{eq:oneneurondyn}.

\subsection{Additional notations and preliminaries}\label{app:preliempirical}
Besides the quantities introduced in Appendix~\ref{app:prelipopulation}, we introduce here other useful notions.

For a random variable $Y$ and $\alpha\in\{1,2\}$, we use the standard notation $\|Y\|_{\psi_\alpha}$ for its $\alpha$-Orlicz norm, which is defined by
\begin{equation*}
\|Y\|_{\psi_\alpha} = \inf \{t> 0 \mid \bE[\exp(|Y|^\alpha/t^\alpha)]\leq 2 \}.
\end{equation*}
For a random vector $X$ in $\R^d$, we also define its Orlicz norm as $\|X\|_{\psi_\alpha} = \sup_{a\in\bS_{d-1}} \|a^\top X\|_{\psi_\alpha}$.

We define for any $w\in\bS_{d-1}$ the following empirical matrices/vectors, given by their component for any $i\in[n],j\in[m]$:
\begin{equation*}
(F_n)_{ij} = \sigma(x_i^\top w_{j}^\star),\qquad Z_n(w)_i = \sigma(x_i^\top w),
\end{equation*}
so that $F_n\in\R^{n\times m}$ and $Z_n(w)\in\R^{n}$. In the whole proof, we will assume $F_n^\top F_n$ is invertible, which is guaranteed with high probability as soon as $n\gtrsim m$.
\begin{assumption}\label{ass:Finvertible}
$F_n^\top F_n$ is invertible.
\end{assumption}
Whenever Assumption~\ref{ass:Finvertible} holds, we define the projection matrix $\Pi_n = \id[m]-F_n\left(F_n^\top F_n\right)^{-1}F_n^\top$, which is the orthogonal projection onto $\{Z_n(w^\star_{1}), \ldots, Z_n(w^\star_{m})\}^\perp$. 
We also define the following key quantities of the dynamics for any $w\in\bS_{d-1}$:
\begin{equation} \label{eq:HDn}
H_n(w)=\frac{1}{n}Z_n(w_{m+1}^\star)^\top \Pi_n Z_n(w),\qquad D_n(w) = \frac{1}{n}Z_n(w)^\top \Pi_n Z_n(w).
\end{equation}

\subsection{Empirical loss: expression and gradient}

\begin{lemma}\label{lemma:empiricalloss}
Let Assumption~\ref{ass:Finvertible} hold and $w_{m+1}\in\bS_{d-1}$ such that $D_n(w_{m+1})>0$. Then, noting $W=[w_1^{\star\,\top}, \ldots, w_m^{\star\,\top}, w_{m+1}^{\top}]$, $a_n(W)$ is uniquely defined and satisfies
\begin{equation*}
\begin{cases}	a_n(W)_{m+1} = \frac{H_n(w_{m+1})}{D_n(w_{m+1})}\\
a_n(W)_{1:m} = \mathbf{1}_m + (F_n^\top F_n)^{-1}\left(F_n^\top Z_n(w_{m+1}^\star)-\frac{H_n(w_{m+1})}{D_n(w_{m+1})}F_n^\top Z_n(w_{m+1})\right).
\end{cases}
\end{equation*}
Moreover, the loss and its subdifferential satisfy
\begin{gather*}
\cL_n(a_n(W),W) = \left(D_n(w_{m+1}^\star)-\frac{H_n(w_{m+1})^2}{D_n(w_{m+1})}\right),\\
\partial_{w_{m+1}}\cL_n(a_n(W),W) \subseteq -\frac{1}{n}\frac{H_n(w_{m+1})}{D_n(w_{m+1})}  J_n(w_{m+1})^\top\Pi_n \left(Z_n(w_{m+1}^\star) - \frac{H_n(w_{m+1})}{D_n(w_{m+1})}  Z_n(w_{m+1})\right),
\end{gather*}
where $J_n(w) = \prod_{i=1}^n J_n^{(i)}(w) \subset \R^{n\times d}$ and $J_n^{(i)}(w) = \partial\sigma(w^\top x_i) x_i \subset \R^{d}$ for each $i\in[n]$.
\end{lemma}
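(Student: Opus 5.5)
We mirror the proof of Lemma~\ref{lemma:populationloss}, replacing the Hilbert space $L^2(\cN(0,\id))$ by $\R^n$ with the scaled inner product $\frac1n\langle\cdot,\cdot\rangle$. The operator $\Phi$ becomes $F_n$ and the projection $\Pi$ becomes $\Pi_n$. In this notation the labels are $Y = F_n\mathbf{1}_m + Z_n(w_{m+1}^\star)$ and the predictions are $F_n a_{1:m} + a_{m+1}Z_n(w_{m+1})$, so that
\[
\cL_n(a,W)=\tfrac1n\bigl\|F_n(a_{1:m}-\mathbf{1}_m)+a_{m+1}Z_n(w_{m+1})-Z_n(w_{m+1}^\star)\bigr\|^2 .
\]

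\textbf{Computing $a_n(W)$.} Fix $a_{m+1}$. Under Assumption~\ref{ass:Finvertible} the minimization over $a_{1:m}$ is a least-squares problem with the unique solution
\[
a_{1:m}=\mathbf{1}_m+(F_n^\top F_n)^{-1}F_n^\top\bigl(Z_n(w_{m+1}^\star)-a_{m+1}Z_n(w_{m+1})\bigr).
\]
The residual is $\Pi_n(a_{m+1}Z_n(w_{m+1})-Z_n(w_{m+1}^\star))$. Using that $\Pi_n$ is symmetric and idempotent, the reduced objective is the quadratic
\[
a_{m+1}^2D_n(w_{m+1})-2a_{m+1}H_n(w_{m+1})+D_n(w_{m+1}^\star).
\]
Here $D_n(w_{m+1}^\star)=\frac1n\|\Pi_nZ_n(w_{m+1}^\star)\|^2$. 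When $D_n(w_{m+1})>0$ this quadratic is strictly convex, so its minimizer is unique and equals $a_{m+1}=H_n/D_n$. The minimum value is $D_n(w_{m+1}^\star)-H_n^2/D_n$. Substituting $a_{m+1}$ back gives the stated formula for $a_{1:m}$. Uniqueness of the joint minimizer follows because the minimization is sequential and each step has a unique solution. Equivalently, one can check directly that $\Sigma\Sigma^\top$ is invertible when $D_n>0$ and $F_n^\top F_n$ is invertible.

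\textbf{Subdifferential.} This is the delicate step, because the reduced loss $w\mapsto \cL_n(a_n(W),W)$ is not smooth and the envelope theorem does not apply directly. I would instead use the definition of the dynamics in Equation~\eqref{eq:twotimescale}: the relevant object is $\partial_{w_{m+1}}\cL_n(a,W)$ evaluated at $a=a_n(W)$ with $a$ held fixed. For fixed $a$, the loss is a smooth function (the square) of the finitely many piecewise-linear maps $w\mapsto\sigma(w^\top x_i)$. The chain rule for Clarke subdifferentials then gives the inclusion
\[
\partial_{w_{m+1}}\cL_n\subseteq \tfrac{2}{n}a_{m+1}\sum_i r_i\,\partial\sigma(w_{m+1}^\top x_i)x_i ,
\]
where $r=\Sigma^\top a - Y$ is the residual vector. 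It is at this point that we only get an inclusion and not an equality. The product set $J_n(w)$ collects exactly these choices, one element of $\partial\sigma(w^\top x_i)x_i$ for each sample $i$.

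\textbf{Plugging in the optimum.} At $a=a_n(W)$ the residual equals $-\Pi_n\bigl(Z_n(w_{m+1}^\star)-\frac{H_n}{D_n}Z_n(w_{m+1})\bigr)$. Combining this with $a_{m+1}=H_n/D_n$ yields the claimed form, up to the constant factor $2$, which the statement absorbs. Writing the loss as $\frac1n\|\cdot\|^2$ introduces the factor $\frac1n$ appearing in the statement.

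The main obstacle I expect is keeping the residual bookkeeping correct. Concretely, one must verify that the optimal residual lies in the range of $\Pi_n$, which follows from the first-order optimality condition in $a_{1:m}$.
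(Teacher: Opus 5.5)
Your proposal is correct and follows the paper's proof essentially step by step: the sequential least-squares reduction to the strictly convex quadratic $a_{m+1}^2D_n(w_{m+1})-2a_{m+1}H_n(w_{m+1})+D_n(w_{m+1}^\star)$, followed by the Clarke chain rule for the partial subdifferential at fixed $a=a_n(W)$, combined with the optimal residual $-\Pi_n\bigl(Z_n(w_{m+1}^\star)-\frac{H_n}{D_n}Z_n(w_{m+1})\bigr)$. Your remark about the factor $2$ is accurate, since the paper's proof also drops it without comment; it only rescales time in the dynamics, so it does not affect the later results.
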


\begin{proof}
The proof here follows similar lines as the one of Lemma~\ref{lemma:populationloss}, so that we sometimes omit a few computations.

\textbf{Computation of $a_n(W)$. }
By definition, $a_n(W)$ minimizes the optimization problem:
\begin{equation*}
\min_{a\in\R^{m+1}} \|F_n a_{1:m}+a_{m+1}Z_n(w_{m+1})-Y\|^2,
\end{equation*}
where $Y=F_n\mathbf{1}_m + Z_n(w_{m+1}^\star)$. For a given $a_{m+1}$, it admits a unique minimizer on its first coordinates, given by $a_{1:m}= (F_n^\top F_n)^{-1}F_n^\top\left(Y - a_{m+1}Z(w_{m+1})\right)$ and reaches the value:
\begin{equation*}
\|\Pi_n\left(Z_n(w_{m+1}^\star) - a_{m+1}Z_n(w_{m+1})\right)\|^2=n\left( a_{m+1}^2 D_n(w_{m+1}) - 2a_{m+1}H_n(w_{m+1}) + D_n(w_{m+1}^\star) \right).
\end{equation*}
This value is then minimal for a unique value of $a_{m+1}$ whenever $D_n(w_{m+1})>0$, which is given by 
\begin{equation*}
a_{m+1} = \frac{H_n(w_{m+1})}{D_n(w_{m+1})}.
\end{equation*}

\textbf{Empirical Loss. }
It then comes the value of $\cL_n$ at this point:
\begin{align*}
\cL_n(a_n(W), W) &= \left( D_n(w_{m+1}^\star) - \frac{H_n(w_{m+1})^2}{D_n(w_{m+1})}\right).
\end{align*}
Moreover, using classical properties of the Clarke subdifferential \citep[see e.g.,][]{bolte2021conservative}, it comes by definition that\footnote{Using the envelope theorem is also possible here, but requires a more advanced version that extends to non-differentiable functions. We rely on simpler arguments here.}
\begin{align*}
\partial_{w_{m+1}}\cL_n(a_n(W),W) &\subseteq \frac{a_{n}(W)_{m+1}}{n} \sum_{i=1}^n (f_{a_n(W),W}(x_k) - y_k) \partial\sigma(w_{m+1}^\top x_k)x_k \\
&=  \frac{a_{n}(W)_{m+1}}{n} J_n(w_{m+1})^\top \left(F_n a_n(W)_{1:m} + a_n(W)_{m+1}Z_n(w_{m+1}) - Y\right).
\end{align*}
Moreover, as seen above by optimality of $a_n(W)$,  
\begin{equation*}
F_n a_n(W)_{1:m} + a_n(W)_{m+1}Z_n(w_{m+1}) - Y = a_n(W)_{m+1} \Pi_n Z_n(w_{m+1}) - \Pi_n Z_n(w_{m+1}^\star),
\end{equation*}
which then allows to conclude with the equality $a_n(W)_{m+1}=\frac{H_n(w_{m+1})}{D_n(w_{m+1})}$.
\end{proof}
In the following, we use for shortness for any $w\in\bS_{d-1}$ the notation:
\begin{equation*}
\Psi_n(w) \coloneqq \frac{1}{n}J_n(w)^\top\Pi_n \left(Z_n(w^\star_{m+1}) - \frac{H_n(w)}{D_n(w)}  Z_n(w)\right),
\end{equation*}
so that $\partial_{w_{m+1}}\cL_n(a_n(W),W) \subseteq -\frac{H_n(w_{m+1})}{D_n(w_{m+1})}\Psi_n(w_{m+1})$.

\begin{corollary}If Assumption~\ref{ass:Finvertible} holds, then for almost any $t\geq 0$, 
\begin{equation*}
\frac{\df}{\df t} \bw_{m+1}^{\circ\,\top}w_{m+1}^{\star} \in \frac{H_n(\bw_{m+1}^{\circ})}{D_n(\bw_{m+1}^{\circ})} w_{m+1}^{\star\,\top}P_{\bw_{m+1}^{\circ}(t)^\perp} \Psi_n(\bw_{m+1}^{\circ}(t)).
\end{equation*}
Moreover, we can also control the component of $\bw_{m+1}^{\circ}$ along the first $m$ features for almost any $t\geq 0$:
\begin{equation*}
\frac{\df}{\df t}\|P_m \bw_{m+1}^{\circ}(t)\| \leq \frac{H_n(\bw_{m+1}^{\circ})}{D_n(\bw_{m+1}^{\circ})} \|P_m P_{\bw_{m+1}^{\circ}(t)^\perp} \Psi_n(\bw_{m+1}^{\circ}(t))\|.
\end{equation*}
\end{corollary}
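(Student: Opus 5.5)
The plan is to plug the description of the Clarke subdifferential from Lemma~\ref{lemma:empiricalloss} into the limit dynamics of Equation~\eqref{eq:oneneurondyn}. I then use the norm-preservation property of Lemma~\ref{lemma:constantnorm} to insert the projection $P_{\bw_{m+1}^{\circ\perp}}$. The first step is to check that Lemma~\ref{lemma:empiricalloss} applies along the trajectory, i.e.\ that $D_n(\bw^\circ_{m+1}(t))>0$; this is implicit in the statement and I take it as part of the standing hypotheses (the trajectory stays where $a_n$ is well defined). At almost every $t$, the limit dynamics together with the inclusion of Lemma~\ref{lemma:empiricalloss} give
\begin{equation*}
\dot{\bw}^\circ_{m+1}(t) \in -\partial_{w_{m+1}}\cL_n(a_n(\bW^\circ(t)),\bW^\circ(t)) \subseteq \frac{H_n(\bw^\circ_{m+1})}{D_n(\bw^\circ_{m+1})}\Psi_n(\bw^\circ_{m+1}(t)),
\end{equation*}
where $\Psi_n$ is understood as a set, since $J_n$ is a product of subdifferentials. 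So $\dot{\bw}^\circ_{m+1}(t)=\frac{H_n}{D_n}g(t)$ for some selection $g(t)\in\Psi_n(\bw^\circ_{m+1}(t))$.

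Next I insert the projection. By the homogeneity argument of Lemma~\ref{lemma:constantnorm}, which applies verbatim to the limit dynamics, $\|\bw^\circ_{m+1}(t)\|=1$ for all $t$. Hence $\bw^{\circ\top}_{m+1}\dot{\bw}^\circ_{m+1}=0$ almost everywhere, and therefore $\dot{\bw}^\circ_{m+1}=P_{\bw^{\circ\perp}_{m+1}}\dot{\bw}^\circ_{m+1}=\frac{H_n}{D_n}P_{\bw^{\circ\perp}_{m+1}}g(t)$.

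This orthogonality can also be checked directly, and I will do so to avoid relying on an appeal to the lemma. Any element of $J_n(w)$ has rows $s_ix_i$ with $s_i\in\partial\sigma(w^\top x_i)$. Since $s_i\,x_i^\top w=\sigma(w^\top x_i)$ for every such $s_i$, we get $w^\top J_n(w)^\top=Z_n(w)^\top$. Consequently
\begin{equation*}
w^\top \Psi_n(w)=\tfrac1n Z_n(w)^\top\Pi_n\Big(Z_n(w^\star_{m+1})-\tfrac{H_n}{D_n}Z_n(w)\Big)=H_n(w)-\tfrac{H_n(w)}{D_n(w)}D_n(w)=0,
\end{equation*}
so every selection already satisfies $g=P_{w^\perp}g$.

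With this in hand, both claims follow. Taking the inner product with $w_{m+1}^\star$ yields the first statement, since $\bw^\circ_{m+1}$ is absolutely continuous and the chain rule holds almost everywhere. For the second statement, $t\mapsto\|P_m\bw^\circ_{m+1}(t)\|$ is absolutely continuous, and wherever it is differentiable its derivative is at most $\|P_m\dot{\bw}^\circ_{m+1}(t)\|$, by the triangle inequality or Cauchy–Schwarz on the difference quotient. Writing this as $\big|\tfrac{H_n}{D_n}\big|\,\|P_mP_{\bw^{\circ\perp}_{m+1}}g\|$ gives the bound.

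The only delicate point is the sign of $H_n/D_n$ in the second inequality. As stated, the bound is valid when $H_n\ge0$; in general it holds with $|H_n/D_n|$. I would either note that the result is used only on the event where $H_n>0$ (as in the population proof, by monotonicity of the loss), or state it with the absolute value. I expect this, together with the set-valued bookkeeping of $\Psi_n$, to be the main (minor) obstacle; everything else is routine.
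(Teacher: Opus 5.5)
Your proposal is correct and follows the same route as the paper, whose proof simply defers to Corollary~\ref{coro:populationODE}: insert the inclusion of Lemma~\ref{lemma:empiricalloss} into the limit dynamics, take the inner product with $w_{m+1}^\star$, and bound the a.e.\ derivative of the absolutely continuous map $t\mapsto\|P_m\bw^\circ_{m+1}(t)\|$ by $\|P_m\dot{\bw}^\circ_{m+1}(t)\|$. Your direct check that $w^\top\psi=0$ for every $\psi\in\Psi_n(w)$, using $Jw=Z_n(w)$ for all $J\in J_n(w)$, cleanly replaces the norm-preservation argument. Your caveat is also right: the second inequality as stated needs $H_n\geq 0$ (in general only $|H_n/D_n|$ holds), which is harmless because the paper only invokes it where $H_n>0$ along the trajectory.
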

\begin{proof}
The proof is similar to the one of Corollary~\ref{coro:populationODE}.
\end{proof}

\subsection{Concentration of key quantities}\label{app:concetration}

In this section, we show that the key quantities appearing in the dynamics concentrate towards their population versions, given in Appendix~\ref{app:population}. The main result of this section is given by Proposition~\ref{prop:empiricalrates} below.
\begin{restatable}{proposition}{propconc}\label{prop:empiricalrates} Let $\varepsilon\in(0,1)$ and Assumption~\ref{ass:orthofeatures} holds. There exist universal positive constants $C, c, \underline{D}, \overline{D},m_0$ such that if $m\geq m_0$ and $n \geq \frac{C}{\varepsilon^2}\ln(1/\varepsilon)^2\left( d +\ln(1/\delta)\right)$, then with probability at least $1-\delta$, Assumption~\ref{ass:Finvertible} holds and for any $w\in\cT(c)$:
\begin{enumerate}
\item $\underline{D}\leq D_n(w) \leq \overline{D}$;
\item $\inf_{\psi \in \Psi_n(w)}w_{m+1}^{\star\,\top}P_{w^\perp} \psi \geq c (1-w^\top w_{m+1}^\star)$;
\item $\sup_{\psi \in \Psi_n(w)}\|P_m P_{w^\perp} \psi\| \leq C \left(\frac{1}{\sqrt{m}}+\|P_m w\|\right) +\varepsilon$.
\end{enumerate}\end{restatable}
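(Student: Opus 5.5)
The plan is to reduce each of the three items to its population counterpart in Lemma~\ref{lemma:poprate}. To do so, we establish uniform-over-the-sphere concentration for a small collection of empirical quantities, and then, near $w_{m+1}^\star$, add a local argument that is sharper than uniform concentration. First, write $H_n$, $D_n$ and $\Psi_n$ through the finite-dimensional objects
\[
\tfrac1n F_n^\top F_n,\qquad \tfrac1n F_n^\top Z_n(w),\qquad \tfrac1n Z_n(u)^\top Z_n(w),\qquad \tfrac1n J_n(w)^\top Z_n(u),\qquad \tfrac1n J_n(w)^\top F_n,
\]
with $u\in\{w,w_{m+1}^\star\}$. For example, $D_n(w)=\frac1n\|Z_n(w)\|^2-\frac1n Z_n(w)^\top F_n(F_n^\top F_n)^{-1}F_n^\top Z_n(w)$, with population analogue $D(w)$.

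\textbf{Step 1: the fixed part.} The matrix $\frac1n F_n^\top F_n$ concentrates around $G$ in operator norm once $n\gtrsim m+\ln(1/\delta)$, since its rows are sub-Gaussian vectors in $\R^m$. Hence Assumption~\ref{ass:Finvertible} holds, and $\|(\frac1nF_n^\top F_n)^{-1}-G^{-1}\|_{\op}\lesssim \sqrt{(m+\ln(1/\delta))/n}$. Here $m\le d$ by orthonormality, so this is absorbed in $n\gtrsim d$.

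\textbf{Step 2: uniform concentration over $w\in\bS_{d-1}$.} We need
\[
\sup_w\bigl|\tfrac1n Z_n(u)^\top Z_n(w)-\kappa(u^\top w)\bigr|\le\varepsilon,\qquad
\sup_w\bigl\|\tfrac1n F_n^\top Z_n(w)-\Phi^*(\phi_w)\bigr\|_2\le\varepsilon.
\]
The $\ell_2$ norm in the second bound is the dangerous one: naively it would cost a factor $\sqrt m$. We instead treat it as $\sup_{w,\,b\in\bS_{m-1}}\frac1n\sum_i \sigma(x_i^\top w)\,(\sum_j b_j\sigma(x_i^\top w_j^\star))$, a process indexed by $(w,b)$ in dimension $d+m\le 2d$. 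Products of sub-Gaussian variables are sub-exponential, so chaining gives a deviation of order $\sqrt{(d+\ln(1/\delta))/n}\,\ln(\cdot)$, which explains the $\varepsilon^{-2}\ln(1/\varepsilon)^2$ factor in the sample size. The subgradient terms $\frac1nJ_n(w)^\top v$ involve the discontinuous indicator $\mathbf 1\{x_i^\top w>0\}$. For these we use the VC class of half-spaces (VC dimension $d+1$) together with a truncation of $x_i$, controlling $\sup_{w,a}\frac1n\sum_i \mathbf 1\{x_i^\top w\ge0\}(a^\top x_i)\,g_i$. The Clarke set at ties ($x_i^\top w=0$) contributes a convex hull over finitely many points, and this is handled by the same bound using both $\mathbf1\{>0\}$ and $\mathbf1\{\ge0\}$.

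\textbf{Step 3: deducing items 1 and 3.} Plugging these bounds into the perturbation identities used in the proof of Lemma~\ref{lemma:poprate} gives $|D_n-D|\lesssim\varepsilon$, $|H_n-H|\lesssim\varepsilon$, and $\sup_{\psi\in\Psi_n(w)}\|\psi-P_{w^\perp}\Psi(w)\|\lesssim\varepsilon$ uniformly on $\cT(c)$. The last bound uses $\frac1n J_n(w)^\top Z_n(u)\approx\nabla_w\langle\phi_w,\phi_u\rangle$, which holds because $\bE[\mathbf1\{x^\top w>0\}x\,\sigma(x^\top u)]=\kappa'(w^\top u)u+(\cdot)w$. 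Item~1 then follows for $\varepsilon$ small, and item~3 follows from the population item~3 plus $\varepsilon$.

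\textbf{Step 4: item 2, the main obstacle.} Item~2 requires a multiplicative bound $\gtrsim(1-w^\top w_{m+1}^\star)$, so an additive error $\varepsilon$ is only acceptable when $1-w^\top w_{m+1}^\star\ge \eta$ for a fixed constant $\eta$ (taking $\varepsilon\ll c\,\eta$). Inside the cap $1-w^\top w_{m+1}^\star<\eta$, I would follow \citet{soltanolkotabi2017learning}. Write
\[
Z_n(w_{m+1}^\star)-\tfrac{H_n}{D_n}Z_n(w) = \bigl(Z_n(w_{m+1}^\star)-Z_n(w)\bigr)+\bigl(1-\tfrac{H_n}{D_n}\bigr)Z_n(w).
\]
The pairing of $w_{m+1}^{\star\top}P_{w^\perp}J_n(w)^\top$ with the first term is lower bounded, uniformly over the cap, by $c\|w-w_{m+1}^\star\|^2$. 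This uses that $\frac1n\sum_i\mathbf1\{x_i^\top w>0\}\,(x_i^\top v)\,(\sigma(x_i^\top w_{m+1}^\star)-\sigma(x_i^\top w))$ is a local quadratic-type form whose empirical minimal eigenvalue over the cone near $w_{m+1}^\star$ stays bounded below when $n\gtrsim d$. The remaining pieces (the projection $\Pi_n$ and the factor $1-H_n/D_n$) are shown to be $O(\|w-w_{m+1}^\star\|^2)$ plus $O(\|P_mw\|\cdot\|w-w_{m+1}^\star\|)$, using Lipschitz-type uniform bounds like $\frac1n\|Z_n(w)-Z_n(w')\|\lesssim\|w-w'\|$, which hold simultaneously for all pairs. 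These remaining terms are then absorbed exactly as in the population proof. The step I expect to be most delicate is getting these error terms proportional to $1-w^\top w_{m+1}^\star$ rather than additive, and it may require $c$ and $\eta$ to be chosen jointly small. A union bound over the events of Steps~1, 2 and~4 concludes.
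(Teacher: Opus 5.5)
Your overall architecture matches the paper's: uniform concentration for items 1 and 3 and for item 2 away from $w_{m+1}^\star$, plus a local Soltanolkotabi-type argument near $w_{m+1}^\star$. However, two steps do not go through as written.

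\textbf{Steps 2--3: the $m$-free rate requires whitening.} The ReLU features share the mean $\bE[\sigma(x^\top w_j^\star)]=1/\sqrt{2\pi}$. Hence $G=\frac{1}{2\pi}\left((\pi-1)\id[m]+\mathbf{1}_m\mathbf{1}_m^\top\right)$ has eigenvalue $\frac{\pi-1+m}{2\pi}$ along $\mathbf{1}_m$. For $b=\mathbf{1}_m/\sqrt m$, the variable $\sum_j b_j\sigma(x^\top w_j^\star)$ has mean $\sqrt{m/(2\pi)}$, so it is not $O(1)$-sub-Gaussian, and chaining over $(w,b)$ does not give $\sqrt{d/n}$.

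In fact your unweighted bound $\sup_w\|\frac1nF_n^\top Z_n(w)-\Phi^*\phi_w\|_2\le\varepsilon$ is false at $n\asymp d$. Writing $\sigma(t)=\frac{t+|t|}{2}$, its component along $\mathbf{1}_m/\sqrt m$ contains $\frac12\sqrt{m/(2\pi)}\,\bar x^\top w$ with $\bar x=\frac1n\sum_i x_i$. The supremum of this term over $w$ is of order $\sqrt{md/n}$, so the bound would need $n\gtrsim md$.

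Even granting Steps 1--2, the plug-in of Step 3 fails. Since $\|\Phi^*\phi_w\|_2\approx\sqrt m/(2\pi)$, the term $(\Phi^*\phi_w)^\top\bigl((\tfrac1nF_n^\top F_n)^{-1}-G^{-1}\bigr)\Phi^*\phi_w$ in $D_n-D$ is only bounded by $\frac{m}{4\pi^2}\|(\tfrac1nF_n^\top F_n)^{-1}-G^{-1}\|_{\op}$.

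The missing idea is to work in the $G^{-1/2}$-whitened geometry, as in Lemmas~\ref{lemma:concentration} and~\ref{lemma:concHD}. Control $\|G^{-1/2}(\frac1nF_n^\top F_n)G^{-1/2}-\id[m]\|_{\op}$ and $\sup_w\|G^{-1/2}(\frac1nF_n^\top Z_n(w)-\Phi^*\phi_w)\|_2$. Then use the a priori bound $\|G^{-1/2}\Phi^*\phi_w\|_2\le 1/\sqrt2$, which follows from $\|\Phi(G^{-1/2}s)\|_{L^2}=\|s\|$. The chaining bound should be applied to $G^{-1/2}h(x)$ directly: that vector is $O(1)$-sub-Gaussian including its mean, whereas $h(x)$ is not.

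\textbf{Step 4: the terms you want to absorb are not lower order.} Set $u=P_{w^\perp}w_{m+1}^\star$. Your main term $\frac1n(J_n(w)u)^\top(Z_n(w_{m+1}^\star)-Z_n(w))$ is $\approx\kappa'(w^\top w_{m+1}^\star)\|u\|^2$, and the remaining pieces are of the same order $\|u\|^2$.
\begin{itemize}
\item Removing $\Pi_n$ changes the pairing, already at population level, by $b(w,u)^\top G^{-1}(\Phi^*\phi_{w_{m+1}^\star}-\Phi^*\phi_w)$. For $w=\cos\theta\,w_{m+1}^\star+\sin\theta\,w_1^\star$ this is $\approx\frac{\pi}{8(\pi-1)}\sin^2\theta\approx0.18\sin^2\theta$, against a main term $\approx\frac12\sin^2\theta$.
\item With Lipschitz bounds, $1-H_n/D_n$ is only $O(\|w-w_{m+1}^\star\|)$ and $\frac1n(J_n(w)u)^\top\Pi_nZ_n(w)$ only $O(\|u\|)$, both with $O(1)$ constants, so their product is again of the signal's order.
\end{itemize}
Shrinking $c$ or $\eta$ cannot help, because signal and errors scale identically. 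Absorption would require sharp multiplicative concentration of each piece together with the numerical gap $\frac12>\frac{\pi}{8(\pi-1)}$. Bounds like $\frac1n\|Z_n(w)-Z_n(w')\|^2\lesssim\|w-w'\|^2$ cannot supply this.

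The paper avoids the issue with an exact identity. By least-squares optimality of $a_{m+1}=H_n/D_n$,
\[
\Pi_n\bigl(Z_n(w_{m+1}^\star)-\tfrac{H_n}{D_n}Z_n(w)\bigr)=P_{n,w}\bigl(Z_n(w_{m+1}^\star)-(w^\top w_{m+1}^\star)Z_n(w)\bigr),
\]
where $P_{n,w}$ projects orthogonally to $[F_n,Z_n(w)]$. This removes both the $H_n/D_n$ factor and the $Z_n(w)$ direction, and gives
\[
u^\top\Psi_n(w)=\tfrac1n\|P_{n,w}Ju\|^2+\tfrac1n(Ju)^\top P_{n,w}r,
\]
where $r$ is the Taylor residual, with $\frac1n\|r\|^2\le2\varepsilon(1-w^\top w_{m+1}^\star)$ by the Soltanolkotabi bound.

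The main term is bounded below by a single multiplicative covariance bound. Let $U$ be an orthonormal basis of $\{w_{m+1}^\star\}^\perp$. The second-moment matrix of $(\sigma(x^\top w_1^\star),\dots,\sigma(x^\top w_{m+1}^\star),\iind{x^\top w_{m+1}^\star>0}U^\top x)$ has $\lambda_{\min}\ge0.09$; this is where the numerical gap enters. The bound is then transported from $w_{m+1}^\star$ to $w$ at cost $O(\sqrt\varepsilon+\sqrt{1-w^\top w_{m+1}^\star})\,(1-w^\top w_{m+1}^\star)$.
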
 
The hardest item of Proposition~\ref{prop:empiricalrates} to prove is the second one. In that goal, we rely on distinguishing two cases:
\begin{itemize}[leftmargin=20pt]
\item for $w$ bounded away from $w_{m+1}^\star$, we rely on uniform concentration bounds of the different quantities of interest (see Appendix~\ref{app:uniform_concentration}). These bounds are also sufficient to yield the first and third items of Proposition~\ref{prop:empiricalrates}.
\item When $w$ is close to $w_{m+1}^\star$, the term $(1-w^\top w_{m+1}^\star)$ vanishes, so that uniform concentration bounds fail at providing the desired bound. Instead, we rely on different bounding techniques (Appendix~\ref{app:neighborconc}), at the neighborhood of $w_{m+1}^\star$, using concentration bounds developed by \citet{soltanolkotabi2017learning}.
\end{itemize}

\subsubsection{Uniform concentration bounds}\label{app:uniform_concentration}
To prove Proposition~\ref{prop:empiricalrates}, we first rely on general concentrations of empirical processes given in Appendix~\ref{app:generalconcentration}, which yields the uniform concentration of different intermediate quantities as stated in Lemma~\ref{lemma:concentration}. 
\begin{lemma}\label{lemma:concentration}
Consider Assumption~\ref{ass:orthofeatures}. There exists a universal constant $C>0$ such that for any $\varepsilon,\delta\in(0,1/2)$, if 
\begin{equation*}
n \geq \frac{C}{\varepsilon^2}\ln(1/\varepsilon)^2\left( d +\ln(1/\delta)\right),
\end{equation*}
then with probability at least $1-\delta$, the following statements hold simultaneously:
\begin{enumerate}
\item $\|G^{-1/2}\left(\frac{1}{n}F_n^\top F_n\right)G^{-1/2} - \id[m] \|_{\op} \leq \varepsilon$;
\item $\sup_{w,v\in\bS_{d-1}} \left|\frac{1}{n} Z_n(w)^\top Z_n(v) - \kappa(w^\top v)\right| \leq \varepsilon$;
\item $\sup_{w\in\bS_{d-1}}\left\|G^{-1/2}\left( \frac{1}{n}F_n^\top Z_n(w) - \Phi^* \phi_{w}\right)\right\|_2\leq \varepsilon $;
\item $\sup_{w,v,u\in\bS_{d-1}} \big|B_n(w,v,u) - B(w,v,u)\big| \leq \varepsilon$;
\item $\sup_{w,v\in\bS_{d-1}}\left\|G^{-1/2}\left( b_n(w,v) - b(w,v)\right)\right\|_2\leq \varepsilon$;
\end{enumerate}
where $B_n,B : \bS_{d-1}^3 \to \R$ and $b_n,b : \bS_{d-1}^2 \to \R^m$ are defined as follows
\begin{gather*}
B_n(w,v,u) \coloneqq \frac{1}{n}\sum_{k=1}^n \iind{w^\top x_k>0} v^\top x_k \sigma(u^\top x_k) ; \quad B(w,v,u) \coloneqq \bE_{x\sim\cN(0,\id)}[\iind{w^\top x>0} v^\top x \sigma(u^\top x)];\\
b_n(w,v)_i \coloneqq B_n(w,v,w_i^\star); \quad b(w,v)_{i} \coloneqq B(w,v,w_i^\star).
\end{gather*}
\end{lemma}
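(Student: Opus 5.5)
We prove the five items separately, each by a uniform concentration argument over the sphere, and then take a union bound over the five events, each holding with probability at least $1-\delta/5$. Every item reduces to controlling, uniformly over a class parametrized by at most three points of $\bS_{d-1}$, an empirical average of products of the form $\iind{w^\top x>0}\,(v^\top x)\,\sigma(u^\top x)$ or $\sigma(w^\top x)\sigma(v^\top x)$. Each such product is a product of two sub-Gaussian factors, hence sub-exponential with $\psi_1$-norm bounded by a universal constant. The natural tool is the generic empirical process bound of Appendix~\ref{app:generalconcentration}. For a class of sub-exponential functions indexed by a set of (VC/metric) dimension $\mathcal{O}(d)$, with suitable Lipschitz or VC structure in the index, it should give a supremum deviation of order $\ln(1/\varepsilon)\sqrt{(d+\ln(1/\delta))/n}$, up to lower-order terms of order $(d+\ln(1/\delta))/n$. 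Setting this equal to $\varepsilon$ yields the stated requirement on $n$; the $\ln(1/\varepsilon)^2$ factor accounts for the truncation of the sub-exponential tails.

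Item 2 follows from this directly. The map $(w,v)\mapsto\sigma(w^\top x)\sigma(v^\top x)$ is Lipschitz in the parameters, with constant proportional to $\|x\|^2$, so chaining over an $\varepsilon$-net of $\bS_{d-1}^2$ applies. Item 4 is the delicate one because the indicator $\iind{w^\top x>0}$ is discontinuous in $w$. Here we rely on VC arguments: half-spaces have VC dimension $d+1$, so the class of products is handled with a bracketing or VC-subgraph bound combined with the sub-exponential envelope $|v^\top x|\,\sigma(u^\top x)$.

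Items 1, 3 and 5 are vector and matrix versions and carry a $G^{-1/2}$ weighting. By Assumption~\ref{ass:orthofeatures} and the Sherman--Morrison formula \eqref{eq:sherman}, $G$ has one eigenvalue of order $m$ along $\mathbf{1}_m$ and the remaining eigenvalues equal $\frac{\pi-1}{2\pi}$. This weighting is exactly what makes the bounds independent of $m$. For Item 1 we write $\frac1n F_n^\top F_n$ as an empirical covariance of the vector $\xi=(\sigma(x^\top w_i^\star))_{i\le m}$. After whitening, $G^{-1/2}\xi$ is a vector with identity second moment whose $\psi_2$-norm is bounded by a universal constant: the centered part of the $\sigma(x^\top w_i^\star)$ are independent, since the $w_i^\star$ are orthonormal and the coordinates $x^\top w_i^\star$ are therefore independent, and the mean direction $\mathbf{1}_m$ is rescaled by $\sqrt m$. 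The standard covariance estimation bound for sub-Gaussian vectors (Vershynin) then gives error $\sqrt{(m+\ln(1/\delta))/n}$, and $m\lesssim d$ in our regime.

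For Items 3 and 5, we express the $G^{-1/2}$-weighted norm as a supremum over $a\in\bS_{m-1}$ of the scalar process
$$\frac1n\sum_k \Big(a^\top G^{-1/2}\xi_k\Big)\sigma(w^\top x_k)$$
(respectively with $\iind{w^\top x_k>0}\,v^\top x_k$ in place of $\sigma(w^\top x_k)$), minus its mean. The index set is now $\bS_{m-1}\times\bS_{d-1}^{(2)}$, of total dimension $\mathcal{O}(d)$, and the summands are again products of sub-Gaussians with $\mathcal{O}(1)$ $\psi_2$-norms. The same chaining/VC bound therefore applies.

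The main obstacle is obtaining the sharp $d$ rather than $md$ or $d\,\mathrm{polylog}(n)$ dependence. Two places are delicate: combining the indicator discontinuity with unbounded sub-exponential envelopes in Items 4 and 5, and keeping the constants uniform in $m$ through the whitening by $G^{-1/2}$. For the first, we would start by truncating at level $\ln(1/\varepsilon)$ times the envelope and controlling the truncated class by a VC bound; the tail beyond the truncation level is handled separately by a Bernstein-type bound.
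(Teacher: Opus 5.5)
Your proposal is correct, and it uses the same machinery as the paper. Every item is an instance of Lemma~\ref{lemma:chainingVC} (truncation at level $\ln(1/\varepsilon)$, VC chaining over half-space indicators, a net over the bilinear directions, Bernstein for the tail), followed by a union bound.

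The real difference is in Items 1, 3 and 5, and there your route is the one that actually gives $m$-free constants. The paper pulls the weighting out: it bounds the whitened deviation by $\|G^{-1}\|_{\op}$ or $\|G^{-1/2}\|_{\op}$ times the unwhitened one. It then applies Lemma~\ref{lemma:chainingVC} to $h(x)=(\sigma(x^\top w_i^\star))_{i\le m}$ itself, asserting that $h(X)$ is $O(1)$-sub-Gaussian. That overlooks the mean $\bE\, h(X)=\frac{1}{\sqrt{2\pi}}\mathbf 1_m$, whose norm is $\sqrt{m/(2\pi)}$, so in fact $\|h(X)\|_{\psi_2}\gtrsim\sqrt m$.

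The unwhitened errors are genuinely larger than $\varepsilon$ at the stated sample size:
\begin{itemize}
\item Writing $\bar\xi$ for the empirical mean of the centered features, $\frac1nF_n^\top F_n-G$ contains the cross term $\frac{1}{\sqrt{2\pi}}(\mathbf 1_m\bar\xi^\top+\bar\xi\mathbf 1_m^\top)$, whose operator norm is of order $m/\sqrt n$.
\item Along $\mathbf 1_m/\sqrt m$, the Item 3 error contains $\sqrt{m/(2\pi)}\bigl(\frac1n\sum_k\sigma(w^\top x_k)-\frac{1}{\sqrt{2\pi}}\bigr)$, which is of order $\sqrt{md/n}$ uniformly in $w$. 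Item 5 is analogous.
\end{itemize}
Factoring out an $O(1)$ operator norm therefore only proves Items 1 and 3 for $n\gtrsim m^2/\varepsilon^2$ and $n\gtrsim md/\varepsilon^2$ respectively, not for $n\gtrsim d/\varepsilon^2$.

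Whitening first, as you do, shrinks the $\mathbf 1_m$ direction by $\sqrt m$. Then $G^{-1/2}h(X)$ has identity second moment and $O(1)$ $\psi_2$-norm. Lemma~\ref{lemma:chainingVC} with $h_1=G^{-1/2}h$ (or your covariance bound for Item 1) then gives exactly the stated sample size.

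Two details need tightening.

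For Item 2, the paper writes $\sigma(w^\top x)\sigma(v^\top x)=(w^\top x)(v^\top x)\iind{x\in\cH_w\cap\cH_v}$ and applies the same lemma. Your Lipschitz net also works, but with the pointwise constant $\|x\|^2$ a single-scale net needs mesh of order $\varepsilon/d$, and you lose a $\ln d$ factor. Instead, bound the empirical modulus by $\|\frac1n\mathbf X^\top\mathbf X\|_{\op}=O(1)$ via Cauchy--Schwarz.

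For Item 4, $|v^\top x|\,\sigma(u^\top x)$ is an envelope only for fixed $(u,v)$; the envelope of the whole class is of order $\|x\|^2$. Rewrite $v^\top x\,\sigma(u^\top x)=(v^\top x)(u^\top x)\iind{x\in\cH_u}$. The class becomes $\{(v^\top x)(u^\top x)\iind{x\in\cH_w\cap\cH_u}\}$, which is bilinear in $(u,v)$ over a VC family of sets. That is exactly the form handled by the net over $(u,v)$ in Lemma~\ref{lemma:chainingVC}.
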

\begin{proof}
In the whole proof, we use the notations $\bS_{G} \coloneqq \{a \in\R^d \mid G^{1/2}a \in\bS_{m-1}\}$, $\cH_w \coloneqq \{x\in\R^d \mid x^\top w \geq 0\}$ for any $w\in\R^d$, and we define the function:
\begin{equation}\label{eq:functionh}
h : \R^d \ni x \longmapsto \begin{pmatrix}
\sigma(x^\top w_{1}^\star) \\
\vdots \\
\sigma(x^\top w_{m}^\star)
\end{pmatrix} \in \R^{m}.
\end{equation}

\textbf{1.} For the first item, it comes by definition of the mapping $\Phi:\R^m\to L^2(\cN(0,\id))$ that
\begin{align*}
\|G^{-1/2}\left(\frac{1}{n}F_n^\top F_n\right)G^{-1/2} - \id[m] \|_{\op} & = \sup_{a\in \bS_{m-1}}a^\top\left(G^{-1/2}\left(\frac{1}{n}F_n^\top F_n\right)G^{-1/2} - \id[m]  \right) a \\
&  = \sup_{a\in \bS_{G}}a^\top\left(\frac{1}{n}F_n^\top F_n - G  \right) a \\
& = \sup_{a\in \bS_{G}} a^\top \frac{1}{n}\sum_{k=1}^n h(x_k) h(x_k)^\top - \bE[h(x)h(x)^\top] a \\
& \leq \|G^{-1}\|_{\op}\sup_{a\in\bS_{d-1}} a^\top \left(\frac{1}{n}\sum_{k=1}^n h(x_k) h(x_k)^\top - \bE[h(x)h(x)^\top]\right) a .
\end{align*}
First, $\|G^{-1}\|_{\op} \leq \frac{2\pi}{\pi-1}$, thanks to Assumption~\ref{ass:orthofeatures}.
Item 1 is then a direct application of \footnote{We could here use a simpler, covariance estimation bound.} Lemma~\ref{lemma:chainingVC}, with $h_1=h_2=h$ and $\cA=\{\R^d\}$.

The different assumptions are indeed satisfied here, as the VC dimension of $\cA$ is $2$ here and $h(X)$ is $C$-sub-Gaussian for some universal constant $C$, thanks to the independence of its coordinates.

\textbf{2.} Note that for any $w,v\in\bS_{d-1}$,
\begin{align*}
\left|\frac{1}{n} Z_n(w)^\top Z_n(v) - \kappa(w^\top v)\right|  &= \left|\frac{1}{n} \sum_{k=1}^n \sigma(w^\top x_k)\sigma(v^\top x_k) - \bE[ \sigma(w^\top X)\sigma(v^\top X)]\right|\\
& = \left|\frac{1}{n} \sum_{k=1}^n (w^\top x_k)(v^\top x_k)\iind{x_k \in \cH_w\cap\cH_v} - \bE[ (w^\top X)(v^\top X)\iind{X \in \cH_w\cap\cH_v}]\right|.
\end{align*}
This point is thus a direct application of Lemma~\ref{lemma:chainingVC} with $\cA = \{\cH_w \cap \cH_v \mid v,w\in\bS_{d-1}\}$ and $h_1=h_2=\id$. Classical VC bounds yield that $\cA$ has VC dimension at most $2(d+2)$ \citep[see e.g.,][Chapter 2.6, Exercise 14 and Lemma 2.6.17.ii.]{vaart1996weak}. The sub-Gaussian assumption is immediate.

\textbf{3.} Here, for any $w\in\bS_{d-1}$, we can write:
\begin{align*}
\left\|G^{-1/2}\left( \frac{1}{n}F_n^\top Z_n(w) - \Phi^* \phi_{w}\right)\right\|_2 & = \sup_{a\in\bS_{m-1}} a^\top G^{-1/2}\left( \frac{1}{n}F_n^\top Z_n(w) - \Phi^* \phi_{w}\right) \\
& = \sup_{a\in\bS_{G}} a^\top \left( \frac{1}{n}F_n^\top Z_n(w) - \Phi^* \phi_{w}\right)\\
& = \sup_{a\in\bS_{G}} \left| \frac{1}{n} \sum_{k=1}^n a^\top h(x_k) \sigma(w^\top x_k) - \bE[a^\top h(X)\sigma(w^\top X)] \right| \\
&\leq \|G^{-1/2}\|_{\op}\sup_{a\in\bS_{d-1}} \left| \frac{1}{n} \sum_{k=1}^n a^\top h(x_k) \sigma(w^\top x_k) - \bE[a^\top h(X)\sigma(w^\top X)] \right|
\end{align*}
Again, the third point is a direct application of Lemma~\ref{lemma:chainingVC} with $\cA=\{\cH_w \mid w\in\bS_{d-1}\}$, $h_1=h$ and $h_2=\id$. Here again, a classical VC bound yields that  $\cA$ has VC dimension at most $d+2$ \citep[][Chapter 2.6, exercise 14]{vaart1996weak}.

\textbf{4.} It is again a direct application of Lemma~\ref{lemma:chainingVC}, with $\cA = \{\cH_w \cap \cH_u \mid u,w\in\bS_{d-1}\}$ and $h_1=h_2=\id$.

\textbf{5.} Again note that for any $w,v\in\bS_{d-1}$,
\begin{align*}
\left\|G^{-1/2}\left( b_n(w,v) - b(w,v)\right)\right\|_2 & = \sup_{a\in\bS_G} a^\top\left( b_n(w,v) - b(w,v)\right) \\
& = \sup_{a\in\bS_G} \left|\frac{1}{n}\sum_{k=1}^n \iind{w^\top x_k>0} a^\top h(x_k) x_k^\top v  -  \bE[\iind{w^\top X>0} a^\top h(X) X^\top v  ] \right|\\
& \leq \|G^{-1/2}\|_{\op} \sup_{a\in\bS_{d-1}} \left|\frac{1}{n}\sum_{k=1}^n \iind{w^\top x_k>0} a^\top h(x_k) x_k^\top v  -  \bE[\iind{w^\top X>0} a^\top h(X) X^\top v  ] \right|.
\end{align*}
It is again a consequence of Lemma~\ref{lemma:chainingVC} with  $\cA=\{\cH_w \mid w\in\bS_{d-1}\}$, $h_1=h$ and $h_2=\id$.
\end{proof}
\begin{lemma} \label{lemma:concHD}
Consider Assumption~\ref{ass:orthofeatures} and the items 1. to 5. of Lemma~\ref{lemma:concentration} all hold with $\varepsilon\in(0,1/2]$. Then $F_n^\top F_n$ is invertible and there is a universal constant $C$ such that for any $w\in\bS_{d-1}$,
\begin{enumerate}
\item $|D_n(w) - D(w)| \leq C \varepsilon$;
\item $|H_n(w) - H(w)| \leq C \varepsilon$;
\item $\sup_{J\in J_n(w)} \|\frac{1}{n}J^\top \Pi_n Z_n(w) - \frac{1}{2}\nabla \widetilde{D}(w)\| \leq C \varepsilon$;
\item $\sup_{J\in J_n(w)} \|\frac{1}{n}J^\top \Pi_n Z_n(w_{m+1}^\star) - \nabla \widetilde{H}(w)\| \leq C \varepsilon$;
\end{enumerate}
where $\widetilde{D}$ and $\widetilde{H}$ are defined in Equation~\eqref{eq:tildedefn}.
\end{lemma}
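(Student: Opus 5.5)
The plan is to write every empirical quantity in a ``whitened'' form that makes the projection $\Pi_n$ explicit, and then to compare term by term with the population formulas of Appendix~\ref{app:population}. Set $\hat G\coloneqq \frac1n F_n^\top F_n$. By item 1 of Lemma~\ref{lemma:concentration}, $G^{-1/2}\hat G G^{-1/2}$ has all eigenvalues in $[1-\varepsilon,1+\varepsilon]\subseteq[1/2,3/2]$. Hence $\hat G$ is invertible, which gives Assumption~\ref{ass:Finvertible}, and moreover
\[
\bigl\|G^{1/2}\hat G^{-1}G^{1/2}\bigr\|_{\op}\leq 2,
\qquad
\bigl\|G^{1/2}\hat G^{-1}G^{1/2}-\id[m]\bigr\|_{\op}\leq 2\varepsilon .
\]
For any vectors $u,v\in\R^n$ we have
\[
\tfrac1n u^\top \Pi_n v=\tfrac1n u^\top v-\bigl(\tfrac1n F_n^\top u\bigr)^\top \hat G^{-1}\bigl(\tfrac1n F_n^\top v\bigr).
\]
Introduce the whitened vectors $p_n(w)\coloneqq G^{-1/2}\frac1n F_n^\top Z_n(w)$ and $p(w)\coloneqq G^{-1/2}\Phi^*\phi_w$. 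The population quantities have exactly the same form, with $\hat G$ replaced by $G$, $p_n$ replaced by $p$, and empirical inner products replaced by $\kappa$. In particular $\tilde D(w)=\kappa(1)-\|p(w)\|^2$ and $\tilde H(w)=\kappa(w^\top w_{m+1}^\star)-p(w)^\top p(w_{m+1}^\star)$.

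\textbf{Items 1 and 2.} I would write
\[
D_n(w)-D(w)=\Bigl[\tfrac1n\|Z_n(w)\|^2-\kappa(1)\Bigr]-\Bigl[p_n^\top (G^{1/2}\hat G^{-1}G^{1/2})p_n - p^\top p\Bigr].
\]
The first bracket is at most $\varepsilon$ by item 2 of Lemma~\ref{lemma:concentration}. For the second bracket, item 3 gives $\|p_n-p\|\leq\varepsilon$, and \eqref{eq:boundnD1}-type bounds give $\|p(w)\|\leq \|G^{1/2}\|_{\op}\|G^{-1}\Phi^*\phi_w\|=O(1)$ uniformly; more directly, $\|p(w)\|^2\leq\kappa(1)$ since $\tilde D\geq 0$. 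Expanding $a^\top M a-b^\top b$ with $\|M-\id[m]\|_{\op}\leq 2\varepsilon$, $\|a-b\|\leq\varepsilon$ and $\|b\|\le1$ yields $O(\varepsilon)$. Item 2 follows in the same way with the bilinear version, using $p_n(w)$ and $p_n(w_{m+1}^\star)$.

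\textbf{Items 3 and 4.} Fix $J\in J_n(w)$. Its rows are $g_k x_k$, where $g_k\in\partial\sigma(w^\top x_k)$, so $g_k=\iind{w^\top x_k>0}$ except on the set where $w^\top x_k=0$. That set is almost surely empty for every $w$ outside a null event... except that it is not empty uniformly over $w$. I would handle this by noting that the offending rows are multiplied by $\sigma(\cdot)$-type factors that vanish there, namely $Z_n(w)_k=0$. For item 4 I would argue instead that a.s. any $d$ points are in general position, so at most $d-1$ indices have $w^\top x_k=0$. Their contribution is bounded by $\frac{d}{n}\max_k\|x_k\|\cdot(\ldots)$, which is $\lesssim\varepsilon$ under $n\gtrsim d/\varepsilon^2$; this needs a crude norm bound, which is the delicate point. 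For any direction $v\in\bS_{d-1}$,
\[
v^\top\tfrac1n J^\top \Pi_n Z_n(u)=B_n(w,v,u)-b_n(w,v)^\top \hat G^{-1}\tfrac1n F_n^\top Z_n(u),
\]
with $u=w$ or $u=w_{m+1}^\star$. The population analogue is $B(w,v,u)-b(w,v)^\top G^{-1}\Phi^*\phi_u$, which equals $v^\top\nabla_w\langle\Pi\phi_w,\phi_u\rangle$ by differentiation under the integral; this is $\frac12 v^\top\nabla\tilde D(w)$ when $u=w$ and $v^\top\nabla\tilde H(w)$ when $u=w_{m+1}^\star$. Whitening again turns the second term into $(G^{-1/2}b_n)^\top(G^{1/2}\hat G^{-1}G^{1/2})p_n(u)$. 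Items 4 and 5 of Lemma~\ref{lemma:concentration} control $B_n-B$ and $G^{-1/2}(b_n-b)$. It remains to check that $\|G^{-1/2}b(w,v)\|=O(1)$ uniformly. This holds because $b(w,v)^\top a=\bE[\iind{w^\top x>0}(v^\top x)\,a^\top h(x)]\leq \|v^\top x\|_{L^2}\,\|a^\top h\|_{L^2}=\sqrt{a^\top G a}$. Taking the supremum over $v$ then gives the uniform norm bound.

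\textbf{Main obstacle.} The main obstacle is uniformity in the subgradient selection for items 3 and 4. The approach above handles it by exploiting that the uniform bounds of Lemma~\ref{lemma:concentration} use the strict indicator $\iind{w^\top x>0}$, so I would also need the analogous uniform bound with the closed indicator. The cleanest route is to re-apply Lemma~\ref{lemma:chainingVC} with closed half-spaces, which have the same VC dimension, and then note that any selection is a convex combination of the two extreme choices. A second point to verify is that $\tilde D$ and $D$ agree on the sphere, which follows from Lemma~\ref{lemma:populationloss}. Everything else is bookkeeping with constants depending only on $\|G^{\pm1/2}\|_{\op}$, which are universal under Assumption~\ref{ass:orthofeatures}.
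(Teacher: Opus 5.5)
Your treatment of items 1--2, and of items 3--4 at points $w$ where no sample lies on the hyperplane $w^\perp$, is essentially the paper's proof. It uses the same expansion of $\frac1n u^\top\Pi_n v$ through $\hat G^{-1}$ and the same identity $v^\top\frac1nJ^\top\Pi_nZ_n(u)=B_n(w,v,u)-b_n(w,v)^\top\hat G^{-1}\frac1nF_n^\top Z_n(u)$, matched against $v^\top\nabla_w\langle\Pi\phi_w,\phi_u\rangle$. It also relies on the same whitened bounds $\|G^{-1/2}\Phi^*\phi_w\|\le 1/\sqrt2$ (your $\widetilde D\ge0$ argument) and $\|G^{-1/2}b(w,v)\|=O(1)$. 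One minor correction: drop the aside $\|p(w)\|\le\|G^{1/2}\|_{\op}\|G^{-1}\Phi^*\phi_w\|=O(1)$ and the closing claim that the constants depend on $\|G^{\pm1/2}\|_{\op}$. Since $\|G^{1/2}\|_{\op}\asymp\sqrt m$, that constant is not universal, but your whitened bookkeeping never uses it.

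The gap is the extension to non-differentiable points. The statement requires it, because the supremum runs over all of $J_n(w)$ for every $w\in\bS_{d-1}$. Write $S\coloneqq\{k: w^\top x_k=0\}$. Then $J_n(w)$ allows independent weights $g_k\in[0,1]$ on the rows $k\in S$. Your claim that every selection is a convex combination of the all-open choice ($g\equiv0$ on $S$) and the all-closed choice ($g\equiv1$ on $S$) fails as soon as $|S|\ge2$; for example $g=(0,1)$ is not on that segment. This case cannot be avoided uniformly over the sphere, since any $d-1$ samples have a common unit normal $w$.

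Your two other ideas do not close the gap either. The fact that $Z_n(w)_k=0$ on $S$ only removes the tied rows from $\frac1nJ^\top Z_n(w)$. It does not remove them from $\frac1nJ^\top F_n\hat G^{-1}\frac1nF_n^\top Z_n(u)$, nor from $\frac1nJ^\top Z_n(w_{m+1}^\star)$ in item 4. The norm-bound route over up to $d-1$ tied rows is left unfinished, and done crudely with $\max_k\|x_k\|\sim\sqrt d$ it needs more samples than $n\gtrsim d/\varepsilon^2$.

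The missing step is the one the paper uses:
\begin{itemize}
\item The map $J\mapsto\|\frac1nJ^\top\Pi_nZ_n(u)-T(w)\|$, with $T(w)=\frac12\nabla\widetilde D(w)$ or $\nabla\widetilde H(w)$, is convex. So its supremum over the box is attained at a vertex $g\in\{0,1\}^S$.
\item Almost surely any $d$ samples are linearly independent. Hence $|S|\le d-1$ and $(x_k)_{k\in S}$ is linearly independent.
\item Therefore every vertex pattern equals $J_n(w')$ at a differentiable point $w'=(w+\delta)/\|w+\delta\|$, with $\delta$ arbitrarily small and the signs of $\delta^\top x_k$, $k\in S$, prescribed.
\item Apply your differentiable-point bound at $w'$ and let $\delta\to0$. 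Continuity of $Z_n$, $\nabla\widetilde D$ and $\nabla\widetilde H$ then gives the bound at $w$.
\end{itemize}
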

\begin{proof}
Let us write for this proof $G_n = \frac{1}{n}F_n^\top F_n$.
The first item of Lemma~\ref{lemma:concentration} implies that 
\begin{align*}
 \frac{1}{2}G  \preceq G_n \preceq \frac{3}{2}G.
\end{align*}
In particular, $G_n$ is invertible, and $\|G_n^{-1}\|_{\op}\leq \frac{4\pi}{\pi-2}$. Moreover, the eigenvalues of $G^{-1/2}G_n G^{-1/2}$ are all within $[1-\varepsilon, 1+\varepsilon]$, so that
\begin{align}\label{eq:concGminus}
\|G^{1/2}G_n^{-1} G^{1/2} - \id[m]\|_{\op} \leq \frac{\varepsilon}{1-\varepsilon}\leq 2\varepsilon.
\end{align}
Let us now prove the different items of Lemma~\ref{lemma:concHD}. Again, we use in this proof the notation  $\bS_{G} = \{a \in\R^d \mid G^{1/2}a \in\bS_{m-1}\}$.

\textbf{1.} By definition of both $D_n$ and $D$, it comes for any $w\in\bS_{d-1}$:
\begin{align*}
D_n(w) - D(w) = \left(\frac{1}{n}Z_n(w)^\top Z_n(w) - \frac{1}{2}\right) + \Phi^*\phi_w^\top G^{-1}\Phi^*\phi_w - \left(\frac{1}{n}F_n^\top Z_n(w)\right)^\top G_n^{-1}\left(\frac{1}{n}F_n^\top Z_n(w)\right).
\end{align*}
The second item of Lemma~\ref{lemma:concentration} directly yields $|\frac{1}{n}Z_n(w)^\top Z_n(w) - \frac{1}{2}|\leq \varepsilon$. It remains to bound the second term. For that, denote here for shortness $u = \Phi^*\phi_w$ and $u_n = \frac{1}{n}F_n^\top Z_n(w)$. Note that the third item of Lemma~\ref{lemma:concentration} implies that $\|G^{-1/2}(u-u_n)\|\leq\varepsilon$. Moreover, it comes that 
\begin{align*}
\|G^{-1/2}u\|_2 & = \|G^{-1/2}\Phi^*\phi_w\| \\
& = \sup_{s\in\bS_{m-1}} (G^{-1/2}s)^\top \Phi^*\phi_w \\
& = \sup_{s\in\bS_{m-1}} \langle \Phi(G^{-1/2}s), \phi_w\rangle \\
& \leq \sup_{s\in\bS_{m-1}}  \|\Phi(G^{-1/2}s)\|_{L^2(\cN(0,\id))} \|\phi_w\|_{L^2(\cN(0,\id))}.
\end{align*}
From there, observe that, by definition, $\|\Phi(G^{-1/2}s)\|_{L^2(\cN(0,\id)}^2 = \|s\|_2$ and $\|\phi_w\|_{L^2(\cN(0,\id))}\leq \frac{1}{\sqrt{2}}$ thanks to the arc-cosine kernel. In consequence, 
\begin{equation}\label{eq:normbounda}
\|G^{-1/2}u\|_2 = \|G^{-1/2}\Phi^*\phi_w\|_2 \leq \frac{1}{\sqrt{2}}.
\end{equation}
Using the third item of Lemma~\ref{lemma:concentration} and Equation~\eqref{eq:concGminus} then yields
\begin{align*}
|u^\top G^{-1}u - u_n^\top G_n^{-1} u_n| & \leq |u^\top G^{-1} (u-u_n)| + |(u-u_n)^\top G^{-1} u_n | + |u_n (G^{-1}-G_n^{-1})u_n|\\
& \leq \|G^{-1/2}u\|\cdot \|G^{-1/2}(u-u_n)\| + \|G^{-1/2}u_n\|\cdot \|G^{-1/2}(u-u_n)\| \\& \phantom{\leq} + \|G^{-1/2}u_n\|^2 \cdot \|\id[m]-G^{1/2}G_n^{-1}G^{1/2}\|_{\op}\\
& \leq \frac{1}{\sqrt{2}} \varepsilon + (\varepsilon+\frac{1}{\sqrt{2}})\varepsilon +  (\varepsilon+\frac{1}{\sqrt{2}})^2 2\varepsilon \\
& \leq 2(1+\sqrt{2})\varepsilon.
\end{align*}
So that finally, for any $w\in\bS_{d-1}$, $|D_n(w)-D(w)| \leq (3+2\sqrt{2})\varepsilon$.

\textbf{2.} The proof of this item follows the exact same lines, since
\begin{align*}
H_n(w)- H(w) &= \frac{1}{n}Z_n(w_{m+1}^\star)^\top Z_n(w) - \langle \phi_{w_{m+1}^\star}, \phi_w\rangle \\&\phantom{=} + \Phi^*\phi_{w_{m+1}^\star}^\top G^{-1}\Phi^*\phi_w - \left(\frac{1}{n}F_n^\top Z_n(w_{m+1}^\star)\right)^\top G_n^{-1}\left(\frac{1}{n}F_n^\top Z_n(w)\right).
\end{align*}

\textbf{3.} Let $w\in\bS_{d-1}$ and assume first that the considered loss is differentiable at $w$, so that $J_n(w)$ reduces to a singleton, i.e., we can write by abuse of notation that $J_n(w)\in\R^{n\times d}$ with 
\begin{align*}
J_n(w)_k = \iind{w^\top x_k>0} x_k \qquad \text{for any }k\in[n].
\end{align*}
In that case, note that for any $v\in\bS_{d-1}$, and $b_n$ defined as in Lemma~\ref{lemma:concentration}
\begin{align*}
\frac{1}{n}v^\top J_n(w)^\top \Pi_n Z_n(w)  & = B_n(w,v,w) -  b_n(w,v)^\top G_n^{-1}\left(\frac{1}{n}F_n^\top Z_n(w)\right).
\end{align*}
The definition of $\widetilde{D}$ yields
\begin{align*}
\frac{1}{2}u^\top \nabla \widetilde{D}(w) & = B(w,u,w) - b(w,u)^\top G^{-1}\Phi^*\phi_w.  
\end{align*}
In consequence, it comes
\begin{align*}
\|\frac{1}{n}J_n(w)^\top \Pi_n Z_n(w) - \frac{1}{2}\nabla \widetilde{D}(w)\| & = \sup_{v\in\bS_{d-1}} v^\top\left(\frac{1}{n}J_n(w)^\top \Pi_n Z_n(w) - \frac{1}{2}\nabla \widetilde{D}(w) \right) \\
& =  B_n(w,v,w) - B(w,v,w) \\&\phantom{=}+ b(w,v)^\top G^{-1} \Phi^*\phi_w -  b_n(w,v)^\top G_n^{-1}\left(\frac{1}{n}F_n^\top Z_n(w)\right).
\end{align*}
Similarly to the first item of Lemma~\ref{lemma:concHD}, we can bound these different terms. For that, first observe by a similar argument than above that $\|G^{-1/2}b(w,v)\| \leq \frac{1}{\sqrt{2}}$. Indeed,
\begin{align*}
\|G^{-1/2}b(w,v)\| & = \sup_{a\in\bS_G} \bE[\iind{w^\top x>0}v^\top x \Phi(a)(x)] \\
&\leq \sup_{a\in\bS_G} \sqrt{\bE[\iind{w^\top x>0}(v^\top x)^2]} \|\Phi(a)\|_{L^2(\cN(0,\id)}\\
&= \frac{1}{\sqrt{2}},
\end{align*}
where we used that $\bE[\iind{w^\top x>0}(v^\top x)^2]=\frac{1}{2}$ by symmetry, and again that $\|\Phi(a)\|_{L^2(\cN(0,\id)}=1$ for $a\in\bS_G$.

We can now use the items 1, 3, 4 and 5 of Lemma~\ref{lemma:concentration}, similarly to the first point, to conclude with a similar bound on $\|\frac{1}{n}J_n(w)^\top \Pi_n Z_n(w) - \frac{1}{2}\nabla \widetilde{D}(w)\|$. 

We have thus shown for almost any $w\in \bS_{d-1}$ (i.e., at differentiable points), the inequality $\|\frac{1}{n}J_n(w)^\top \Pi_n Z_n(w) - \frac{1}{2}\nabla \widetilde{D}(w)\| \leq C \varepsilon$. 
By convexity of the function $J\mapsto \|\frac{1}{n}J^\top \Pi_n Z_n(v) - \frac{1}{2}\nabla \widetilde{D}(v)\|$, continuity of $Z_n$, $\nabla \widetilde{D}$ and as $J_n(w)$ is the convex hull of limits $J_n(w_k)$ for differentiable points $w_k\to w$, the inequality actually holds for any $w\in\bS_{d-1}$.

\textbf{4.} The proof of this item follows the exact same lines as the third item, since at any differentiability point $w$:
\begin{align*}
\|\frac{1}{n}J_n(w)^\top \Pi_n Z_n(w_{m+1}^\star) - \nabla \widetilde{H}(w)\| & = \sup_{v\in\bS_{d-1}} B_n(w,v,w_{m+1}^\star) - B(w,v,w_{m+1}^\star) \\&\phantom{=}+ b(w,v)^\top G^{-1} \Phi^*\phi_{w_{m+1}^\star} -  b_n(w,v)^\top G_n^{-1}\left(\frac{1}{n}F_n^\top Z_n(w_{m+1}^\star)\right).
\end{align*}
\end{proof}
\subsubsection{Concentration near \texorpdfstring{$w_{m+1}^\star$}{optimal weight}}\label{app:neighborconc}

\begin{lemma}\label{lemma:P2conc1}
Let $\delta\in(0,1/2)$ and Assumption~\ref{ass:orthofeatures} holds. There exist positive universal constants $c$ and $C$ such that, if $n\geq C(d+\ln(1/\delta))$, then with probability at least $1-\delta$, the following hold simultaneously:
\begin{enumerate}
\item $\|\mathbf{X}\|_{\op} \leq C \sqrt{n}$;
\item $\frac{1}{n}\|\Pi_n Z_n(w_{m+1}^\star)\|^2 \geq c$;
\item $\frac{1}{n}\|P_{n,w_{m+1}^\star} J_n(w_{m+1}^\star) q\|^2\geq c\|q\|^2$ for any $q\in \{w_{m+1}^\star\}^\perp$;
\end{enumerate}
where $P_{n,w_{m+1}^\star} \coloneqq \Pi_n - \frac{\Pi_n Z_n(w_{m+1}^\star)Z_n(w_{m+1}^\star)^\top \Pi_n}{Z_n(w_{m+1}^\star)^\top\Pi_n Z_n(w_{m+1}^\star)}$ is the orthogonal projection on the orthogonal of the column space of $[F_n, Z_n(w_{m+1}^\star)]$; and $\mathbf{X} = [x_1^\top, \ldots, x_n^\top]^\top \in \R^{n\times d}$ is the data matrix.
\end{lemma}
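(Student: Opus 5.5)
Each of the three items will be established separately with failure probability at most $\delta/3$, and a union bound will finish the proof.

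\textbf{Item 1.} This is the standard operator-norm bound for Gaussian matrices \citep{vershynin2018high}: with probability at least $1-\delta/3$ we have $\|\mathbf{X}\|_{\op}\leq \sqrt{n}+C'(\sqrt{d}+\sqrt{\ln(1/\delta)})$, which is at most $C\sqrt n$ once $n\geq C(d+\ln(1/\delta))$.

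\textbf{Item 2.} I will not use the uniform bounds of Lemma~\ref{lemma:concentration}, since they would force a dependence on $\varepsilon$. Instead I use that only finitely many fixed functions are involved: $\phi_{w_1^\star},\dots,\phi_{w_{m+1}^\star}$. The relevant Gram matrix is the empirical $(m+1)\times(m+1)$ matrix of $h'(x)=(\sigma(x^\top w_i^\star))_{i\le m+1}$, whose coordinates are independent because the features are orthonormal. By the Schur complement,
\begin{equation*}
\tfrac1n\|\Pi_n Z_n(w_{m+1}^\star)\|^2=\big[(\tfrac1n F'^\top_n F'_n)^{-1}\big]_{m+1,m+1}^{-1}\ \geq\ \lambda_{\min}\big(\tfrac1n F'^\top_n F'_n\big),
\end{equation*}
where $F'_n=[F_n,Z_n(w_{m+1}^\star)]$. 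The population matrix is $G'=\frac{1}{2\pi}((\pi-1)\id[m+1]+\mathbf{1}\mathbf{1}^\top)$, so $\lambda_{\min}(G')=\frac{\pi-1}{2\pi}$. Covariance estimation for sub-Gaussian vectors in dimension $m+1\le d+1$ then gives $\lambda_{\min}(\frac1nF'^\top_nF'_n)\geq \frac{\pi-1}{4\pi}$ with probability $1-\delta/3$ when $n\gtrsim d+\ln(1/\delta)$. This step also yields Assumption~\ref{ass:Finvertible}.

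\textbf{Item 3.} This is the main obstacle and follows \citet{soltanolkotabi2017learning}. Write $J:=J_n(w_{m+1}^\star)$; almost surely no $x_k$ lies on the hyperplane, so $J$ is a single matrix with rows $\iind{x_k^\top w_{m+1}^\star>0}x_k^\top$. For $q\perp w_{m+1}^\star$, the vector $Jq$ has entries $\iind{g_k>0}\,\xi_k$, where $g_k=x_k^\top w_{m+1}^\star$ and $\xi_k=x_k^\top q$.

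The key structural observation is that $\xi_k$ is independent of $g_k$ and of all the columns of $F'_n$, since these depend only on the projections of $x_k$ onto $w_1^\star,\dots,w_{m+1}^\star$. Enlarging to the span of $q$ as well would break this uniformity, so the plan is as follows.

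1. Decompose $x_k=U^\top x_k$-part plus orthogonal part, with $U$ the span of $w_1^\star,\ldots,w_{m+1}^\star$. Then $Jq=D_g\,\mathbf{X}_\perp q$, where $D_g=\mathrm{diag}(\iind{g_k>0})$ and the rows of $\mathbf{X}_\perp$ are the projections of $x_k$ onto $U^\perp$. Here $\mathbf{X}_\perp$ is a Gaussian matrix independent of both $D_g$ and $P:=P_{n,w_{m+1}^\star}$.

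2. Condition on the $U$-components. The matrix $A:=P D_g$ is then fixed, and $q\mapsto \|A\mathbf{X}_\perp q\|^2$ restricted to $q\in U^\perp$ is a quadratic form in a Gaussian matrix. Its expectation is $\|q\|^2\|A\|_F^2$, with
\begin{equation*}
\|A\|_F^2=\mathrm{tr}(PD_g)\ \ge\ \#\{k: g_k>0\}-(m+1)\ \gtrsim\ n
\end{equation*}
with high probability, using $m+1\le d\ll n$.

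3. Obtain a uniform lower bound over $q\in\bS\cap U^\perp$. Apply Hanson--Wright or the Gaussian matrix deviation inequality to $A\mathbf{X}_\perp$, using $\|A\|_{\op}\le1$, together with an $\epsilon$-net of dimension $d$. This gives $\|A\mathbf{X}_\perp q\|^2\ge \|A\|_F^2-C\sqrt{n(d+\ln(1/\delta))}-C(d+\ln(1/\delta))\ge c n$.

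4. Handle a general $q\perp w_{m+1}^\star$ that has components along $w_1^\star,\dots,w_m^\star$. Write $q=q_U+q_\perp$. The term $D_g\mathbf{X}q_U$ depends only on the coordinates $x_k^\top w_i^\star$, which is the delicate part. I would treat $[F_n, Z_n(w_{m+1}^\star), D_g\mathbf{X}P_{m}]$ jointly: lower-bound the smallest singular value of the $n\times(2m+1)$-type matrix built from the functions $\sigma(x^\top w_i^\star)$, $\iind{x^\top w^\star_{m+1}>0}x^\top w_i^\star$ and $\sigma(x^\top w^\star_{m+1})$. Its population Gram matrix should be well-conditioned; the functions $\iind{g>0}z_i$ and $\sigma(z_i)$ are linearly independent in $L^2$, which requires an explicit computation. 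Combined with the independence of $\mathbf{X}_\perp q_\perp$, this gives the result. Controlling the cross term between $q_U$ and $q_\perp$ parts is where I expect the most work.
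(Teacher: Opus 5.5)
Items 1 and 2 are fine. Your Schur-complement bound for item 2 is the paper's argument restricted to the first $m+1$ coordinates.

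The gap is in item 3. Your steps 1--3 only prove the bound for $q\perp \mathrm{span}(w_1^\star,\dots,w_{m+1}^\star)$, but the lemma is stated for every $q\perp w_{m+1}^\star$, and that generality is really needed. Corollary~\ref{coro:concentrationP2} applies it to $q=P_{w_{m+1}^{\star\,\perp}}w$ for arbitrary $w$ near $w_{m+1}^\star$, so $q$ can point largely into $\mathrm{span}(w_1^\star,\dots,w_m^\star)$. Step 4, where this case lives, is only a plan, and two things are missing:
\begin{itemize}
\item You do not verify that the population Gram matrix of $\sigma(z_i)$, $\sigma(g)$, $\iind{g>0}z_i$ is well-conditioned, where $z_i=x^\top w_i^\star$ and $g=x^\top w_{m+1}^\star$. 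This is the actual mathematical content of the lemma. The only cross-correlations are $\bE[\sigma(z_i)\iind{g>0}z_i]=\frac14$. They are dominated because $\frac{\pi-1}{2\pi}>\frac14$, which gives smallest eigenvalue at least $0.09$.
\item You do not control the interaction between the $q_U$ and $q_\perp$ contributions, which you yourself flag as unresolved.
\end{itemize}

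The missing idea is that no splitting is necessary. Your worry that enlarging the feature vector breaks uniformity is unfounded: you can include all $d-1$ directions of $\{w_{m+1}^\star\}^\perp$ at once, at a cost of only $O(d)$ samples.
\begin{itemize}
\item Take an orthonormal basis $U$ of $\{w_{m+1}^\star\}^\perp$ whose first $m$ vectors are $w_1^\star,\dots,w_m^\star$. Set $h(x)=(\sigma(x^\top w_1^\star),\dots,\sigma(x^\top w_{m+1}^\star),\iind{x^\top w_{m+1}^\star>0}U^\top x)\in\R^{m+d}$.
\item Its second-moment matrix $M$ is block-structured. It equals $\frac{\pi-1}{2\pi}$ times the identity plus $\frac{1}{2\pi}\mathbf{1}\mathbf{1}^\top$ on the first $m+1$ coordinates, and one half times the identity on the last $d-1$. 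The only off-diagonal couplings are the entries $\frac14$ above. Hence $\lambda_{\min}(M)\geq 0.09$.
\item A single covariance-concentration bound in dimension $m+d\leq 2d$ then gives $\frac1n\sum_i (a^\top h(x_i))^2\geq c\|a\|^2$ for all $a\in\R^{m+d}$, once $n\gtrsim d+\ln(1/\delta)$.
\item Write $q=Us$ with $\|s\|=\|q\|$. Then $\frac1n\|P_{n,w_{m+1}^\star}J_n(w_{m+1}^\star)q\|^2=\inf_{\alpha\in\R^{m+1}}\frac1n\sum_i (h(x_i)^\top[-\alpha,s])^2\geq c\|s\|^2$. This holds uniformly in $q$, because $q$ enters only as a linear coefficient.
\item Item 2 follows from the same event, using the coefficient vector $[-\alpha,1,\mathbf{0}_{d-1}]$.
\end{itemize}
This replaces your conditioning, Hanson--Wright and net arguments, as well as the cross-term analysis. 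Your route could probably be completed, for example by also projecting out the columns $D_g\mathbf{X}w_i^\star$ and bounding the leakage of $\mathbf{X}_\perp q_\perp$ into that rank-$m$ space by $O(\sqrt{d+\ln(1/\delta)})$. It would still need the same Gram-matrix computation.
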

Note that $J_n(w_{m+1}^\star)$ is almost surely a singleton, so that, by abuse of notation, point 3. is stated for the single element in $J_n(w_{m+1}^\star)$.
\begin{proof}
\textbf{1. Bound on $\|X\|_{\op}$.} This is a classical concentration bound for random matrices \citep[see e.g.,][Theorem 4.4.5]{vershynin2018high}.

\textbf{3. Bound on $\|P_{n,w_{m+1}^\star} J_n(w_{m+1}^\star) q\|$.} Let $(u_1, \ldots, u_{d-1})$ be an orthonormal basis of $\{w_{m+1}^\star\}^\perp$ and define $U = [u_{1}^\top,\ldots, u_{d-1}^\top]\in \R^{d\times (d-1)}$. Without loss of generality, we can even assume that the first $m$ vectors of the basis are exactly $(w_1^\star, \ldots, w_m^\star)$, i.e., $u_k = w_k^{\star}$ for $k\leq m$. Now define for the remaining of the proof the function:
\begin{equation*}
h : \R^d \ni x \longmapsto \begin{pmatrix}
\sigma(x^\top w_{1}^\star) \\
\vdots \\
\sigma(x^\top w_{m}^\star)\\
\sigma(x^\top w_{m+1}^\star)\\
\iind{x^\top w_{m+1}^\star > 0} U^\top x
\end{pmatrix} \in \R^{m+d}.
\end{equation*}
First note that as $X\sim \cN(0,\id)$, $h(X)$ is $C$-sub-Gaussian for some universal constant $C$. Indeed, for any $a\in\R^{m+d}$,
\begin{align*}
\|a^\top h(X)\|_{\psi_2} & \leq \|a_{1:m+1}^\top h(X)_{1:m+1}\|_{\psi_2} +  \|a_{m+2:m+d}^\top h(X)_{m+2:m+d}\|_{\psi_2}.
\end{align*}
Moreover, thanks to Assumption~\ref{ass:orthofeatures}, the different coordinates of $h(X)_{1:m+1}$ are independent and identically distributed\footnote{And each coordinate is $1$-sub-Gaussian.}, so that $\|a_{1:m+1}^\top h(X)_{1:m+1}\|_{\psi_2} \leq C\|a_{1:m+1}\|$ for some universal constant. The same reasoning holds for the second term, so that $\|a^\top h(X)\|_{\psi_2}\leq C\|a\|$. 

Moreover, the arc-cosine kernel with Assumption~\ref{ass:orthofeatures} provides the following block structure:
\begin{gather*}
M \coloneqq \bE[h(X) h(X)^\top] =
\left(
\begin{array}{c|c}
\frac{\pi-1}{2\pi}\id[m+1] + \frac{1}{2\pi}\mathbf{1}_{m+1} \mathbf{1}_{m+1}^\top & \frac{1}{4}\tI_m \\[6pt]
\hline\\[-6pt]
\frac{1}{4}\tI_m^\top & \frac{1}{2}\id[d-1]
\end{array}
\right),\\
\text{where} \qquad \tI_{ij} = \begin{cases}1 \text{ if } i=j\leq m\\0 \text{ otherwise.}\end{cases}
\end{gather*}
In particular, it implies that for any $a\in\R_{m+d}$:
\begin{align*}
a^\top M a& \geq \frac{\pi-1}{2\pi} \|a_{1:m+1}\|^2 + \frac{1}{2} \|a_{m+2:m+d}\|^2 - \frac{1}{2}\|a_{1:m+1}\|\|a_{m+2:m+d}\|\\
&\geq \left(\frac{\pi-1}{2\pi}-\frac{1}{4}\right) \|a_{1:m+1}\|^2 + \left(\frac{1}{2}-\frac{1}{4}\right) \|a_{m+2:m+d}\|^2 \\
& \geq 0.09 \|a\|^2,
\end{align*}
i.e. the smallest eigenvalue of $M$ is larger than $0.09$---which is a positive universal constant.

From the above sub-Gaussian property and smallest eigenvalue bound on $M$, it thus implies that there exists a universal constant $C$, such that for any $a\in\R_{m+d}$, $\|a^\top h(X)\|_{\psi_2}^2\leq C a^\top M a$. 
A typical covariance concentration bound \citep[see e.g.,][Theorem 4.7.1 and Exercise 4.7.3]{vershynin2018high} implies if $n\geq C(m+d+\ln(1/\delta))$ that with probability at least $1-\delta$,
\begin{equation*}
\left\| \frac{1}{n}\sum_{i=1}^n M^{-1/2} h(x_i) h(x_i)^\top M^{-1/2}- \id[m+d] \right\|_{\op} \leq \frac{1}{2}.
\end{equation*}
Assume for the remaining of the proof that this event holds\footnote{By Assumption~\ref{ass:orthofeatures}, $m\leq d$.}. In particular, it implies for any $a\in\R_{m+d}$:
\begin{equation}\label{eq:conccovariance}
\frac{1}{n}\sum_{i=1}^n (a^\top h(x_i))^2 \geq \frac{1}{2}a^\top M a \geq c \|a\|^2,
\end{equation}
for some universal constant $c>0$. 
Let $q\in\{w_{m+1}^\star\}^\perp$. By definition of $U$, we can consider $s\in\R^{d-1}$ such that $Us = q$ and $\|s\|=\|q\|$. From there,
observe that
\begin{align*}
\frac{1}{n}\|P_{n,w_{m+1}^\star} J_n(w_{m+1}^\star) q\|^2 & = \inf_{\alpha\in\R^{m+1}}\frac{1}{n} \left\|J_n(w_{m+1}^\star) q - \sum_{k=1}^{m+1} \alpha_k Z_n(w_{k}^\star)\right\|^2 \\
& = \inf_{\alpha\in\R^{m+1}} \frac{1}{n}\sum_{i=1}^n \left(\iind{x_i^\top w_{m+1}^\star>0} x_i^\top q - \alpha^\top [\sigma(x_i^\top w_1^\star), \ldots, \sigma(x_i^\top w_{m+1}^\star)] \right)^2\\
& = \inf_{\alpha\in\R^{m+1}}\frac{1}{n} \sum_{i=1}^n (h(x_i)^\top[-\alpha, s])^2\\
& \geq c\inf_{\alpha\in\R^{m+1}}\|[-\alpha, s]\|^2\\
& \geq c \|s\|^2.
\end{align*}
By definition of $s$, it finally yields the third point of Lemma~\ref{lemma:P2conc1}.

\textbf{2. Bound on $\|\Pi_n Z_n(w_{m+1}^\star)\|$.} Again, assuming that Equation~\eqref{eq:conccovariance} holds, it comes
\begin{align*}
\frac{1}{n}\|\Pi_n Z_n(w_{m+1}^\star)\|^2 & = \inf_{\alpha \in \R^m}\frac{1}{n} \|Z_n(w_{m+1}^\star) - \sum_{k=1}^m \alpha_k Z_n(w_{k}^\star) \|^2 \\
& =  \inf_{\alpha\in\R^{m}} \frac{1}{n}\sum_{i=1}^n \left(h(x_i)^\top[-\alpha, 1, \mathbf{0}_{d-1}]\right)^2\\
& \geq c\inf_{\alpha\in\R^{m}} \|[-\alpha, 1, \mathbf{0}_{d-1}]\|^2 \\
& \geq c.
\end{align*}
\end{proof}

\begin{lemma}\label{lemma:P2conc2}
Let $\delta\in(0,1/2)$ and $\varepsilon\in(0,1)$. There exist positive constants $\eta_\varepsilon$ and $C_\varepsilon$ such that, if $n\geq C_\varepsilon(d+\ln(1/\delta))$, then with probability at least $1-\delta$, uniformly for all $w\in\bS_{d-1}$ such that $1-w^\top w_{m+1}^\star \leq \eta_\varepsilon$,
\begin{enumerate}
\item $\displaystyle\frac{1}{n}\sum_{i=1}^n \iind{(x_i^\top w) (x_i^\top w_{m+1}^\star)\leq 0} \left(x_i^\top P_{w^\perp} w_{m+1}^\star\right)^2 \leq \varepsilon \| P_{w^\perp} w_{m+1}^\star\|^2$;
\item $\displaystyle\frac{1}{n}\sum_{i=1}^n \iind{(x_i^\top w) (x_i^\top w_{m+1}^\star)\leq 0} \left(x_i^\top  w_{m+1}^\star\right)^2 \leq \varepsilon \| P_{w^\perp} w_{m+1}^\star\|^2.$.
\end{enumerate}
\end{lemma}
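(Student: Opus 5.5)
The plan is to reduce both items to one uniform bound on a thin double-wedge around the hyperplane $\{x^\top w_{m+1}^\star=0\}$. That bound can then be handled with the same VC/chaining machinery as in Lemma~\ref{lemma:concentration}, Lemma~\ref{lemma:chainingVC}, in the spirit of \citet{soltanolkotabi2017learning}.

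\textbf{Step 1: geometric reduction.} Fix $w\in\bS_{d-1}$ with $w\neq w_{m+1}^\star$, and let $\theta\in(0,\pi/2)$ be its angle with $w_{m+1}^\star$.
\begin{itemize}
\item Write $w=\cos\theta\, w_{m+1}^\star+\sin\theta\, u$, where $u\in\bS_{d-1}\cap\{w_{m+1}^\star\}^\perp$.
\item Then $\|P_{w^\perp}w_{m+1}^\star\|=\sin\theta$, and $P_{w^\perp}w_{m+1}^\star=\sin\theta\,(\sin\theta\, w_{m+1}^\star-\cos\theta\, u)$.
\item Let $g=x^\top w_{m+1}^\star$ and $h=x^\top u$. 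The event $(x^\top w)(x^\top w_{m+1}^\star)\le 0$ means $g(g\cos\theta+h\sin\theta)\le0$, which forces $|g|\le \tan\theta\,|h|$.
\item On this event, $(x^\top w_{m+1}^\star)^2=g^2\le \tan^2\theta\, h^2$.
\item Also $(x^\top P_{w^\perp}w_{m+1}^\star)^2\le \sin^2\theta\,(|g|+|h|)^2\le 4\sin^2\theta\, h^2$ once $\tan\theta\le1$.
\end{itemize}
With $t=\tan\theta$, and since $\tan^2\theta\le 2\sin^2\theta$ for small $\theta$, both items follow if, uniformly over $u\in\bS_{d-1}\cap\{w_{m+1}^\star\}^\perp$ and $t\in[0,t_0]$,
\begin{equation*}
\frac1n\sum_{i=1}^n \mathbf{1}\{|x_i^\top w_{m+1}^\star|\le t\,|x_i^\top u|\}\,(x_i^\top u)^2\le \varepsilon/4 .
\end{equation*}
The threshold $t_0$ is small and is chosen through $\eta_\varepsilon$, since $1-w^\top w_{m+1}^\star\le\eta_\varepsilon$ gives $\theta\lesssim\sqrt{\eta_\varepsilon}$. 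By monotonicity in $t$, it suffices to take $t=t_0$.

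\textbf{Step 2: population bound.} Since $u\perp w_{m+1}^\star$, $g$ and $h$ are independent standard Gaussians. Hence $\bE[\mathbf{1}\{|g|\le t_0|h|\}h^2]\le \bE[\tfrac{2}{\sqrt{2\pi}}t_0|h|\cdot h^2]\le C t_0$. Choosing $t_0$, equivalently $\eta_\varepsilon$, so that $Ct_0\le\varepsilon/8$ makes the expectation at most $\varepsilon/8$ uniformly in $u$.

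\textbf{Step 3: uniform concentration (main obstacle).} We need $\sup_u$ of the deviation between the empirical and population averages to be at most $\varepsilon/8$ once $n\ge C_\varepsilon(d+\ln(1/\delta))$.
\begin{itemize}
\item The indicator is that of the union of the two intersections $\{t_0h\ge g\}\cap\{t_0h\ge -g\}$ and $\{t_0h\le g\}\cap\{t_0h\le -g\}$. Each is an intersection of two halfspaces through the origin indexed by $u$, so the class $\cA$ of such sets has VC dimension $O(d)$.
\item The weight $(x^\top u)^2=h_1(x)^\top u\, u^\top h_2(x)$ with $h_1=h_2=\id$ is exactly of the form treated by Lemma~\ref{lemma:chainingVC}, as in items 2 and 4 of Lemma~\ref{lemma:concentration}.
\item Applying that lemma at accuracy $\varepsilon/8$ gives the deviation bound with probability $1-\delta$ when $n\ge \frac{C}{\varepsilon^2}\ln(1/\varepsilon)^2(d+\ln(1/\delta))$, which is of the claimed form $C_\varepsilon(d+\ln(1/\delta))$.
\end{itemize}
The delicate point is the heavy-ish tail of $(x^\top u)^2$ together with the supremum over a VC class. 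If Lemma~\ref{lemma:chainingVC} does not directly apply, I would truncate at $|x^\top u|\le M_\varepsilon$. The tail part is then controlled by $\frac1n\sum_i (x_i^\top u)^2\mathbf{1}\{|x_i^\top u|>M_\varepsilon\}$, which is bounded via $\|\mathbf{X}\|_{\op}\le C\sqrt n$ (Lemma~\ref{lemma:P2conc1}) plus a VC bound on the halfspace class $\{|x^\top u|>M\}$. The bounded part is handled by standard VC uniform convergence.

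Combining Steps 2 and 3 yields the displayed bound, and Step 1 then gives both items.
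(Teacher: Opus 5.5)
Your proof is correct. Step~1 is essentially the paper's reduction. The paper also shows that $(x^\top w)(x^\top w_{m+1}^\star)\le 0$ confines $x$ to a thin double wedge around $\{x^\top w_{m+1}^\star=0\}$, though with the looser aperture $\tfrac12|x^\top w_{m+1}^\star|\le|x^\top P_{w_{m+1}^{\star\,\perp}}w|$, i.e.\ $t=2\sin\theta$ instead of your $\tan\theta$. It then dominates both weights by constant multiples of $(x^\top P_{w_{m+1}^{\star\,\perp}}w)^2$.

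The routes differ in the concentration step.
\begin{itemize}
\item The paper outsources it to Lemma~\ref{lemma:solta}, a direct application of Lemma~5.5 of \citet{soltanolkotabi2017learning}. That lemma gives the wedge bound $C(\varepsilon_1+\varepsilon_2)\|P_{w_{m+1}^{\star\,\perp}}w\|^2$ uniformly over $\|P_{w_{m+1}^{\star\,\perp}}w\|\le\varepsilon_2$ from $n\gtrsim\varepsilon_1^{-1}(d+\ln(1/\delta))$ samples.
\item You normalize by homogeneity and use monotonicity in the aperture to reduce to a single $t_0$. You bound the mean directly ($\le 4t_0/\pi$) using independence of $x^\top w_{m+1}^\star$ and $x^\top u$. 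You control the uniform additive deviation with the paper's own Lemma~\ref{lemma:chainingVC}, taking $h_1=h_2=\id$ and the double-wedge class, which has VC dimension $O(d)$.
\end{itemize}
The normalization-plus-monotonicity step is what lets a purely additive uniform bound produce a right-hand side proportional to the vanishing quantity $\|P_{w^\perp}w_{m+1}^\star\|^2$. It is the crux of your route.

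What your route buys is self-containedness within the paper's toolkit. It avoids relying on an external statement that, as the paper itself notes, contains typos. The price is a worse constant, $C_\varepsilon\asymp\varepsilon^{-2}\ln(1/\varepsilon)^2$ instead of order $\varepsilon^{-1}$. This is immaterial here, since $\varepsilon$ is a fixed universal constant when the lemma is used in Lemma~\ref{lemma:P2emprate}.

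Your truncation fallback is unnecessary, because Lemma~\ref{lemma:chainingVC} already handles the sub-exponential weight $(x^\top u)^2$. As written it would also be incomplete. The bound $\|\mathbf{X}\|_{\op}\le C\sqrt{n}$ only bounds the tail sum by a constant, not by something small. You would need operator-norm control over all small subsets of rows.
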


\begin{proof}
Let $\eta\in(0,1/2)$ and $\varepsilon_1\in(0,1/2)$ to be fixed later. Lemma~\ref{lemma:solta} \citep[which is adapted from][Lemma 5.5]{soltanolkotabi2017learning} yields that, if $n\geq \frac{C}{\varepsilon_1}\left(d+\ln(1/\delta)\right)$ for some universal constant $C\geq 1$, then with probability at least $1-\delta$, uniformly over all $w\in\bS_{d-1}$ such that $\|P_{w_{m+1}^{\star\,\perp}}w\|\leq \sqrt{2\eta}$,
\begin{equation}\label{eq:solta}
\frac{1}{n}\sum_{i=1}^n \iind{\frac{1}{2}|x_i^\top w^\star_{m+1}|\leq |x_i^\top P_{w_{m+1}^{\star\,\perp}} w|} \left(x_i^\top P_{w_{m+1}^{\star\,\perp}} w\right)^2 \leq C\left(\varepsilon_1+\sqrt{\eta}\right) \|P_{w_{m+1}^{\star\,\perp}}w\|^2.
\end{equation}
Assume in the following of the proof that Equation~\eqref{eq:solta} holds for any $w\in\bS_{d-1}$ such that $\|P_{w_{m+1}^{\star\,\perp}}w\|\leq \sqrt{2\eta}$. 

Let now $w\in\bS_{d-1}$ with $w^\top w_{m+1}^\star\geq 1-\eta$. First notice that $\|P_{w_{m+1}^{\star\,\perp}}w\|\leq \sqrt{2\eta}$. 
Moreover, decomposing $w_{m+1}^\star = (w^\top w_{m+1}^\star)  w_{m+1}^\star + P_{ w_{m+1}^{\star\,\perp}}w$, observe that for any $x\in\R^d$
\begin{align}
\iind{(x^\top w) (x^\top w_{m+1}^\star)\leq 0} & \leq \iind{(x^\top  w_{m+1}^\star)^2 w^\top w_{m+1}^\star \leq - (x^\top  w_{m+1}^\star) (x^\top P_{ w_{m+1}^{\star\,\perp}}w)}\notag\\
& \leq  \iind{(w^\top w_{m+1}^\star) |x^\top w_{m+1}^\star | \leq |x^\top P_{ w_{m+1}^{\star\,\perp}}w|}\notag\\
& \leq  \iind{\frac{1}{2}|x^\top w_{m+1}^\star | \leq |x^\top P_{ w_{m+1}^{\star\,\perp}}w|}.\label{eq:inclevent1}
\end{align}
In particular, it yields for any $i\in[n]$
\begin{align*}
\iind{(x_i^\top w) (x_i^\top w_{m+1}^\star)\leq 0} \left(x_i^\top  w_{m+1}^\star\right)^2 & \leq \iind{\frac{1}{2}|x_i^\top w_{m+1}^\star | \leq |x_i^\top P_{ w_{m+1}^{\star\,\perp}}w|} \left(x_i^\top  w_{m+1}^\star\right)^2 \\
&\leq 4 \ \iind{\frac{1}{2}|x_i^\top w_{m+1}^\star | \leq |x_i^\top P_{ w_{m+1}^{\star\,\perp}}w|}\left(x_i^\top P_{w_{m+1}^{\star\,\perp}} w\right)^2,
\end{align*}
which yields, thanks to Equation~\eqref{eq:solta}, the second inequality in Lemma~\ref{lemma:P2conc2} when choosing $\eta\leq\left(\frac{\varepsilon}{8C}\right)^2$ and $\varepsilon_1\leq\frac{\varepsilon}{8C}$.

Moreover, note that
\begin{align*}
P_{w^\perp} w_{m+1}^\star & = w_{m+1}^\star - (w^\top w_{m+1}^\star)w\\
& =  w_{m+1}^\star - (w^\top w_{m+1}^\star)\left(P_{w_{m+1}^{\star\,\perp}} w +(w^\top w_{m+1}^\star)w_{m+1}^\star\right) \\
&= ( 1- (w^\top w_{m+1}^\star)^2) w_{m+1}^\star - (w^\top w_{m+1}^\star)P_{w_{m+1}^{\star\,\perp}} w.
\end{align*}
So that whenever $\frac{1}{2}|x^\top w_{m+1}^\star | \leq |x^\top P_{ w_{m+1}^{\star\,\perp}}w|$, we have $|x^\top P_{w^\perp} w_{m+1}^\star| \leq 3|x^\top P_{w_{m+1}^{\star\,\perp}} w|$. Along with Equation~\eqref{eq:inclevent1}, this yields for any $i\in[n]$
\begin{align*}
\left(x_i^\top P_{w^\perp} w_{m+1}^\star\right)^2 \iind{(x_i^\top w) (x_i^\top w_{m+1}^\star)\leq 0} \leq 9 \left(x_i^\top P_{w_{m+1}^{\star\,\perp}} w\right)^2\iind{\frac{1}{2}|x_i^\top w_{m+1}^\star | \leq |x_i^\top P_{ w_{m+1}^{\star\,\perp}}w|}.
\end{align*}
The first inequality of Lemma~\ref{lemma:P2conc2} is then also a consequence of Equation~\eqref{eq:solta}, when taking $\eta=\left(\frac{\varepsilon}{18C}\right)^2$ and $\varepsilon_1=\frac{\varepsilon}{18C}$.
\end{proof}

\begin{corollary}\label{coro:concentrationP2}
Assume that the concentration events of both Lemmas~\ref{lemma:P2conc1} and \ref{lemma:P2conc2} hold with parameters $\varepsilon, \eta_{\varepsilon}, c$ and $C$. We then have for any $w\in\bS_{d-1}$ such that $1-w^\top w_{m+1}^\star \leq \min\left(\eta_\varepsilon,\frac{1}{2}\right)$
\begin{enumerate}
\item $\frac{1}{n}\left\| P_{n,w_{m+1}^\star} J_n(w_{m+1}^\star) P_{w^\perp} w_{m+1}^\star \right\|^2\geq \frac{3c}{8} \left(1-w^\top w_{m+1}^\star\right)$;
\item $\|P_{n,w}-P_{n,w_{m+1}^\star}\|_{\op} \leq 4\sqrt{2}\,\frac{\sqrt{2c}+C}{c}\, C\sqrt{\left(1-w^\top w_{m+1}^\star\right)}$;
\end{enumerate}
and for any $J\in J_n(w)$:
\begin{enumerate}[start=3]
\item $\frac{1}{n} \|J P_{w^\perp} w_{m+1}^\star \|^2 \leq 2C^2 \left(1-w^\top w_{m+1}^\star\right)$;
\item $\frac{1}{n}\left\| \left(J-J_n(w_{m+1}^\star)\right)P_{w^\perp} w_{m+1}^\star \right\|^2 \leq 2\varepsilon \left(1-w^\top w_{m+1}^\star\right)$;
\item $\frac{1}{n} \left\|Z_n(w_{m+1}^\star) - w^\top w_{m+1}^\star Z_n(w) - JP_{w^\perp} w_{m+1}^\star\right\|^2 \leq 2\varepsilon \left(1-w^\top w_{m+1}^\star\right)$.
\end{enumerate}
\end{corollary}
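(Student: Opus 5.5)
Throughout, write $q \coloneqq P_{w^\perp} w_{m+1}^\star$ and $\rho \coloneqq 1-w^\top w_{m+1}^\star \le \min(\eta_\varepsilon,1/2)$. The starting identity is
\[
\|q\|^2 = 1-(w^\top w_{m+1}^\star)^2 = \rho(2-\rho)\in[\rho,2\rho].
\]
Every bound in the corollary is obtained by reducing to the events of Lemmas~\ref{lemma:P2conc1} and~\ref{lemma:P2conc2} and then using this identity.

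\textbf{Items 3, 4 and 5.} For item 3, every row of $J\in J_n(w)$ has the form $s_i x_i$ with $s_i\in[0,1]$. Hence $\|Jq\|\le \|\mathbf{X}q\|\le \|\mathbf{X}\|_{\op}\|q\|$, and item 1 of Lemma~\ref{lemma:P2conc1} gives $\frac1n\|Jq\|^2\le C^2\|q\|^2\le 2C^2\rho$.

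For item 4, the $i$-th row of $J-J_n(w_{m+1}^\star)$ can be nonzero only when $\iind{x_i^\top w>0}\neq\iind{x_i^\top w_{m+1}^\star>0}$. Up to the ties where $s_i$ is fractional, which also lie in this set, this happens only if $(x_i^\top w)(x_i^\top w_{m+1}^\star)\le 0$. On that set the row has magnitude at most $|x_i^\top q|$. Item 1 of Lemma~\ref{lemma:P2conc2} then gives $\frac1n\|(J-J_n(w^\star_{m+1}))q\|^2\le\varepsilon\|q\|^2\le 2\varepsilon\rho$.

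For item 5, decompose coordinatewise $x_i^\top w_{m+1}^\star = (w^\top w_{m+1}^\star)\,x_i^\top w + x_i^\top q$. If $x_i^\top w>0$ and $x_i^\top w_{m+1}^\star>0$, the $i$-th entry is exactly $\sigma(x_i^\top w_{m+1}^\star)-(w^\top w^\star_{m+1})\sigma(x_i^\top w)-x_i^\top q=0$. If both inner products are $\le 0$, all three terms vanish. Only the sign-disagreement set remains. There the entry is bounded in absolute value by $|x_i^\top w_{m+1}^\star|+|x_i^\top q|$, possibly up to the constant $(w^\top w_{m+1}^\star)\le1$. Squaring and applying both items of Lemma~\ref{lemma:P2conc2} bounds the entry by $\lesssim\varepsilon\|q\|^2$. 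Rescaling $\varepsilon$ by a constant, which is allowed since $\eta_\varepsilon$ is free, gives the stated $2\varepsilon\rho$.

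\textbf{Item 1.} Write $J^\star=J_n(w_{m+1}^\star)$. Decompose $q=\alpha w_{m+1}^\star + q_\perp$ with $q_\perp\perp w_{m+1}^\star$. Since $q=(1-(w^\top w^\star_{m+1})^2)w_{m+1}^\star-(w^\top w^\star_{m+1})P_{w_{m+1}^{\star\perp}}w$, we get $\|q_\perp\|^2=(w^\top w_{m+1}^\star)^2(1-(w^\top w_{m+1}^\star)^2)\ge\frac14\|q\|^2$, using $w^\top w^\star_{m+1}\ge1/2$.

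Next, $J^\star w_{m+1}^\star=Z_n(w_{m+1}^\star)$ lies in the column space of $[F_n,Z_n(w_{m+1}^\star)]$, so $P_{n,w_{m+1}^\star}$ annihilates it. Therefore $P_{n,w^\star_{m+1}}J^\star q=P_{n,w^\star_{m+1}}J^\star q_\perp$. Item 3 of Lemma~\ref{lemma:P2conc1} then gives $\frac1n\|\cdot\|^2\ge c\|q_\perp\|^2\ge \frac{c}{4}(w^\top w^\star_{m+1})^{-2}\cdots$. Tracking the constants more tightly, with $(w^\top w^\star_{m+1})^2\ge 3/8$ type bounds and $\|q\|^2\ge\rho$, should yield $\frac{3c}{8}\rho$.

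\textbf{Item 2.} This is the main obstacle, and it needs the notation $P_{n,w}$: I read it as the projection onto the orthogonal of the column space of $[F_n,Z_n(w)]$, i.e.\ $\Pi_n-\frac{\Pi_nZ_n(w)Z_n(w)^\top\Pi_n}{\|\Pi_nZ_n(w)\|^2}$. The difference of two rank-one projections $u u^\top/\|u\|^2$ and $v v^\top/\|v\|^2$ has operator norm at most $2\|u-v\|/\max(\|u\|,\|v\|)$. Take $u=\Pi_nZ_n(w_{m+1}^\star)$ and $v=\Pi_n Z_n(w)$.
\begin{itemize}
\item For the denominator, item 2 of Lemma~\ref{lemma:P2conc1} gives $\|u\|\ge\sqrt{cn}$.
\item For the numerator, $\|u-v\|\le\|Z_n(w^\star_{m+1})-Z_n(w)\|\le\|\mathbf{X}(w^\star_{m+1}-w)\|$ because $\sigma$ is $1$-Lipschitz. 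This is at most $C\sqrt n\,\|w-w^\star_{m+1}\|=C\sqrt{n}\sqrt{2\rho}$.
\end{itemize}
Combining these gives a bound $\lesssim \frac{C}{\sqrt c}\sqrt\rho$. To match the exact stated constant $4\sqrt2\frac{\sqrt{2c}+C}{c}C$, I expect one has to redo the bound via the explicit formula $\frac{uu^\top}{\|u\|^2}-\frac{vv^\top}{\|v\|^2}$, using the triangle inequality and $\|v\|\le\|u\|+\|u-v\|$, with $\|u\|\le \|Z_n(w_{m+1}^\star)\|\le C\sqrt n$. I expect only the constant bookkeeping, not the structure, to be delicate here.
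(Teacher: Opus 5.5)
\textbf{Items 1--4.} These follow the paper's proof essentially verbatim. Item 1 uses the annihilation of $Z_n(w_{m+1}^\star)=J_n(w_{m+1}^\star)w_{m+1}^\star$ by $P_{n,w_{m+1}^\star}$, then item 3 of Lemma~\ref{lemma:P2conc1}. Item 3 uses $\|J\|_{\op}\le\|\mathbf{X}\|_{\op}$. Item 4 uses the sign-disagreement set with item 1 of Lemma~\ref{lemma:P2conc2}. Item 2 uses the rank-one projection perturbation with $\|u\|^2\ge cn$ and $\|u-v\|\le\|\mathbf{X}\|_{\op}\|w-w_{m+1}^\star\|$.

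Your two loose ends there are easy to close. In item 1, with $t=w^\top w_{m+1}^\star\ge\frac12$, you have $\|q_\perp\|^2=t^2(1-t^2)=t^2(1+t)\rho\ge\frac14\cdot\frac32\,\rho$. So you must use $\|q\|^2=\rho(2-\rho)\ge\frac32\rho$ rather than $\|q\|^2\ge\rho$, and this gives exactly $\frac{3c}{8}\rho$. In item 2 nothing needs redoing: your bound $\frac{2\sqrt2\,C}{\sqrt c}\sqrt\rho$ is already below the stated $4\sqrt2\,\frac{\sqrt{2c}+C}{c}\,C\sqrt\rho=\frac{8C}{\sqrt c}\sqrt\rho+\frac{4\sqrt2\,C^2}{c}\sqrt\rho$. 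Your rank-one inequality even holds without the factor $2$, since the operator norm equals $\sin\angle(u,v)\le\|u-v\|/\max(\|u\|,\|v\|)$.

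\textbf{Item 5: the genuine gap.} Bounding the entry on the sign-disagreement set by $|x_i^\top w_{m+1}^\star|+|x_i^\top q|$ and invoking both items of Lemma~\ref{lemma:P2conc2} gives at best $4\varepsilon\|q\|^2\le 8\varepsilon\rho$. You cannot then rescale $\varepsilon$. The corollary's hypothesis fixes $\varepsilon$ (the events of Lemma~\ref{lemma:P2conc2} hold with it), and the conclusion is stated with the same $\varepsilon$, so rescaling changes the statement.

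The missing observation is an exact cancellation. Let $e_i$ be the $i$-th entry of $Z_n(w_{m+1}^\star)-(w^\top w_{m+1}^\star)Z_n(w)-Jq$. Write the $i$-th row of $J$ as $s_ix_i^\top$ with $s_i\in\partial\sigma(x_i^\top w)$, and use $x_i^\top q=x_i^\top w_{m+1}^\star-(w^\top w_{m+1}^\star)\,x_i^\top w$. Then:
\begin{itemize}
\item if $x_i^\top w>0$, then $e_i=\sigma(x_i^\top w_{m+1}^\star)-x_i^\top w_{m+1}^\star=\sigma(-x_i^\top w_{m+1}^\star)$;
\item if $x_i^\top w<0$, then $e_i=\sigma(x_i^\top w_{m+1}^\star)$;
\item if $x_i^\top w=0$, then $e_i=\sigma(x_i^\top w_{m+1}^\star)-s_ix_i^\top w_{m+1}^\star$.
\end{itemize}
In every case $0\le e_i\le|x_i^\top w_{m+1}^\star|\,\iind{(x_i^\top w)(x_i^\top w_{m+1}^\star)\le0}$. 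Item 2 of Lemma~\ref{lemma:P2conc2} alone then gives $\frac1n\sum_i e_i^2\le\varepsilon\|q\|^2\le2\varepsilon\rho$, which is the paper's argument.

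This also corrects your claim that all terms vanish when both inner products are $\le0$. At a tie $x_i^\top w=0$ with $s_i>0$ and $x_i^\top w_{m+1}^\star<0$, the entry is $s_i|x_i^\top w_{m+1}^\star|\neq0$; this case lies in the disagreement set anyway, so it is harmless. The constant loss in your version would also be harmless in Lemma~\ref{lemma:P2emprate}, where $\varepsilon$ is a free small constant. As written, however, it does not prove item 5.
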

\begin{proof}
In the whole proof, let $w\in\bS_{d-1}$ be such that $1-w^\top w_{m+1}^\star \leq \min\left(\eta_\varepsilon,\frac{1}{2}\right)$ and denote $u=P_{w^\perp} w_{m+1}^\star$ for shortness. We first recall the useful identity:
\begin{equation}\label{eq:projidentity}
u = P_{w^\perp} w_{m+1}^\star = \left(1-(w^\top w_{m+1}^\star)^2\right)w_{m+1}^\star - (w^\top w_{m+1}^\star)P_{w_{m+1}^{\star\,\perp}}w.
\end{equation}
In the remaining, we will also be using the fact that $\|u\|^2=1-(w^\top w_{m+1}^\star)^2 \leq 2 (1-w^\top w_{m+1}^\star)$.

\textbf{1.}  By definition, note that $J_n(w_{m+1}^\star) w_{m+1}^\star = Z_n(w_{m+1}^\star)$, so that $P_{n,w_{m+1}^\star}J_n(w_{m+1}^\star) w_{m+1}^\star = \mathbf{0}$. In consequence, using Equation~\eqref{eq:projidentity}
\begin{align*}
\frac{1}{n}	\left\|P_{n,w_{m+1}^\star} J_n(w_{m+1}^\star) u\right\|^2 & = (w^\top w_{m+1}^\star)^2 \frac{1}{n}\left\|P_{n,w_{m+1}^\star} J_n(w_{m+1}^\star) P_{w_{m+1}^{\star\,\perp}}w\right\|^2 \\
& \geq c (w^\top w_{m+1}^\star)^2 \|P_{w_{m+1}^{\star\,\perp}}w\|^2 \\
& \geq c (w^\top w_{m+1}^\star)^2 \left(1-(w^\top w_{m+1}^\star)^2\right) \\
& \geq \frac{3c}{8} \left(1-w^\top w_{m+1}^\star\right),
\end{align*}
where the second line is due to item 3. of Lemma~\ref{lemma:P2conc1}.

\textbf{2. } For shortness of notation, denote here the vectors $a = \frac{1}{\sqrt{n}}\Pi_n Z_n(w_{m+1}^\star)$ and $b = \frac{1}{\sqrt{n}}\Pi_n Z_n(w)$. By definition, $P_{n,w}-P_{n,w_{m+1}^\star}=\frac{aa^\top}{\|a\|^2} - \frac{bb^\top}{\|b\|^2}$. In consequence,
\begin{align}
\|P_{n,w}-P_{n,w_{m+1}^\star}\|_{\op} & = \left\|\frac{aa^\top}{\|a\|^2} - \frac{bb^\top}{\|b\|^2}\right\|_{\op} \notag\\
& \leq \frac{1}{\|a\|^2} \left( \|aa^\top - bb^\top\|_{\op}\right) + \|bb^\top\|_\op \left(\frac{1}{\|b\|^2}- \frac{1}{\|a\|^2}\right)\notag\\
& \leq \frac{1}{\|a\|^2} \left( \|ba^\top - bb^\top\|_{\op}+\|aa^\top - ba^\top\|_{\op}\right) + (1-\frac{\|b\|^2}{\|a\|^2})\notag\\
& \leq 2\frac{ \|a\|+\|b\|}{\|a\|^2}\|a-b\| \leq \frac{ 4\|a\|+2\|a-b\|}{\|a\|^2}\|a-b\| .\label{eq:rankonediff}
\end{align}
Note that, thanks to item 2. of Lemma~\ref{lemma:P2conc1}, $\|a\|^2 \geq c$. Moreover, 
\begin{align*}
\|a - b\|^2 & = \frac{1}{n} \|Z_n(w) - Z_n(w_{m+1}^\star)\|^2 \\
&\leq \frac{1}{n}  \|\mathbf{X}\|^2_{\op} \ \|w-w_{m+1}^\star\|^2 \\
& \leq 2 C^2 (1-w^\top w_{m+1}^\star). 
\end{align*}
Plugging this into Equation~\eqref{eq:rankonediff} concludes item 3.

\textbf{3.} It is a simple consequence of the fact that $\|J\|_{\op}\leq \|\mathbf{X}\|_{\op}$ for any $J\in J_n(w)$, along with item 1. of Lemma~\ref{lemma:P2conc1}.

\textbf{4.} It is a direct application of the first item of Lemma~\ref{lemma:P2conc2}.

\textbf{5.} It is a consequence of the following inequality for any $J\in J_n(w)$ and $i\in[n]$:
\begin{equation*}
0 \leq \left(Z_n(w_{m+1}^\star)-w^\top w_{m+1}^\star Z_n(w) - J q\right)_i \leq |x_i^\top w_{m+1}^\star| \iind{(x_i^\top w) (x_i^\top w_{m+1}^\star)\leq 0},
\end{equation*}
where the right inequality is an equality whenever $x_i^\top w \neq 0$. 
Item 5. then directly follows from the second item of Lemma~\ref{lemma:P2conc2}.
\end{proof}

We rely on the following decomposition.
\begin{lemma}\label{lemma:P2decomposition}
For any $w\in\R^d$ such that $D_n(w)>0$, and noting $u=P_{w^\perp} w_{m+1}^{\star}$,
\begin{equation*}
u^\top \Psi_n(w) = \left\{ \frac{1}{n}\left\|P_{n,w} J u\right\|^2  + \frac{1}{n}\left(J u\right)^\top P_{n,w} \left(Z_n(w_{m+1}^\star)-w^\top w_{m+1}^\star Z_n(w) - J u \right) \mid J\in J_n(w)\right\},
\end{equation*}
where $P_{n,w} \coloneqq \Pi_n - \frac{\Pi_n Z_n(w)Z_n(w)^\top \Pi_n}{Z_n(w)^\top\Pi_n Z_n(w)}$ is the orthogonal projection on the orthogonal of the column space of $[F_n, Z_n(w)]$.
\end{lemma}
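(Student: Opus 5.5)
The identity is purely algebraic, so the plan is to fix an arbitrary $J\in J_n(w)$ and rewrite the corresponding element of $\Psi_n(w)$. By definition, that element is
\[
\psi_J=\tfrac{1}{n}J^\top \Pi_n r, \qquad r \coloneqq Z_n(w_{m+1}^\star) - \tfrac{H_n(w)}{D_n(w)} Z_n(w).
\]
The first step is to show that $\Pi_n r = P_{n,w} Z_n(w_{m+1}^\star)$. Since $P_{n,w} = \Pi_n - \frac{\Pi_n Z_n(w) Z_n(w)^\top \Pi_n}{Z_n(w)^\top \Pi_n Z_n(w)}$, this follows from the definitions $H_n(w)=\frac1n Z_n(w_{m+1}^\star)^\top\Pi_n Z_n(w)$ and $D_n(w)=\frac1n Z_n(w)^\top \Pi_n Z_n(w)$. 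The latter is positive by assumption, so the ratio $H_n/D_n$ is exactly the coefficient of the rank-one correction.

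The second step is to add and subtract inside the projection. Because $P_{n,w}$ annihilates $Z_n(w)$, we may write
\[
P_{n,w} Z_n(w_{m+1}^\star) = P_{n,w}\big(Z_n(w_{m+1}^\star) - w^\top w_{m+1}^\star Z_n(w) - Ju\big) + P_{n,w} Ju.
\]
It follows that
\[
u^\top\psi_J = \tfrac1n (Ju)^\top P_{n,w} Ju + \tfrac1n (Ju)^\top P_{n,w}\big(Z_n(w_{m+1}^\star) - w^\top w_{m+1}^\star Z_n(w) - Ju\big).
\]
Since $P_{n,w}$ is a symmetric idempotent, the first term equals $\frac1n\|P_{n,w}Ju\|^2$. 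This gives the claimed expression for each $J$.

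Finally, ranging over all $J\in J_n(w)$ gives equality of the two sets, since both sides are parametrized by the same $J$. The only potential subtlety is making sure the same $J$ appears in both slots. This is automatic, because $\Psi_n(w)$ is defined as the image of $J_n(w)$ under the map $J\mapsto\psi_J$. I expect no real obstacle; the main care needed is to check that $P_{n,w}$ is indeed the orthogonal projection onto the complement of the column space of $[F_n, Z_n(w)]$, which uses $D_n(w)>0$. Note also that $w$ need not lie on the unit sphere for this argument, although $u=P_{w^\perp}w_{m+1}^\star$ implicitly presumes $\|w\|=1$.
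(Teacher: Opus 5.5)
Your proof is correct and follows the paper's argument. Both first reduce to the identity $\Pi_n\bigl(Z_n(w_{m+1}^\star) - \frac{H_n(w)}{D_n(w)}Z_n(w)\bigr) = P_{n,w}Z_n(w_{m+1}^\star)$, then use $P_{n,w}Z_n(w)=0$ to add and subtract $Ju$ and $w^\top w_{m+1}^\star Z_n(w)$. The only cosmetic difference is that you check the first identity by substituting the definitions of $H_n, D_n$ directly, whereas the paper derives it from the least-squares characterization of $H_n/D_n$ as the optimal output weight $a_{m+1}$.
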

\begin{proof}
Recall that $\Psi_n = \left\{\frac{1}{n}J^\top\Pi_n \left(Z_n(w^\star_{m+1}) - \frac{H_n(w)}{D_n(w)}  Z_n(w)\right) \mid J\in J_n(w) \right\}$.

It then suffices to show that for any $J\in J_n(w)$
\begin{align}
\Pi_n \left(Z_n(w_{m+1}^\star)-\frac{H_n(w)}{D_n(w)} Z_n(w)\right) & = P_{n,w} J u + P_{n,w} \left(Z_n(w_{m+1}^\star)-w^\top w_{m+1}^\star Z_n(w) - J u \right) \notag\\
& = P_{n,w} \left(Z_n(w_{m+1}^\star)-w^\top w_{m+1}^\star Z_n(w)\right) \label{eq:decompositionP2}.
\end{align}
Let us prove Equation~\eqref{eq:decompositionP2} above---which does not depend on $J$ anymore. 
For that, recall the observation from the proof of 
Lemma~\ref{lemma:empiricalloss} (computation of $a_n(W)$), that
\begin{equation*}
\frac{H_n(w_{m+1})}{D_n(w_{m+1})} = \arg\min_{a\in\R} \left\|\Pi_n\left(Z_n(w_{m+1}^\star) - a Z_n(w_{m+1})\right)\right\|^2.
\end{equation*}
Equivalently, $\Pi_n \left(Z_n(w_{m+1}^\star)-\frac{H_n(w)}{D_n(w)} Z_n(w)\right)$ is the orthogonal projection of $\Pi_n Z_n(w_{m+1}^\star)$ onto $\Pi_n Z_n(w)^\perp$, i.e.,
\begin{align*}
\Pi_n \left(Z_n(w_{m+1}^\star)-\frac{H_n(w)}{D_n(w)} Z_n(w)\right) &= \left(\id[n]-\frac{\Pi_n Z_n(w)Z_n(w)^\top \Pi_n}{Z_n(w)^\top\Pi_n Z_n(w)}\right) \Pi_n Z_n(w_{m+1}^\star)\\
& = P_{n,w}  Z_n(w_{m+1}^\star).
\end{align*}
Since $P_{n,w}Z_n(w) = \mathbf{0}$ by definition of $P_{n,w}$, it directly implies Equation~\eqref{eq:decompositionP2} and thus Lemma~\ref{lemma:P2decomposition}.
\end{proof}
Although this decomposition might not seem helpful at first hand, it is actually very useful thanks to the concentration bounds of Corollary~\ref{coro:concentrationP2}, which allow to control both terms appearing in the decomposition of Lemma~\ref{lemma:P2decomposition} when close to $w_{m+1}^\star$. The first term then dominates the second term, ensuring local convergence of $\bw_{m+1}^\circ(t)$ towards $w_{m+1}^\star$.

\begin{lemma}\label{lemma:P2emprate}
Let $\delta\in(0,1/2)$ and Assumption~\ref{ass:orthofeatures} holds. There exist positive universal constants $\eta_0$, $c$ and $C$ such that if $n\geq C\left( d + \ln(1/\delta)\right)$, then with probability at least $1-\delta$, for any $w\in\bS_{d-1}$ with $1-w^\top w_{m+1}^\star \leq \eta_0$:
\begin{equation}
\inf_{\psi\in\Psi_n(w)}w_{m+1}^{\star\,\top} P_{w^\perp} \psi \geq c\left(1-w^\top w_{m+1}^\star\right).
\end{equation}
\end{lemma}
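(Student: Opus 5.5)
Let $u=P_{w^\perp}w_{m+1}^\star$; since $w_{m+1}^{\star\,\top}P_{w^\perp}\psi=u^\top\psi$, the quantity to lower bound is $u^\top\Psi_n(w)$. We work on the intersection of the events of Lemmas~\ref{lemma:P2conc1} and~\ref{lemma:P2conc2} (the latter with a small universal $\varepsilon$ fixed below), which by a union bound holds with probability at least $1-\delta$ once $n\geq C(d+\ln(1/\delta))$. On this event, Lemma~\ref{lemma:P2decomposition} gives, for each $J\in J_n(w)$,
\begin{equation*}
u^\top\psi_J=\frac{1}{n}\|P_{n,w}Ju\|^2+\frac{1}{n}(Ju)^\top P_{n,w}R_J,\qquad R_J\coloneqq Z_n(w_{m+1}^\star)-w^\top w_{m+1}^\star Z_n(w)-Ju .
\end{equation*}
Before using this we check $D_n(w)>0$ near $w_{m+1}^\star$. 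Write $\frac{1}{\sqrt n}\|\Pi_nZ_n(w)\|\geq \frac{1}{\sqrt n}\|\Pi_nZ_n(w_{m+1}^\star)\|-\frac{1}{\sqrt n}\|\mathbf X\|_{\op}\|w-w_{m+1}^\star\|\geq\sqrt c-C\sqrt{2\eta_0}$, which is positive for $\eta_0$ small, using items 1--2 of Lemma~\ref{lemma:P2conc1}.

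The cross term is handled by Cauchy--Schwarz, since $P_{n,w}$ is a contraction. Item 3 of Corollary~\ref{coro:concentrationP2} bounds $\frac1{\sqrt n}\|Ju\|\le C\sqrt{2(1-w^\top w_{m+1}^\star)}$. Item 5 bounds $\frac1{\sqrt n}\|R_J\|\le\sqrt{2\varepsilon(1-w^\top w_{m+1}^\star)}$. Hence the cross term is at least $-2C\sqrt{\varepsilon}\,(1-w^\top w_{m+1}^\star)$.

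For the main term, we compare with the quantity at $w_{m+1}^\star$ controlled by item 1 of Corollary~\ref{coro:concentrationP2}. Write
\begin{equation*}
P_{n,w}Ju=P_{n,w_{m+1}^\star}J_n(w_{m+1}^\star)u+P_{n,w_{m+1}^\star}\bigl(J-J_n(w_{m+1}^\star)\bigr)u+(P_{n,w}-P_{n,w_{m+1}^\star})Ju .
\end{equation*}
The first vector has normalized norm at least $\sqrt{3c/8}\,\sqrt{1-w^\top w_{m+1}^\star}$ by item 1. The second has normalized norm at most $\sqrt{2\varepsilon(1-w^\top w_{m+1}^\star)}$ by item 4. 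For the third, item 2 gives an operator norm of order $\sqrt{1-w^\top w_{m+1}^\star}\le\sqrt{\eta_0}$, and item 3 gives $\frac1{\sqrt n}\|Ju\|\lesssim\sqrt{1-w^\top w_{m+1}^\star}$, so its normalized norm is at most $C'\sqrt{\eta_0}\sqrt{1-w^\top w_{m+1}^\star}$. By the reverse triangle inequality,
\begin{equation*}
\tfrac1n\|P_{n,w}Ju\|^2\ge\Bigl(\sqrt{3c/8}-\sqrt{2\varepsilon}-C'\sqrt{\eta_0}\Bigr)_+^2(1-w^\top w_{m+1}^\star).
\end{equation*}

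It remains to fix constants. First choose $\varepsilon$ a small universal constant, so that $\sqrt{2\varepsilon}\le\sqrt{3c/8}/4$ and $2C\sqrt\varepsilon\le 3c/64$. Then choose $\eta_0\le\min(\eta_\varepsilon,1/2)$ with $C'\sqrt{\eta_0}\le\sqrt{3c/8}/4$, also small enough for the $D_n>0$ check. This gives $u^\top\psi_J\ge\frac{3c}{32}(1-w^\top w_{m+1}^\star)-\frac{3c}{64}(1-w^\top w_{m+1}^\star)$, uniformly in $J$, which is the claim.

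The main obstacle is that all error terms must scale like $(1-w^\top w_{m+1}^\star)$ with small constants, rather than like $\sqrt{1-w^\top w_{m+1}^\star}$ or an additive $\varepsilon$. This is exactly what Corollary~\ref{coro:concentrationP2} provides, via the Soltanolkotabi-type control of Lemma~\ref{lemma:P2conc2}. The only delicate point is the order of constant choices: $\varepsilon$ is fixed first, and this fixes $\eta_\varepsilon$ and $C_\varepsilon$ as universal constants.
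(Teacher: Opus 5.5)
Your proof is correct and follows the same route as the paper's. It uses the decomposition of Lemma~\ref{lemma:P2decomposition}, bounds the cross term by Cauchy--Schwarz with items 3 and 5 of Corollary~\ref{coro:concentrationP2}, compares the main term with its value at $w_{m+1}^\star$ using items 1--4, and fixes $\varepsilon$ before $\eta_0$. The only differences are cosmetic: you apply the reverse triangle inequality to the norm where the paper uses a difference-of-squares bound. You also explicitly check that $D_n(w)>0$, which Lemma~\ref{lemma:P2decomposition} requires and which the paper leaves implicit.
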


\begin{proof}
Let $\varepsilon\in(0,1)$ to be fixed later. If $n\geq C_\varepsilon(d+m+\ln(1/\delta))$, then with probability at least $1-\delta$, the five concentration bounds of Corollary~\ref{coro:concentrationP2} all hold uniformly on any $w\in\bS_{d-1}$ such that $1-w^\top w_{m+1}^\star \leq \eta_\varepsilon$. From now on, assume these five items hold, and let $w\in\bS_{d-1}$ such that $1-w^\top w_{m+1}^\star \leq \min(\eta_\varepsilon,\frac{1}{2})$.

Thanks to Lemma~\ref{lemma:P2decomposition}, noting $u=P_{w^\perp} w_{m+1}^{\star}$ for shortness,
\begin{equation}\label{eq:P2decompbound}
\inf_{\psi\in\Psi_n(w)} u^\top \psi  \geq \inf_{J\in J_n(w)} \frac{1}{n} \|P_{n,w}Ju\|^2 - \sup_{J\in J_n(w)}\frac{1}{n}\|Ju\| \|Z_n(w_{m+1}^\star)-w^\top w_{m+1}^\star Z_n(w) - Ju\|.
\end{equation}
Items 3. and 5. of Corollary~\ref{coro:concentrationP2} directly imply that
\begin{equation}
\sup_{J\in J_n(w)}\frac{1}{n}\|Ju\| \|Z_n(w_{m+1}^\star)-w^\top w_{m+1}^\star Z_n(w) - Ju\| \leq 2C \sqrt{\varepsilon}(1-w^\top w_{m+1}^\star).\label{eq:P2boundterm2}
\end{equation}
Moreover for the first term, note that for any $J\in J_n(w)$:
\begin{align*}
\frac{1}{n} \|P_{n,w}Ju\|^2 & = \frac{1}{n} \|P_{n,w_{m+1}^\star}J_n(w_{m+1}^\star)u\|^2 +  \frac{1}{n} \left(\|P_{n,w}Ju\|^2 - \|P_{n,w_{m+1}^\star}J_n(w_{m+1}^\star)u\|^2 \right)\\
& \geq \frac{3c}{8} (1-w^\top w_{m+1}^\star) -  \frac{1}{n} \left(\|P_{n,w}Ju\| + \|P_{n,w_{m+1}^\star}J_n(w_{m+1}^\star)u\|\right)\left\|P_{n,w_{m+1}^\star}J_n(w_{m+1}^\star)u-P_{n,w}Ju\right\|,
\end{align*}
thanks to item 1. of Corollary~\ref{coro:concentrationP2}. From there, first note that
\begin{align*}
 \frac{1}{\sqrt{n}}\left(\|P_{n,w}Ju\| + \|P_{n,w_{m+1}^\star}J_n(w_{m+1}^\star)u\|\right) & \leq \frac{2}{\sqrt{n}} \|Ju\| + \frac{1}{\sqrt{n}} \|(J-J_n(w_{m+1}^\star))u\| \\
 & \leq \left( 2\sqrt{2}C +\sqrt{2\varepsilon}\right)\sqrt{1-w^\top w_{m+1}^\star},
\end{align*}
thanks to items 3. and 4. of Corollary~\ref{coro:concentrationP2}. Finally, we have by triangle inequality
\begin{align*}
 \frac{1}{\sqrt{n}}\left\|P_{n,w_{m+1}^\star}J_n(w_{m+1}^\star)u-P_{n,w}Ju\right\| & \leq  \frac{1}{\sqrt{n}}\left\|P_{n,w_{m+1}^\star}J_n(w_{m+1}^\star)u-P_{n,w_{m+1}^\star}Ju\right\| +  \frac{1}{\sqrt{n}}\left\|(P_{n,w_{m+1}^\star}-P_{n,w})Ju\right\| \\
 & \leq  \frac{1}{\sqrt{n}}\|(J-J_n(w_{m+1}^\star))u\| + \frac{1}{\sqrt{n}} \|Ju\| \|P_{n,w_{m+1}^\star}-P_{n,w}\|_{\op}\\
 & \leq \sqrt{2\varepsilon} \sqrt{1-w^\top w_{m+1}^\star} + 8 \frac{2c+C}{c}C^{3/2} (1-w^\top w_{m+1}^\star),
\end{align*}
thanks to items 2., 3. and 4. of Corollary~\ref{coro:concentrationP2}. 
Plugging everything back together, it finally yields that:
\begin{multline*}
\inf_{J\in J_n(w)} \frac{1}{n} \|P_{n,w}Ju\|^2 \geq \left(\frac{3c}{8}-\left(2\sqrt{2}C+\sqrt{2\varepsilon}\right)\cdot\left(\sqrt{2\varepsilon}+8\frac{2c+C}{c}C^{3/2}\sqrt{1-w^\top w_{m+1}^\star}\right) \right)\\\cdot(1-w^\top w_{m+1}^\star).
\end{multline*}
Thus, for small enough universal constants $\varepsilon$ and $\eta_0$, it comes that for any $w\in\bS_{d-1}$ such that $1-w^\top w_{m+1}^\star \leq \eta_0$,
\begin{equation*}
\inf_{J\in J_n(w)} \frac{1}{n} \|P_{n,w}Ju\|^2 \geq \frac{c}{4} (1-w^\top w_{m+1}^\star),
\end{equation*}
and simultaneously, thanks to Equation~\eqref{eq:P2boundterm2}:
\begin{equation*}
\sup_{J\in J_n(w)}\frac{1}{n}\|Ju\| \|Z_n(w_{m+1}^\star)-w^\top w_{m+1}^\star Z_n(w) - Ju\| \leq \frac{c}{8}(1-w^\top w_{m+1}^\star).
\end{equation*}
It then allows to conclude, thanks to the decomposition of Equation~\eqref{eq:P2decompbound}.
\end{proof}

\subsubsection{Proof of Proposition~\texorpdfstring{\ref{prop:empiricalrates}}{2}}
Let us first restate Proposition~\ref{prop:empiricalrates}.

\propconc*

\begin{proof}
Let $\eta\in(0,1)$ to be fixed later. First recall the identity 
\begin{equation*}
\Psi_n(w)= \frac{1}{n}J_n(w)^\top\Pi_n\left( Z_n(w^\star) - \frac{H_n(w)}{D_n(w)}  Z_n(w)\right).
\end{equation*}
If $n \geq \frac{C}{\varepsilon^2}\ln(1/\varepsilon)^2\left( d +\ln(1/\delta)\right)$ for a large enough constant $C$, Lemma~\ref{lemma:concHD} directly implies that for any $w\in\bS_{d-1}$, with probability at least $1-\delta$, uniformly on $\bS_{d-1}$:
\begin{enumerate}
\item $|D_n(w)-D(w)|\leq C_0\eta$;
\item $\inf_{\psi \in \Psi_n(w)} \|\psi - \nabla \tilde{H}(w) - \frac{H(w)}{2D(w)}\nabla\tilde{D}(w)\| \leq C_0\min(\eta,\varepsilon)$,
\end{enumerate}
for some universal constant $C_0$. Assume in the following that these different concentrations hold.

Since $P_{w^\perp} \nabla \widetilde{D}(w) = P_{w^\perp} \nabla D(w)$---and similarly for $\widetilde{H}$ and $H$---it then implies, thanks to Lemmas~\ref{lemma:poprate} that for a small enough choice of constants $\eta$ and $c$ that for any $w\in\cT(c)$,
\begin{enumerate}
\item $\underline{D}\leq D_n(w) \leq \overline{D}$;
\item $\inf_{\psi \in \Psi_n(w)}w_{m+1}^{\star\,\top}P_{w^\perp} \psi \geq c (1-w^\top w_{m+1}^\star) - C\eta$;
\item $\sup_{\psi \in \Psi_n(w)}\|P_m P_{w^\perp} \psi\| \leq C \left(\frac{1}{\sqrt{m}}+\|P_m w\|\right) +\varepsilon$.
\end{enumerate}
It already implies items 1. and 3. of Proposition~\ref{prop:empiricalrates}--assuming $\eta$ will be taken of constant order. For item 2., it also proves it when sufficiently far from $w_{m+1}^\star$. Indeed, we have for any $w\in\cT(c)$ such that $(1-w^\top w_{m+1}^\star) \geq \frac{C\eta}{2c}$,
\begin{equation}\label{eq:growthrateprop}
\inf_{\psi \in \Psi_n(w)}w_{m+1}^{\star\,\top}P_{w^\perp} \psi \geq \frac{c}{2} (1-w^\top w_{m+1}^\star).
\end{equation}

For the neighborhood of $w_{m+1}^\star$, Lemma~\ref{lemma:P2emprate} directly implies that for a small enough universal constant $\eta$, then with probability at least $1-\delta$, for any $w\in\cT(c)$ such that $(1-w^\top w_{m+1}^\star) \leq \frac{C\eta}{2c}$:
\begin{equation}
\inf_{\psi\in\Psi_n(w)}w_{m+1}^{\star\,\top} P_{w^\perp} \psi \geq c\left(1-w^\top w_{m+1}^\star\right).
\end{equation}
This concludes the proof.
\end{proof}

\subsubsection{Concentration at initialization}\label{app:concinit}

The population loss also relied on a lower bound of $H$ of order $\frac{1}{m}$ on the trajectory. We also need such a lower bound in the empirical case. However, we cannot rely on a uniform concentration over $\bS_{d-1}$, as we would get an error term $\Theta(1)$, which is larger than $1/m$ and thus yields no guarantee on the sign of $H_n$ on the points of interest. 
Instead, we rely on a single point concentration at the initialization $\breve{w}$, which is chosen independently of the training data.

An important detail that can be noted from the proof of Lemma~\ref{lemma:concentration} is that for its first item, we only need a sample complexity $n\geq \frac{C}{\varepsilon^2}(m+\ln(1/\delta))$, since we only relied on the use of Lemma~\ref{lemma:chainingVC} with parameters $p_1=p_2=m$ and $\cA= \{\R^d\}$ for that point. Using the same proof technique, we can thus also prove the following pointwise concentration, which yields tighter bounds at the initialization point of the dynamics.
\begin{lemma}\label{lemma:concentrationpointwise}
Consider Assumption~\ref{ass:orthofeatures}. There exists a universal constant $C>0$ such that for any $\varepsilon,\delta\in(0,1/2)$ and $w,v\in\R^d$, if $n \geq \frac{C}{\varepsilon^2}\left( m +\ln(1/\delta)\right)$, 
then with probability at least $1-\delta$, the following statements hold simultaneously:
\begin{enumerate}
\item $\|G^{-1/2}\left(\frac{1}{n}F_n^\top F_n\right)G^{-1/2} - \id[m] \|_{\op} \leq \varepsilon$;
\item $\left\|G^{-1/2}\left( \frac{1}{n}F_n^\top Z_n(w) - \Phi^* \phi_{w}\right)\right\|_2\leq \varepsilon $.
\end{enumerate}
If $n \geq \frac{C}{\varepsilon^2}\ln(1/\delta)$, 
then with probability at least $1-\delta$, the following statements hold simultaneously:
\begin{enumerate}[start=3]
\item $\left|\frac{1}{n}Z_n(v)^\top Z_n(w) - \kappa( w^\top v) \right|\leq \varepsilon$;
\item $\left|\Phi^*(\phi_v)^\top G^{-1/2} \left(\frac{1}{n}G^{-1/2}F_n^\top F_nG^{-1/2}-\id[m]\right)G^{-1/2} \Phi^* \phi_{w}\right| \leq \varepsilon$;
\item $\left|\Phi^*(\phi_v)^\top G^{-1}\left( \frac{1}{n}F_n^\top Z_n(w) - \Phi^* \phi_{w}\right)\right| \leq \varepsilon$.
\end{enumerate}
\end{lemma}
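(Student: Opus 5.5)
The plan is to treat items 1--2 and items 3--5 separately. Items 1--2 are finite-dimensional vector/matrix concentration statements of dimension $m$, so they cost $n\gtrsim m$. Items 3--5 are scalar averages of i.i.d.\ sub-exponential variables, so they cost only $n\gtrsim \ln(1/\delta)$. In all cases $w,v$ are fixed, so no uniformity over the sphere is needed. We work with the function $h$ of Equation~\eqref{eq:functionh} and the ellipsoid $\bS_G$, as in the proof of Lemma~\ref{lemma:concentration}.

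\textbf{Items 1--2.} Item 1 is exactly item 1 of Lemma~\ref{lemma:concentration}. Its proof used Lemma~\ref{lemma:chainingVC} with $\cA=\{\R^d\}$, a class of constant VC dimension, and $h_1=h_2=h$, whose output lives in $\R^m$. The complexity term is then of order $m$ rather than $d$, and the logarithmic factor can be dropped. Alternatively, one can apply a standard covariance estimation bound to the $C$-sub-Gaussian vector $G^{-1/2}h(X)$, for instance \citep[Theorem 4.7.1]{vershynin2018high}. For item 2, write the target quantity as a supremum over $a\in\bS_{m-1}$ of
\[
\frac1n\sum_k a^\top G^{-1/2}h(x_k)\,\sigma(w^\top x_k)-\bE[\cdot].
\]
Each summand is a product of two sub-Gaussian variables, hence sub-exponential with constant $\psi_1$-norm. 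Bernstein's inequality combined with an $\varepsilon$-net of $\bS_{m-1}$ of size $5^m$ then gives deviation $\varepsilon$ once $n\gtrsim \varepsilon^{-2}(m+\ln(1/\delta))$. Again this is Lemma~\ref{lemma:chainingVC} with $\cA=\{\cH_w\}$, a single set, so the class is trivial.

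\textbf{Items 3--5.} Here $w,v$ are fixed and each quantity is a single empirical average.
\begin{itemize}
\item Item 3 is $\frac1n\sum_k\sigma(v^\top x_k)\sigma(w^\top x_k)$ minus its mean $\kappa(w^\top v)$. Bernstein's inequality applies directly.
\item For item 4, set $\alpha=G^{-1}\Phi^*\phi_v$ and $\beta=G^{-1}\Phi^*\phi_w$. The quantity equals $\alpha^\top(\frac1nF_n^\top F_n-G)\beta=\frac1n\sum_k(\alpha^\top h(x_k))(\beta^\top h(x_k))-\bE[\cdot]$, a scalar average.
\item For item 5, the quantity is $\frac1n\sum_k(\alpha^\top h(x_k))\sigma(w^\top x_k)-\bE[\cdot]$.
\end{itemize}

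\textbf{Main obstacle.} The only real step is to check that the $\psi_2$-norms of $\alpha^\top h(X)$ and $\beta^\top h(X)$ are bounded by a universal constant, independently of $m$. By orthogonality (Assumption~\ref{ass:orthofeatures}), the coordinates of $h(X)$ are i.i.d.\ and sub-Gaussian. Therefore $\|\alpha^\top h(X)\|_{\psi_2}\le C'\,\|\alpha\|_2+|\bE[\alpha^\top h(X)]|$. Since $\|G^{-1}\|_{\op}\le\frac{2\pi}{\pi-1}$, we get
\[
\|\alpha\|\le \tfrac{2\pi}{\pi-1}\,\|G^{-1/2}\|_{\op}\,\|G^{-1/2}\Phi^*\phi_v\|\lesssim 1,
\]
using Equation~\eqref{eq:normbounda}. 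The mean term is $\langle \Phi(\alpha),\phi_0\rangle$-type. Because of the $\mathbf{1}_m$ direction in $G$, it is safest to bound $\alpha^\top h(X)$ by writing it as $\Phi(\alpha)(X)$. This function has $L^2$ norm $\|G^{1/2}\alpha\|=\|G^{-1/2}\Phi^*\phi_v\|\le 1/\sqrt2$, and it is Lipschitz in $X$ with constant $\|\alpha\|\lesssim1$. Gaussian concentration then bounds its $\psi_2$-norm by a constant, mean included.

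With constant $\psi_2$-norms, each product has constant $\psi_1$-norm. Bernstein's inequality then gives deviation at most $\varepsilon$ with probability $1-\delta/3$ when $n\ge C\varepsilon^{-2}\ln(1/\delta)$ and $\varepsilon<1/2$. A union bound over the three items concludes.
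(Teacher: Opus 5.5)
Your proposal is correct and follows essentially the same route as the paper: items 1--3 via Lemma~\ref{lemma:chainingVC} with $\cA=\{\R^d\}$ (i.e., Bernstein plus a net over $\bS_{m-1}$), and items 4--5 as scalar Bernstein averages with weight vectors $G^{-1}\Phi^*\phi_v$ and $G^{-1}\Phi^*\phi_w$ (the paper's $G^{-1/2}b$ and $G^{-1/2}a$). You also show explicitly that $\alpha^\top h(X)=\Phi(\alpha)(X)$ has constant $\psi_2$-norm, mean included, using its Lipschitz constant $\|\alpha\|$ and its $L^2$ norm $\|G^{-1/2}\Phi^*\phi_v\|\le 1/\sqrt{2}$; the paper leaves this step implicit, and it matters because $h(X)$ itself has non-centered $\psi_2$-norm of order $\sqrt{m}$ along $\mathbf{1}_m$.
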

The difference with Lemma~\ref{lemma:concentration} here is that the concentrations are pointwise in $w$ and $v$, while Lemma~\ref{lemma:concentration} is stated uniformly over all $w$ and $v$, thus requiring a larger sample complexity.
\begin{proof}
Let the function $h:\R^d\to \R^m$ be defined as in Equation~\eqref{eq:functionh}. This proof follows similar arguments to the one of Lemma~\ref{lemma:concentration}, where we only use Lemma~\ref{lemma:chainingVC} in the particular case $\cA=\{\R^d\}$, leading to improved bounds.

\textbf{1.} Exactly as the first item of Lemma~\ref{lemma:concentration}, it is a consequence of Lemma~\ref{lemma:chainingVC} with $h_1=h_2=h$ and $\cA=\{\R^d\}$.

\textbf{2.} It is a consequence of Lemma~\ref{lemma:chainingVC} with $h_1=h$, $h_2=\phi_w$ and $\cA=\{\R^d\}$.

\textbf{3.} It is a consequence of Lemma~\ref{lemma:chainingVC} with $h_1=\phi_v$, $h_2=\phi_w$ and $\cA=\{\R^d\}$---which is actually Bernstein's inequality.

\textbf{4.} Denote $a = G^{-1/2} \Phi^* \phi_{w}$ and $b = G^{-1/2} \Phi^* \phi_{v}$. It comes that $\|a\|_2 \leq \frac{1}{\sqrt{2}}$ thanks to Equation~\eqref{eq:normbounda}.
Also $\|G^{-1/2}a\|_2\leq \sqrt{\frac{\pi}{\pi-1}}$, using the bound on $\|G^{-1}\|_{\op}$. 
We now derive similarly to the proof of item 1. of Lemma~\ref{lemma:concentration}:
\begin{multline*}
b^\top \left(\frac{1}{n}G^{-1/2}F_n^\top F_nG^{-1/2}-\id[m]\right) a = \frac{1}{n} \sum_{k=1}^n (G^{-1/2}a)^\top h(x_k) h(x_k)^\top (G^{-1/2}b)\\- \bE[ (G^{-1/2}a)^\top h(X) h(X)^\top (G^{-1/2}b) ].
\end{multline*}
Bernstein's inequality then allows to conclude.

\textbf{5.} A similar Bernstein's inequality argument allows to conclude here.
\end{proof}

\begin{corollary}\label{coro:initconcentrationemp}
Let $\breve{g}$ be drawn uniformly at random on $\bS_{d-1}$, independently from the training data $(x_1,\ldots,x_n)$. Then there exist positive universal constants $c,C$ such that for any $\delta\in(0,1/2)$, if
\begin{equation*}
\min(n,d)\geq C m^2\ln(1/\delta),
\end{equation*}
then with probability at least $1-\delta$, the following statements hold simultaneously:
\begin{enumerate}
\item $\|P_m \breve{w}\|\leq \frac{1}{4(\pi-1)\sqrt{m}}$;
\item $\breve{w}^\top w_{m+1}^\star \geq -\frac{1}{4 \pi \ m}$;
\item  $H_n(\breve{w})\geq \frac{c}{m}$.
\end{enumerate}
\end{corollary}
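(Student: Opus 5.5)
Items 1 and 2 concern only $\breve{w}$ and are exactly Lemma~\ref{lemma:initpop}: since $d\geq Cm^2\ln(1/\delta)$, they hold with probability at least $1-\delta/2$ after adjusting constants. On this event, item 4 of Lemma~\ref{lemma:poprate} gives $H(\breve{w})\geq \frac{c_0}{m}$ for a universal $c_0>0$. It therefore remains to show $|H_n(\breve{w})-H(\breve{w})|\leq \frac{c_0}{2m}$ with probability at least $1-\delta/2$. We condition on $\breve{w}$, which is independent of the data, and apply the pointwise bounds of Lemma~\ref{lemma:concentrationpointwise} with $w=\breve{w}$, $v=w_{m+1}^\star$ and accuracy $\varepsilon=\frac{c_1}{m}$ for a small constant $c_1$.

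Write $G_n=\frac1n F_n^\top F_n$, $u=\Phi^*\phi_{w_{m+1}^\star}$, $u_n=\frac1n F_n^\top Z_n(w_{m+1}^\star)$, $z=\Phi^*\phi_{\breve{w}}$ and $z_n=\frac1n F_n^\top Z_n(\breve{w})$. Then
\begin{equation*}
H_n(\breve{w})-H(\breve{w}) = \Big(\tfrac1n Z_n(w_{m+1}^\star)^\top Z_n(\breve{w})-\kappa(\breve{w}^\top w_{m+1}^\star)\Big) - \big(u_n^\top G_n^{-1}z_n - u^\top G^{-1}z\big).
\end{equation*}
The first bracket is handled by item 3 of Lemma~\ref{lemma:concentrationpointwise}, which needs $n\gtrsim m^2\ln(1/\delta)$. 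For the second bracket, expand
\begin{equation*}
u_n^\top G_n^{-1}z_n - u^\top G^{-1}z = (u_n-u)^\top G_n^{-1}z_n + u^\top G_n^{-1}(z_n-z) + u^\top (G_n^{-1}-G^{-1})z,
\end{equation*}
and expand each term further around the population quantities.

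The terms that are linear in a single fluctuation are handled as follows. The term $u^\top G^{-1}(z_n-z)$ is controlled by item 5 of Lemma~\ref{lemma:concentrationpointwise}. By symmetry, $(u_n-u)^\top G^{-1}z$ is controlled by item 5 with the roles of $w$ and $v$ exchanged. The first-order part of $u^\top(G_n^{-1}-G^{-1})z$ equals $-u^\top G^{-1}(G_n-G)G^{-1}z$, and this is exactly item 4. Each of these costs $\varepsilon=c_1/m$ at sample size $n\gtrsim m^2\ln(1/\delta)$. Using these scalar Bernstein bounds instead of uniform vector bounds is what avoids an $O(1)$ error.

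The remaining terms are quadratic in the fluctuations. Among them are $(u_n-u)^\top(G_n^{-1}-G^{-1})z$, $(u_n-u)^\top G^{-1}(z_n-z)$, and the second-order remainder of $G_n^{-1}$. We bound them with the vector and operator bounds of items 1 and 2 (applied also at $w_{m+1}^\star$), taken at accuracy $\varepsilon'=\sqrt{c_1/m}$. By~\eqref{eq:concGminus}, each product of two such errors is $O(\varepsilon'^2)=O(c_1/m)$. These bounds require $n\gtrsim \frac{1}{\varepsilon'^2}(m+\ln(1/\delta)) = m(m+\ln(1/\delta))$, which is covered by $n\geq Cm^2\ln(1/\delta)$. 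Throughout, we use $\|G^{-1/2}u\|,\|G^{-1/2}z\|\leq 1/\sqrt2$ from~\eqref{eq:normbounda} and $\|G^{-1}\|_{\op}\leq\frac{2\pi}{\pi-1}$ to keep all prefactors universal. Choosing $c_1$ small and taking a union bound over the two events then gives $H_n(\breve{w})\geq \frac{c_0}{2m}$.

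The main obstacle is this bookkeeping: we must make sure that every term is either linear, and so controlled at rate $1/m$ by a pointwise scalar bound, or a product of two vector errors each of size $1/\sqrt m$. No term may be left linear in a vector error measured only in norm, since such a term would only be $O(1/\sqrt m)$.
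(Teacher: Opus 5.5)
Your proposal is correct and follows essentially the paper's route: condition on $\breve{w}$, bound every term linear in a single fluctuation by the scalar pointwise bounds (items 3--5 of Lemma~\ref{lemma:concentrationpointwise}) at accuracy $\varepsilon\asymp 1/m$, and bound every quadratic term by the vector/operator bounds (items 1--2) at accuracy $\sqrt{\varepsilon}\asymp 1/\sqrt{m}$, which is exactly why the paper assumes $n\geq C\max(\varepsilon^{-2}, m\varepsilon^{-1})\ln(1/\delta)$. The differences are cosmetic: the paper works in whitened coordinates $A_n=G^{-1/2}G_nG^{-1/2}$ and uses the identity $\id[m]-A_n^{-1}=(A_n-\id[m])-(A_n-\id[m])^2A_n^{-1}$ instead of your first-order expansion of $G_n^{-1}-G^{-1}$, and your list of quadratic terms omits $u^\top(G_n^{-1}-G^{-1})(z_n-z)$ and should read $(u_n-u)^\top G_n^{-1}(z_n-z)$ rather than with $G^{-1}$, but both are bounded in the same way.
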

\begin{proof}
First, the condition on $d$ along with Lemma~\ref{lemma:initpop} guarantees that with probability at least $1-\delta$, the first two items of Corollary~\ref{coro:initconcentrationemp} hold.

For the third point, by independence between $\breve{w}$ and the training data, we can derive concentration bounds conditionally on $\breve{w}$. We thus assume in the following $\breve{w}$ fixed such that $\|P_m \breve{w}\|\leq \frac{1}{4(\pi-1)\sqrt{m}}$ and $\breve{w}^\top w_{m+1}^\star \geq -\frac{1}{4 \pi \ m}$. First note that, thanks to Lemma~\ref{lemma:poprate}, $H(\breve{w})\geq\frac{c}{m}$ for some positive constant $c$.

From there, define for the remaining of the proof:
\begin{gather*}
A_n = G^{-1/2}\left(\frac{1}{n}F_n^\top F_n\right)G^{-1/2}\\
u = G^{-1/2} \Phi^* \phi_{\breve{w}}; \qquad u_n = \frac{1}{n} G^{-1/2} F_n^\top Z_n(\breve{w}) \\
v = G^{-1/2} \Phi^* \phi_{w_{m+1}^\star}; \qquad v_n = \frac{1}{n} G^{-1/2} F_n^\top Z_n(w_{m+1}^\star).
\end{gather*}
Assume now that $n\geq C\max\left(\frac{1}{\varepsilon^2}, \frac{m}{\varepsilon}\right) \ln(1/\delta)$ for some universal constant $C$ and $\varepsilon\in(0,1/2)$. 
Lemma~\ref{lemma:concentrationpointwise} applied to the two fixed vectors $\breve{w}$ and $w_{m+1}^\star$ implies with probability at least $1-\delta$:
\begin{enumerate}
\item $\max\left(\|A_n-\id[m] \|_{\op},\|u-u_n\|_2,\|v-v_n\|_2\right)\leq \sqrt{\varepsilon}$;
\item $\left|\frac{1}{n}Z_n(w_{m+1}^\star)^\top Z_n(\breve{w}) - \kappa( \breve{w}^\top w_{m+1}^\star) \right|\leq \varepsilon$;
\item $\max\left(|v^\top (A_n-\id[m]) u| ,|(v-v_n)^\top u |,|v^\top (u-u_n)|\right)\leq\varepsilon$.
\end{enumerate}
By definition, we have
\begin{equation*}
H_n(\breve{w})-H(\breve{w}) = \frac{1}{n}Z_n(w_{m+1}^\star)^\top Z_n(\breve{w}) - \kappa( \breve{w}^\top w_{m+1}^\star) + v^\top u - v_n^\top A_n^{-1} u_n.
\end{equation*}
The first difference is directly bounded by $\varepsilon$, thanks to the second item above. For the second difference, a technical development yields
\begin{align*}
v^\top u - v_n^\top A_n^{-1} u_n & = (v-v_n)^\top u + v^\top (\id[m]-A_n^{-1})u- (v-v_n)^\top(\id[m]-A_n^{-1})u \\
&\phantom{=}+v^\top (u-u_n) - (v-v_n)^\top(u-u_n) - v_n^\top(\id[m]-A_n^{-1})(u-u_n).
\end{align*}
Now using the identity $\id[m]-A_n^{-1} = A_n-\id[m]-(A_n-\id[m])^2 A_n^{-1}$ for the last term, along with the fact that $\max(\|v\|, \|u\|) \leq \frac{1}{\sqrt{2}}$ and $\|A_n^{-1}\|_{\op}\leq 2$, it comes with the items 1 and 3 above that:
\begin{equation*}
|v^\top u - v_n^\top A_n^{-1} u_n| \leq (5 + 2\sqrt{2}) \varepsilon.
\end{equation*}
Finally, we thus have $H_n(\breve{w})\geq H(\breve{w}) - (6 + 2\sqrt{2}) \varepsilon$. Since $H(\breve{w})\geq \frac{c}{m}$ for some positive constant $c$, choosing $\varepsilon=\frac{c_0}{m}$ for a small enough universal constant $c_0>0$ then allows to conclude.
\end{proof}

\subsection{Proof of Theorem~\texorpdfstring{\ref{thm:mainempirical}}{2}}

Consider the positive universal constants $C,c, \underline{D},\overline{D}$ and $m_0$ appearing in Proposition~\ref{prop:empiricalrates} and Corollary~\ref{coro:initconcentrationemp}.

Let $\varepsilon\in(0,1)$ a small universal constant to be fixed later. Under the stated regime in $n$ and $d$, we have thanks to Corollary~\ref{coro:initconcentrationemp}, with probability at least $1-\delta/2$, that the following events simultaneously hold for some positive constant $c$:
\begin{enumerate}
\item $\|P_m \bw_{m+1}^\circ(0)\|\leq \frac{1}{4(\pi-1)\sqrt{m}}$;
\item $\bw_{m+1}^\circ(0)^\top w_{m+1}^\star \geq -\frac{1}{4 \pi \ m}$;
\item  $H_n(\bw_{m+1}^\circ(0))\geq \frac{c}{m}$.
\end{enumerate}
Moreover, thanks to Proposition~\ref{prop:empiricalrates}, the following events also hold with probability at least $1-\delta/2$ for any $w\in\cT(c)$:
\begin{enumerate}[start=3]
\item $\underline{D}\leq D_n(w) \leq \overline{D}$;
\item $\inf_{\psi \in \Psi_n(w)}w_{m+1}^{\star\,\top}P_{w^\perp} \psi \geq c (1-w^\top w_{m+1}^\star)$;
\item $\sup_{\psi \in \Psi_n(w)}\|P_m P_{w^\perp} \psi\| \leq C_0 \left(\frac{1}{\sqrt{m}}+\|P_m w\|\right) +\varepsilon$.
\end{enumerate}

Assume these five random events all hold. From that point, the remaining of the proof follows exactly the same lines as in the population case, given in Appendix~\ref{app:popproof}. The chain rule still applies \citep[thanks to][]{bolte2021conservative}, so that $\frac{H_n(\bw_{m+1}^\circ(t))^2}{D_n(\bw_{m+1}^\circ(t))}$ increases over time. From there, the reparametrized time variable given by
\begin{equation*}
\gamma : s \mapsto \int_{0}^s\frac{D_n(\bw_{m+1}^\circ(u))}{H_n(\bw_{m+1}^\circ(u))}\df u \qquad \text{and}\qquad\tw(s) \coloneqq \bw_{m+1}^\circ(\gamma(s))
\end{equation*}
satisfies almost anywhere, as long as $\widetilde{\bw}(s)\in\cT(c)$, that
\begin{gather}
\frac{\df }{\df s}w_{m+1}^{\star\,\top}\tw(s) \geq c (1-w_{m+1}^{\star\,\top}\tw(s))\label{eq:growthemp0}\\
\frac{\df}{\df s}\|P_m \tw(s)\|\leq C_0\left(\frac{1}{\sqrt{m}}+\|P_m w\|\right)+ \varepsilon.\notag
\end{gather}
Defining $\overline{s}\coloneqq \inf \left\{s\geq 0 : \tw(s)\not\in \cT\left(c\right)\right\}$, $(1-w_{m+1}^{\star\,\top}\tw(s))$ is increasing on $[0,\overline{s}]$ and by a Gr\"onwall argument, for any $s\in[0,\overline{s}]$ 
\begin{equation}\label{eq:growthemp1}
1- w_{m+1}^{\star\,\top}\tw(s) \leq \frac{5}{4}	e^{-cs}.
\end{equation}
Moreover the second point implies, with $\|P_m \tw(s)\| \leq \sqrt{2(1-w_{m+1}^{\star\,\top}\tw(s))}$, that for any $s\in[0,\overline{s}]$:
\begin{align*}
\|P_m \tw(s)\| & \leq 2\min\left(\left(\frac{1}{\sqrt{m}}+\frac{\varepsilon}{C_0}\right)e^{C_0 s} , e^{-cs/2} \right)\\
& = 2 \exp\left( \frac{c/2}{C_0+c/2}\ln\left(\frac{1}{\sqrt{m}}+\frac{\varepsilon}{C_0}\right)\right).
\end{align*}
Thus, for a large enough choice of $m_0$ and small enough, constant, choice of $\varepsilon$, $\|P_m \tw(s)\| \leq \frac{c}{2}$ on $[0, \overline{s}]$, so that $\overline{s} =  \infty$, i.e., $w_{m+1}(t) \in \cT(c)$ for any $t\geq 0$. 

Similarly to the proof of Theorem~\ref{thm:mainpop}, we can again conclude that for a small enough constant $\eta$ and $s_\eta = \inf \{s \geq 0 \mid w_{m+1}^{\star\,\top}\tw(s) \geq 1 - \eta \}$,
\begin{equation*}
\gamma'(s) = \frac{D_n(w_{m+1}(s))}{H_n(w_{m+1}(s))} \leq \frac{\overline{D}}{2\underline{D}} \quad \text{for any }s\geq s_\eta,
\end{equation*}
and $\gamma(s_\eta) \leq C_0 m$ for a large enough universal constant $C_0$. 
In particular, for any $t\geq \gamma(s_\eta)$, Equation~\eqref{eq:growthemp0} then implies:
\begin{align*}
1- \bw_{m+1}^\circ(t)^\top w_{m+1}^\star  \leq \exp\left(-\frac{2c\underline{D}}{\overline{D}}(t-\gamma(s_\eta))\right),
\end{align*}
which concludes on the first part of Theorem~\ref{thm:mainempirical}. 

Letting now $\cT= \{[w_{1}^{\star\,\top}, \ldots, w_{m}^{\star\,\top}, w_{m+1}^\top] \mid w_{m+1} \in \cT(c)\}$, the trajectory $\bW^\circ$ is obviously confined to $\cT$. Moreover, $\cT$ satisfies Assumption~\ref{ass:conditioning}: it is obviously compact. Moreover, for any $W\in\cT$,
\begin{equation*}
\Sigma(W, \mathbf{X}) = [F_n, Z_n(w)^\top]^\top.
\end{equation*} 
Note that from Lemma~\ref{lemma:concHD}, under the same random event, $F_n^\top F_n$ is invertible and $D_n(w)>0$ for any $w\in\cT(c)$. Those two conditions imply that the family $(Z_n(w_1^\star),\ldots, Z_n(w_m^\star), Z_n(w))$ is linearly independent for any $w\in\cT(c)$ or, equivalently, the matrix $[F_n, Z_n(w)^\top]^\top$ is full rank. In other words, $\lambda_{\min}\left(\Sigma(W, \mathbf{X})\Sigma(W, \mathbf{X})^\top\right)>0$ for any $W\in\cT$. By compactness and continuity of the considered function, this directly yields that $\cT$ satisfies Assumption~\ref{ass:conditioning}.

\qed

\section{Additional lemmas}

\begin{lemma}\label{lemma:conditioning}
Consider $T\in\R_+$ and any positive sequence $(\alpha_k)_{k\in\N}$ such that $\alpha_k \overset{k\to\infty}{\longrightarrow}0$, and corresponding solutions $\bW^{\alpha_k}$ of Equation~\eqref{eq:twotimescalenormalized} such that 
$\|w_i(0)\| = \begin{cases} \|w_i^\star\| \quad \text{for }i\in[m] \\
\alpha \quad \text{otherwise}\end{cases}$.

Suppose all solutions $\bW^\circ(t)$ of Equation~\eqref{eq:oneneurondyn} are contained within some set $\cT$ satisfying Assumption~\ref{ass:conditioning} on $[0,T]$. Then there exists an augmented set $\cT_{+}$ satisfying Assumption~\ref{ass:conditioning}, such that for $k$ large enough, the trajectory $\left(\bW^{\alpha_k}(\alpha_k^2 t)\right)_{t\in[0,T]}$ is included within $\cT_{+}$.
\end{lemma}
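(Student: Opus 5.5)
The plan is an enlargement-plus-contradiction argument, using the compactness and upper semicontinuity already exploited in the proof of Proposition~\ref{prop:limitdyn}.

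\textbf{Step 1: build $\cT_+$.} Since $\cT$ is compact and $W\mapsto\lambda_{\min}(\Sigma(W,\bX)\Sigma(W,\bX)^\top)$ is continuous, this quantity is bounded below on $\cT$ by some $\lambda_0>0$. By uniform continuity on a compact neighbourhood, there is $r>0$ such that it stays at least $\lambda_0/2$ on the closed $r$-neighbourhood $\cT_{2r}=\{W: d(W,\cT)\le 2r\}$, intersected with the product of unit spheres. Thus $\cT_{2r}$ (and every $\cT_{r'}$ with $r'\le 2r$) is compact and satisfies Assumption~\ref{ass:conditioning}. I will take $\cT_+=\cT_r$.

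\textbf{Step 2: argue by contradiction.} Suppose that for infinitely many $k$ the trajectory $V^k(t)=\bW^{\alpha_k}(\alpha_k^2t)$ leaves $\cT_r$ on $[0,T]$. Let $\tau_k$ be its first exit time from the open set $\mathrm{int}\,\cT_{2r}\supset\cT_r$, capped at $T$, and stop each trajectory at $\tau_k$. On $[0,\tau_k]$ the trajectory lies in $\cT_{2r}$, so the argument of Corollary~\ref{coro:uniformlipschitz} (via Lemma~\ref{lemma:uniquea} on $\cT_{2r}$) makes the stopped trajectories uniformly Lipschitz, with a constant independent of $k$. The initial condition $V^k(0)$ is the same for all $k$: rows $w_i^\star/\|w_i^\star\|$ and $\breve{w}$. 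This initial point lies in $\cT$, since all limit solutions start there. Because the $V^k$ reach the boundary of $\cT_r$, uniform Lipschitzness gives $\tau_k\ge\tau_0>0$, and along a subsequence $\tau_k\to\tau^*\in(0,T]$.

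\textbf{Step 3: pass to the limit and contradict.} Applying Arzel\`a--Ascoli to the stopped trajectories and repeating the weak-$\star$ limit, Mazur and upper-semicontinuity argument of Proposition~\ref{prop:limitdyn}, which only uses the conditioning of the set containing the trajectories (here $\cT_{2r}$), I obtain a limit $\bW^\circ$ on $[0,\tau^*]$ solving Equation~\eqref{eq:oneneurondyn}. The first $m$ rows are again frozen because their speed carries the factor $\alpha_k^2/\|w_i^\star\|^2\to 0$. This limit reaches distance at least $r$ from $\cT$ at some time $t^*\le\tau^*$, since distance is preserved under uniform convergence.

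The main obstacle is the final contradiction: the hypothesis concerns solutions on $[0,T]$, whereas the limit is only defined up to $\tau^*\le T$. I will extend $\bW^\circ$ beyond $\tau^*$ by any solution of Equation~\eqref{eq:oneneurondyn} started from $\bW^\circ(\tau^*)$. Such a solution exists as long as $D_n>0$, by standard existence theory for upper semicontinuous, convex, compact-valued right-hand sides on a well-conditioned region; if the extension leaves that region, I can instead restrict attention to the interval $[0,t^*]$. The concatenation is a solution that exits $\cT$, contradicting the hypothesis. Hence for $k$ large enough the trajectories stay in $\cT_+$.
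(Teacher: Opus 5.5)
Your argument is correct and is essentially the paper's: enlarge $\cT$ to a compact neighbourhood on which $\lambda_{\min}$ stays positive, take exit times, use the uniform Lipschitz bound of Corollary~\ref{coro:uniformlipschitz} and the limit argument of Proposition~\ref{prop:limitdyn} up to the exit time, and contradict the hypothesis. The only difference in the contradiction is that the paper uses a Lipschitz bound on $[\tau-\delta,\tau_k]$ to contradict the exit from $\cT+\bar B(0,\varepsilon)$, whereas you show that the limit solution itself reaches distance $r$ from $\cT$. Your final extension step is unnecessary, and under a literal reading its fallback is circular; the paper itself applies the hypothesis to a limit solution defined only on $[0,\tau-\delta]$, i.e.\ it reads the hypothesis as covering solutions on every subinterval $[0,t]\subseteq[0,T]$, so you can conclude directly on $[0,t^*]$.
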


\begin{proof}
Observe that by continuity of $W\mapsto \lambda_{\min}\left(\frac{1}{n}\Sigma(W,\bX)\Sigma(W,\bX)^\top\right)$ and compactness, we can choose a small enough $\varepsilon>0$ such that the set $\cT_{+}=\cT+\bar{B}(0,\varepsilon)$ also satisfies Assumption~\ref{ass:conditioning}. From there, define for any $k$, $\tau_k=\inf\left\{t\geq 0 \mid \bW^{\alpha_k}(\alpha_k^2 t)\not\in\cT_+\right\}$. Since $\bW^{\alpha_k}(0)=\bW^\circ(0)\in\cT$ by definition, $\tau_k\geq 0$. Then consider $\tau=\liminf_k \tau_k\geq 0$ and assume without loss of generality -- by extraction -- that $\tau=\lim_k \tau_k$.

Assume here that $\tau\leq T$ and define $\tau_\delta=\max(0,\tau-\delta)$ for some $\delta>0$. For large enough $k$, $|\tau_k-\tau|\leq \delta$, so that we can apply Proposition~\ref{prop:limitdyn} on $[0,\tau_\delta]$; i.e., up to extraction, $\bW^{\alpha_k}(\alpha_k^2 \cdot)$ converges uniformly on $[0,\tau_\delta]$ to some solution $W^\circ$ of Equation~\eqref{eq:oneneurondyn}. In particular, for $k$ large enough, $\bW^{\alpha_k}(\alpha_k^2 t)\in \cT+B(0,\varepsilon/3)$ for any $t\in[0,\tau_{\delta}]$.  
Moreover, thanks to Corollary~\ref{coro:uniformlipschitz}, $\bW^{\alpha_k}(\alpha_k^2 \cdot)$ is uniformly Lipschitz\footnote{Corollary~\ref{coro:uniformlipschitz} is stated for a fixed interval $[0,T]$ for simplicity, but it can directly be extended to allow for different time intervals of the type $[0,\tau_k]$, as long as the weights are in $\cT_+$.} on $[0,\max(\tau_k,\tau_{\delta})]$. In particular, if we denote $L$ the Lipschitz constant and take $\delta<\varepsilon/(3L)$, the condition $\bW^{\alpha_k}(\alpha_k^2\tau_\delta)\in \cT+B(0,\varepsilon/3)$ implies that for $k$ large enough, $\bW^{\alpha_k}(\alpha_k^2\tau_k)\in \cT+B(0,\frac{2\varepsilon}{3})$, which contradicts the definition of $\tau_k$. 
By contradiction, we here showed that  $\tau>T$, i.e., $\liminf_k \tau_k> T$, which directly implies Lemma~\ref{lemma:conditioning}.
\end{proof}

\subsection{Useful concentration bounds}\label{app:generalconcentration}
The different uniform concentration bounds of Appendix~\ref{app:uniform_concentration} rely on the following key concentration lemma for empirical processes, which derives from classical chaining and VC subgraph classes arguments \citep[see][Chapter 2]{vaart1996weak}.

For a collection of subsets $\cA$ and a family of functions $\cF$, we denote their VC dimension and VC subgraph dimension by $\vcdim(\cA)$ and $\vcdim(\cF)$, respectively, following the standard definitions of \citet[Chapter 2.6]{vaart1996weak}.
\begin{lemma}\label{lemma:chainingVC}
Let $\cF$ be a family of functions from $\R^d$ to $\R$, defined as
\begin{equation*}
\cF = \left\{ x \mapsto (u^\top h_1(x)) (h_2(x)^\top v)\iind{x\in A} \mid u\in\bS_{p_1-1}, v\in\bS_{p_2-1}, A\in\cA   \right\}
\end{equation*}
where $\cA$ is a VC class of subsets of $\R^d$ and $h_1: \R^d \to \R^{p_1}$, $h_2 : \R^d \to \R^{p_2}$ satisfy: 
$\max(\|h_1(X)\|_{\psi_2},\|h_2(X)\|_{\psi_2})\leq C_0$ for some universal constant $C_0$ when $X \sim \cN(0,\id)$.

Then, there is a universal constant $C$ such that for any $\varepsilon, \delta \in(0,1/2)$, if $n\geq C\varepsilon^{-2}\ln(1/\varepsilon)^2\left(\vcdim(\cA)+p_1+p_2+\ln(\frac{1}{\delta})\right)$,
\begin{equation*}
\bP\left(\sup_{f\in\cF} \left|\frac{1}{n}\sum_{i=1}^n f(X_i)- \bE\left[f(X)\right]\right|\geq \varepsilon \right)\leq \delta,
\end{equation*}
where the probability is over $X_1,\ldots,X_n$ drawn i.i.d. from $\cN(0,\id)$.

Moreover, if $\cA=\{\R^d\}$, $n\geq C\varepsilon^{-2}\left(p_1+p_2+\ln(\frac{1}{\delta})\right)$ samples are sufficient to reach the same concentration bound.
\end{lemma}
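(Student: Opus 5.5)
The plan is to prove Lemma~\ref{lemma:chainingVC} by reducing it to a product of sub-Gaussian processes indexed by a VC class, and to control its supremum through symmetrization and chaining. Write $f_{u,v,A}(x) = (u^\top h_1(x))(h_2(x)^\top v)\iind{x\in A}$. Each $f_{u,v,A}(X)$ is a product of two sub-Gaussian variables with $\psi_2$-norm at most $C_0$, hence sub-exponential with $\|f_{u,v,A}(X)\|_{\psi_1}\le C_0^2$. The argument has two pieces. First, a pointwise Bernstein inequality for a fixed triple $(u,v,A)$. Second, a uniform control obtained by discretizing $(u,v)$ with $\varepsilon$-nets of $\bS_{p_1-1}\times\bS_{p_2-1}$ and handling $A$ through the VC structure.

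\textbf{Case $\cA=\{\R^d\}$.} The supremum equals $\|\frac1n\sum_i h_1(X_i)h_2(X_i)^\top-\bE[h_1h_2^\top]\|_{\op}$. Take $1/4$-nets $N_1,N_2$ of the two spheres, of sizes $9^{p_1},9^{p_2}$. The usual net argument bounds the operator norm by a constant times the maximum over the nets. Bernstein's inequality for sub-exponential variables gives, for each pair,
\[
\bP\Big(\Big|\tfrac1n\textstyle\sum_i f(X_i)-\bE f\Big|\ge\varepsilon\Big)\le 2\exp(-cn\min(\varepsilon^2,\varepsilon)) .
\]
A union bound over $9^{p_1+p_2}$ pairs then yields $n\gtrsim\varepsilon^{-2}(p_1+p_2+\ln(1/\delta))$, which is the second claim.

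\textbf{General VC class.} Two issues arise. The class is no longer bilinear in a single matrix, and the functions are unbounded, so classical VC bounds for bounded classes do not apply directly. I would handle this with a truncation at level $M=C\ln(1/\varepsilon)$.

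On the truncated part, I split $f=f\iind{\|h_1\|_{\ldots}}$, more precisely I replace $u^\top h_1$ and $h_2^\top v$ by their clipped versions $\mathrm{clip}_M(u^\top h_1)$ and $\mathrm{clip}_M(h_2^\top v)$. The resulting class is uniformly bounded by $M^2$. Its VC-subgraph dimension is $O(\vcdim(\cA)+p_1+p_2)$, since linear functionals have subgraph dimension $p+1$, and clipping, products and multiplication by indicators of a VC class preserve VC-type covering bounds up to constants (by permanence properties in \citealt{vaart1996weak}, Ch.~2.6). Using the uniform entropy bound for VC-subgraph classes together with Dudley chaining, or Talagrand's/Bousquet's inequality, I get with probability $1-\delta$
\[
\sup\Big|\tfrac1n\textstyle\sum_i(\cdots)-\bE\Big|\lesssim M^2\sqrt{(V+\ln(1/\delta))/n}.
\]
This is at most $\varepsilon/2$ once $n\gtrsim\varepsilon^{-2}\ln(1/\varepsilon)^2(V+\ln(1/\delta))$, which accounts for the $\ln(1/\varepsilon)^2$ factor. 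Here I only need the second moment of the envelope, so the $\psi_2$ tails suffice.

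On the remainder, the bias $|\bE[f-f_M]|$ is $O(e^{-cM^2})\le\varepsilon/4$ by the sub-Gaussian tails. The empirical remainder is dominated by
\[
\sup_{u,v}\tfrac1n\textstyle\sum_i |u^\top h_1||h_2^\top v|\,\iind{|u^\top h_1|>M \text{ or } |h_2^\top v|>M},
\]
where I drop the indicator of $A$ since everything is nonnegative. I bound this by Cauchy--Schwarz as
\[
\sup_u\Big(\tfrac1n\textstyle\sum_i (u^\top h_1)^2\Big)^{1/2}\cdot\sup_v\Big(\tfrac1n\textstyle\sum_i (h_2^\top v)^2\iind{|h_2^\top v|>M}\Big)^{1/2},
\]
together with the symmetric term. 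The first factor is $O(1)$ by the $\cA=\{\R^d\}$ case already proved. For the second factor I use a net argument on the sphere combined with Bernstein for the sub-exponential variables $(h_2^\top v)^2\iind{\cdot>M}$, whose mean is $e^{-cM^2}$; this costs $p_2+\ln(1/\delta)$ samples. The delicate part is passing from net points to all $v$, because the indicator is not Lipschitz. I would avoid this by lowering the truncation level to $M/2$ at net points, since $|h_2^\top v|>M$ and a nearby net point $v'$ force $|h_2^\top v'|>M/2$ unless $\|h_2\|$ is large. In that case I bound the contribution via $\|\frac1n\sum_i h_2h_2^\top\|_{\op}$ restricted to the large-norm event.

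\textbf{Main obstacle.} I expect the hardest step to be this control of the unbounded tails uniformly over the sphere without losing more than logarithmic factors. If the net/indicator argument becomes messy, my fallback is to cite a generic bound for suprema of empirical processes with sub-exponential envelopes, such as the Adamczak inequality. That bound adds an $\|\max_i F(X_i)\|_{\psi_1}/n\lesssim\ln n/n$ term, which is negligible in the stated regime.
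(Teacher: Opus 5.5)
\textbf{Review.} Your $\cA=\{\R^d\}$ case is correct and matches the paper: Bernstein for a fixed pair, a union bound over $9^{p_1+p_2}$ net points, and bilinearity to pass to the operator norm. Your treatment of the clipped class in the general case is also sound, up to the choice of $M$ (see below).

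\textbf{The gap.} It is in the step you flag yourself: controlling the truncation remainder uniformly over $(u,v)$. Because you clip $u^\top h_1$ and $h_2^\top v$ \emph{before} discretizing $(u,v)$, you lose bilinearity. You must then transfer a bound on $\frac1n\sum_i (h_2(X_i)^\top v)^2\iind{|h_2(X_i)^\top v|>M}$ from net points to all $v$ through a non-Lipschitz indicator. Your fix does not work.
\begin{itemize}
\item With a constant-scale net, the bad event $|h_2^\top(v-v')|>M/2$ is only controlled through $\|h_2\|\gtrsim M/\eta$. In the cases the paper actually uses ($h_2=\id$ with $p_2=d$), $\|h_2(X)\|\approx\sqrt d\gg M$, so this ``large-norm event'' contains essentially every sample.
\item Restricting $\frac1n\sum_i h_2h_2^\top$ to that event therefore leaves an operator norm of order one, not the $O(\varepsilon^2)$ your Cauchy--Schwarz split needs.
\item Making the event rare forces $\eta\lesssim M/\sqrt{p_2}$, i.e.\ a net of log-cardinality of order $p_2\ln p_2$. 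That adds a $\ln d$ factor to the sample complexity, which neither the lemma nor the $n\gtrsim d$ claim of Theorem~\ref{thm:mainempirical} allows.
\end{itemize}
The Adamczak fallback has the same defect. It only concentrates the supremum around its mean, which you still have to bound. The envelope of the joint class is $\sup_{u,v}|f_{u,v,A}|=\|h_1(X)\|\,\|h_2(X)\|$, typically of size $\sqrt{p_1p_2}$ (e.g.\ $d$ when $h_1=h_2=\id$). So an entropy bound on the mean picks up that factor, and the $\psi_1$ term is of order $\sqrt{p_1p_2}\ln n/n$ rather than $\ln n/n$. The $\ln n$ bound holds only for a \emph{fixed} pair $(u,v)$.

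\textbf{The missing idea.} Discretize $(u,v)$ on the raw process, and only then truncate.
\begin{itemize}
\item For fixed $A$, the deviation equals $u^\top M_A v$ with $M_A=\frac1n\sum_i\iind{X_i\in A}h_1(X_i)h_2(X_i)^\top-\bE[\iind{X\in A}h_1(X)h_2(X)^\top]$.
\item Hence $\sup_{u,v}u^\top M_Av=\|M_A\|_{\op}\le 2\max_{(u,v)\in\cN}u^\top M_Av$ for a $1/4$-net $\cN$. Taking $\sup_A$ on both sides reduces the lemma to $9^{p_1+p_2}$ fixed pairs.
\item For a fixed pair, $F=(u^\top h_1)(h_2^\top v)$ is a single sub-exponential function. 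The truncation $\iind{|F|\le M}$ with $M=C\ln(1/\varepsilon)$ does not depend on $A$.
\item The empirical tail $\frac1n\sum_i|F(X_i)|\iind{|F(X_i)|>M}$ is one average, handled by Bernstein at level $\varepsilon$.
\item The supremum over $A$ follows from the uniform-entropy bound with envelope $|F|$ (of $O(1)$ $L^2$ norm), plus Talagrand's inequality for the bounded truncated class.
\end{itemize}
This is the paper's proof, and no indicator in $v$ ever has to be handled.

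\textbf{Two smaller slips.}
\begin{itemize}
\item Clipping each sub-Gaussian factor at $M=C\ln(1/\varepsilon)$ bounds the product by $M^2\asymp\ln(1/\varepsilon)^2$, giving $\ln(1/\varepsilon)^4$ in $n$. You want $M=C\sqrt{\ln(1/\varepsilon)}$.
\item The Cauchy--Schwarz split needs the tail factor at level $\varepsilon^2$. Plain $\psi_1$-Bernstein only delivers that for $n\gtrsim\varepsilon^{-4}$, unless you use its variance-sensitive form.
\end{itemize}
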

\begin{proof}
We first fix $u\in\bS_{p_1-1}, v\in\bS_{p_2-1}$ and denote for convenience $F(x)=(u^\top h_1(x)) (h_2(x)^\top v)$, $\cF_{u,v} = \{ x\mapsto F(x) \iind{x\in A} \mid A \in \cA\}$.

\paragraph{Bound on expected supremum.} Theorems 2.14.1 and 2.6.7 of \citet{vaart1996weak} directly yield that for some universal constant $C$,
\begin{multline*}
 \bE\left[\sup_{f\in\cF_{u,v}} \left| \frac{1}{n} \sum_{i=1}^n f(X_i) - \bE[f(X)]\right| \right] \leq \frac{C}{\sqrt{n}} \|F\|_{L^2(\cN(0,\id))} \cdot \\ \int_{0}^1 \sqrt{1+\ln \left(C \cdot \vcdim(\cF_{u,v})(16e)^{\vcdim(\cF_{u,v})}\left(\frac{1}{\varepsilon}\right)^{2\vcdim(\cF_{u,v})-2}\right)}\df \varepsilon.
\end{multline*}
Since $\max(\|h_1(X)\|_{\psi_2},\|h_1(X)\|_{\psi_2})\leq C_0$, $\|F(X)\|_{\psi_1}\leq C_0^2$. In particular, its second moment, i.e., $\|F\|_{L^2(\cN(0,\id))}$, is of constant order. A simple Cauchy-Schwarz argument then yields that for some universal constant $C$,
\begin{equation*}
\bE\left[\sup_{f\in\cF_{u,v}} \left| \frac{1}{n} \sum_{i=1}^n f(X_i) - \bE[f(X)]\right| \right] \leq C \sqrt{\frac{\vcdim(\cF_{u,v})+1}{n}}.
\end{equation*}
Moreover, by definition of $\cF_{u,v}$, $\vcdim(\cF_{u,v})\leq\vcdim(\cA)$ \citep[see e.g.,][Lemma 2.6.18]{vaart1996weak}, so that
\begin{equation*}
\bE\left[\sup_{f\in\cF_{u,v}} \left| \frac{1}{n} \sum_{i=1}^n f(X_i) - \bE[f(X)]\right| \right] \leq C \sqrt{\frac{\vcdim(\cA)+1}{n}}.
\end{equation*}

\paragraph{Truncated decomposition.} Let $M\geq 0$ to be fixed later. Define $\cF_M = \{x \mapsto f(x)\iind{F(x)\in[-M,M]} \mid f\in\cF_{u,v}\}$ and $$\mu_{n,M} \coloneqq \bE\left[\sup_{f\in\cF_M} \left| \frac{1}{n} \sum_{i=1}^n f(X_i) - \bE[f(X)]\right|\right].$$
The exact same argument as above first yields that $\mu_{n,M}\leq  C \sqrt{\frac{\vcdim(\cA)+1}{n}}$ for some constant $C$. Moreover,
\begin{align*}
\sup_{f\in\cF_{u,v}} \left| \frac{1}{n} \sum_{i=1}^n f(X_i) - \bE[f(X)]\right| & \leq \sup_{f\in\cF_M} \left| \frac{1}{n} \sum_{i=1}^n f(X_i) - \bE[f(X)]\right| +  \frac{1}{n} \sum_{i=1}^n |F(X_i)|\iind{|F(X_i)|\geq M} \\ 
&\phantom{\leq}+ \bE[|F(X)|\iind{|F(X)|\geq M}] \\
 & \leq \sup_{f\in\cF_M} \left| \frac{1}{n} \sum_{i=1}^n f(X_i) - \bE[f(X)]\right|+ 2\bE[|F(X)|\iind{|F(X)|\geq M}]  \\ 
&\phantom{\leq}+  \frac{1}{n} \sum_{i=1}^n |F(X_i)|\iind{|F(X_i)|\geq M} - \bE[|F(X)|\iind{|F(X)|\geq M}].
\end{align*}
Let us now bound each of these terms with high probability. First, Theorem 2.14.25 of \citet{vaart1996weak} yields for any $t\geq 0$ that
\begin{align*}
\bP\left(\sup_{f\in\cF_M} \left| \frac{1}{n} \sum_{i=1}^n f(X_i) - \bE[f(X)]\right| \geq C(\mu_{n,M}+t)\right) \leq \exp\left(-Cn \min\left(\frac{t^2}{M^2}, \frac{t}{M}\right)\right).
\end{align*}
Taking $t=M \sqrt{\frac{\ln(1/\delta)}{Cn}}$ and $n\geq \frac{\ln(1/\delta)}{C}$---so that $\frac{t}{M}\leq 1$---it yields that with probability at least $1-\delta$,
\begin{equation}\label{eq:chainingterm1}
\sup_{f\in\cF_M} \left| \frac{1}{n} \sum_{i=1}^n f(X_i) - \bE[f(X)]\right| \leq C\sqrt{\frac{\vcdim(\cA)+1}{n}} + C M \sqrt{\frac{\ln(1/\delta)}{n}}
\end{equation}
for some universal constant $C$.

For the second term, it directly comes from the sub-exponential tail of $F(X)$, i.e.,
\begin{align}
\bE[|F(X)|\iind{|F(X)|\geq M}] &\leq \sqrt{\bE[F(X)^2]}\cdot \sqrt{\bP(|F(X)|\geq M)} \notag\\
& \leq Ce^{-cM},\label{eq:chainingterm2}
\end{align}
for positive universal constants $C$ and $c$. Lastly for the third term, note that $|F(X_i)|\iind{|F(X_i)|\geq M} - \bE[|F(X)|\iind{|F(X)|\geq M}]$ is also $C$ sub-exponential for some constant $C$. So that by classical Bernstein inequality \citep[][Theorem 2.8.1]{vershynin2018high} and if $n\geq C \ln(1/\delta)$, with probability at least $1-\delta$:
\begin{equation}\label{eq:chainingterm3}
\frac{1}{n} \sum_{i=1}^n |F(X_i)|\iind{|F(X_i)|\geq M} - \bE[|F(X)|\iind{|F(X)|\geq M}] \leq \sqrt{\frac{\ln(1/\delta)}{n}}.
\end{equation}
Gathering altogether Equations~\eqref{eq:chainingterm1} to \eqref{eq:chainingterm3} with $M=C\ln(1/\varepsilon)$ for a large enough universal constant $C$ finally yields that if $n\geq C \ln(1/\delta)$, with probability at least $1-\delta$,
\begin{equation}\label{eq:pointwisechaining}
\sup_{f\in\cF_{u,v}} \left| \frac{1}{n} \sum_{i=1}^n f(X_i) - \bE[f(X)]\right| \leq \frac{1}{2}\varepsilon + C\sqrt{\frac{\vcdim(\cA)+1}{n}} + C \ln(1/\varepsilon) \sqrt{\frac{\ln(1/\delta)}{n}}.
\end{equation}

\paragraph{Covering net argument.} Now we have proven Equation~\eqref{eq:pointwisechaining} for a fixed choice of vectors $u,v$, it remains to conclude by a covering net argument. For that, consider $\cN$ a $\frac{1}{4}$-net of $\bS_{p_1-1} \times \bS_{p_2-1}$, with cardinality smaller than $9^{p_1+p_2}$. From Equation~\eqref{eq:pointwisechaining}, a simple union bound then yields that if $n\geq C\ln(9^{p_1+p_2}/\delta)$ with probability at least $1-\delta$,
\begin{equation*}
\sup_{(u,v)\in \cN}\sup_{f\in\cF_{u,v}} \left| \frac{1}{n} \sum_{i=1}^n f(X_i) - \bE[f(X)]\right| \leq \frac{1}{2}\varepsilon + C\sqrt{\frac{\vcdim(\cA)+1}{n}} + C \ln(1/\varepsilon) \sqrt{\frac{\ln(9^{p_1+p_2}/\delta)}{n}}.
\end{equation*}
In particular, if $n\geq C \varepsilon^{-2}\ln(1/\varepsilon)^2 (\vcdim(\cA) + p_1+p_2+ \ln(1/\delta))$ for a large enough universal constant $C$, then with probability at least $1-\delta$, 
\begin{equation}\label{eq:boundnet}
\sup_{(u,v)\in \cN}\sup_{f\in\cF_{u,v}} \left| \frac{1}{n} \sum_{i=1}^n f(X_i) - \bE[f(X)]\right| \leq \varepsilon.
\end{equation}

Moreover, writing for shortness $M_A = \frac{1}{n} \sum_{i=1}^n \iind{X_i\in A} h_1(X_i) h_2(X_i)^\top - \bE[\iind{X\in A} h_1(X) h_2(X)^\top]$ for any $A\in\cA$,
\begin{align*}
\sup_{f\in\cF} \left| \frac{1}{n} \sum_{i=1}^n f(X_i) - \bE[f(X)]\right| & = \sup_{A\in\cA} \sup_{\substack{u\in\bS_{p_1-1}\\v\in\bS_{p_2-1}}}  u^\top M_A v.
\end{align*}
Fix $A\in\cA$ and let $(u_A,v_A)\in \bS_{p_1-1}\times \bS_{p_2-1}$ be such that $\sup_{\substack{u\in\bS_{p_1-1}\\v\in\bS_{p_2-1}}} u^\top M_A v =u_A^\top M_A v_A$. Letting $(u',v')$ be such that $\|(u_A,v_A)-(u',v')\|_2\leq \frac{1}{4}$, it then comes that
\begin{align*}
u'^\top M_A v' & =  u_A^\top M_A v_A - (u_A-u')^\top M_A v_A - u'^\top M_A (v_A-v')\\
& \geq \left(1-\|u_A-u'\|-\|v_A-v'\|\right)\sup_{\substack{u\in\bS_{p_1-1}\\v\in\bS_{p_2-1}}} u^\top M_A v\\
& \geq \frac{1}{2}\sup_{\substack{u\in\bS_{p_1-1}\\v\in\bS_{p_2-1}}} u^\top M_A v.
\end{align*}
In particular, this implies that
\begin{align*}
\sup_{f\in\cF} \left| \frac{1}{n} \sum_{i=1}^n f(X_i) - \bE[f(X)]\right| & \leq  2\sup_{A\in\cA} \sup_{(u,v)\in \cN}  u^\top M_A v,
\end{align*}
which concludes the proof of Lemma~\ref{lemma:chainingVC}, thanks to Equation~\eqref{eq:boundnet}.

\paragraph{Case $\cA=\{\R^d\}$.} This particular case is much simpler, as $\cF_{u,v}$ is now a single function. In consequence, instead of Equation~\eqref{eq:pointwisechaining}, a simple Bernstein inequality yields that when $n\geq C \ln(1/\delta)$, with probability at least $1-\delta$
\begin{equation*}
\sup_{f\in\cF_{u,v}} \left| \frac{1}{n} \sum_{i=1}^n f(X_i) - \bE[f(X)]\right| \leq C \sqrt{\frac{\ln(1/\delta)}{n}}.
\end{equation*}
The covering argument then allows to conclude similarly, requiring only a sample complexity $n\geq C \varepsilon^{-2}(p_1+p_2+\ln(1/\delta))$.
\end{proof}
Of note, we could probably improve---or even get rid of---the term $\ln(1/\varepsilon)^2$ in the required sample complexity of Lemma~\ref{lemma:chainingVC}, since Theorem 2.14.25 of \citet{vaart1996weak} actually yields a probability bounded as $\exp\left(-Cn \min\left(\frac{t^2}{\sigma_{\cF_M}^2}, \frac{t}{M}\right)\right)$ for some variance term $\sigma_{\cF_M}^2$. We use the obvious bound $\sigma_{\cF_M}^2\leq M^2$ here, but we believe a tighter bound is possible. However, it is sufficient for our purpose, since we will use Lemma~\ref{lemma:chainingVC} with $\varepsilon$ of constant order.

\begin{lemma}[adapted from \citealp{soltanolkotabi2017learning}, Lemma 5.5]\label{lemma:solta}
Let $w^\star_{0}$ be a fixed vector in $\bS_{d-1}$ and $\varepsilon_1, \varepsilon_2, \delta\in(0,1/2)$. There exists a positive universal constant $C$, such that if $n\geq \frac{C}{\varepsilon_1} \left(d+\ln(1/\delta)\right)$, then with probability at least $1-\delta$,
\begin{align*}
\frac{1}{n}\sum_{i=1}^n \left( x_i^\top P_{w_0^{\star\,\perp}} w\right)^2\iind{\frac{1}{2}|x_i^\top w_0^\star|\leq |x_i^\top  P_{w_0^{\star\,\perp}} w|} \leq C\left( \varepsilon_2 + \varepsilon_1 \right) \|P_{w_0^{\star\,\perp}} w\|^2,
\end{align*}
uniformly over all $w\in\R^d$ such that $\|P_{w_0^{\star\,\perp}}w \| \leq \varepsilon_2$.
\end{lemma}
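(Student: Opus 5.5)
The plan is to reduce the statement to a one-parameter family of vectors orthogonal to $w_0^\star$ and to exploit the independence between $x^\top w_0^\star$ and the projection of $x$ onto $\{w_0^\star\}^\perp$.

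\textbf{Reduction.} Write $v=P_{w_0^{\star\,\perp}}w=r\,u$ with $u\in\bS_{d-1}\cap\{w_0^\star\}^\perp$ and $r=\|v\|\le\varepsilon_2$, and set $g_i=x_i^\top w_0^\star$ and $z_i(u)=x_i^\top u$. The left-hand side becomes $r^2\cdot\frac1n\sum_i z_i(u)^2\,\iind{|g_i|\le 2r|z_i(u)|}$, so it suffices to show, uniformly over such $u$ and over $r\le\varepsilon_2$,
\begin{equation*}
\frac1n\sum_{i=1}^n z_i(u)^2\,\iind{|g_i|\le 2r|z_i(u)|}\le C(\varepsilon_2+\varepsilon_1).
\end{equation*}
The key structural fact is that the $g_i$ are i.i.d.\ $\cN(0,1)$ and independent of the Gaussian vectors $P_{w_0^{\star\,\perp}}x_i$. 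At the population level, the target quantity is $\bE[z^2\,\bP(|g|\le 2r|z|\mid z)]\le \bE[z^2\cdot 4r|z|/\sqrt{2\pi}]\lesssim r$, which explains where the $\varepsilon_2$ term comes from.

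\textbf{Dyadic peeling in $|z_i(u)|$.} For $j\ge0$, let $S_j=\{i:|g_i|\le 2^{j+2}\varepsilon_2\}$. These sets depend only on the $g_i$, not on $u$ or $r$. Every index with $|z_i(u)|\in[2^j,2^{j+1})$ that activates the indicator lies in $S_j$, and indices with $|z_i(u)|<1$ lie in $S_0$. This gives, for $J$ to be chosen,
\begin{equation*}
\sum_i z_i^2\,\iind{\cdot}\le \sum_{j=0}^{J}\ \sup_{u}\sum_{i\in S_j} z_i(u)^2\,\iind{|z_i(u)|\ge 2^{j-1}}\;+\;\sup_u\sum_i z_i(u)^2\,\iind{|z_i(u)|\ge 2^{J}}.
\end{equation*}
Conditionally on $(g_i)$, the rows $\{P_{w_0^{\star\,\perp}}x_i\}_{i\in S_j}$ are i.i.d.\ standard Gaussian in dimension $d-1$. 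Standard operator-norm bounds \citep[Theorem 4.4.5]{vershynin2018high} therefore give $\sup_u\sum_{i\in S_j}z_i(u)^2\le\|X_{S_j}\|_{\op}^2\lesssim |S_j|+d+\ln(1/\delta_j)$. A Chernoff bound on the binomial $|S_j|$ gives $|S_j|\lesssim n\,2^j\varepsilon_2+\ln(1/\delta_j)$.

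The factor $2^j$ must be killed by the constraint $|z_i(u)|\ge 2^{j-1}$. For this I use the sparsity bound: for every $k$, with high probability uniformly over subsets $S$ of size $k$, $\|X_S\|_{\op}^2\lesssim k\ln(en/k)+d+\ln(1/\delta)$. Hence the number $N_j(u)$ of indices in $S_j$ with $|z_i(u)|\ge2^{j-1}$ satisfies $4^{j-1}N_j\lesssim N_j\ln(en/N_j)+d+\ln(1/\delta)$. For $j$ beyond a constant this forces $N_j\lesssim (d+\ln(1/\delta))/4^{j}$. Combining, level $j$ contributes at most $\min\{n2^j\varepsilon_2,\ (d+\ln(1/\delta))\}$ plus lower-order terms, up to a geometric weight. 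This geometric weight is what should make the sum over $j$ converge to $C(n\varepsilon_2+d+\ln(1/\delta))$.

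\textbf{Tail term and conclusion.} The last term is handled by the same top-$k$ argument on the full sample. Uniformly in $u$, the number of $i$ with $|z_i(u)|\ge 2^J$ is $\lesssim (d+\ln(1/\delta))/4^J$, provided $n\gtrsim d+\ln(1/\delta)$, so the tail contributes $\lesssim d+\ln(1/\delta)$. I then divide by $n$ and use $n\ge C\varepsilon_1^{-1}(d+\ln(1/\delta))$, so that $(d+\ln(1/\delta))/n\le\varepsilon_1/C$. A union bound over $j\le J=O(\log n)$ with $\delta_j=\delta 2^{-j-1}$ finishes the argument, the extra $\ln(1/\delta_j)\approx j$ being absorbed by the geometric decay.

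\textbf{Main obstacle.} The step I expect to be hardest is making the peeling sum genuinely geometric, so that it yields $C(\varepsilon_2+\varepsilon_1)$ rather than something like $\varepsilon_2\sqrt{\ln(1/\varepsilon_2)}$ or $\varepsilon_2\log n$. The naive split that truncates at a constant $T$ and bounds $|S|\lesssim n\varepsilon_2T$ loses exactly such a logarithm. If the interplay between the $2^j$ growth of $|S_j|$ and the $4^{-j}$ sparsity of large projections does not close cleanly, my fallback is to follow \citet[Lemma 5.5]{soltanolkotabi2017learning} directly. That argument conditions on $(g_i)$, weights each row by $\iind{|g_i|\le 2r|z_i|}$, and controls the resulting weighted Gaussian quadratic form through a covering of $\{w_0^\star\}^\perp$ together with Bernstein's inequality for the sub-exponential variables $z_i^2\,\iind{\cdot}$, whose mean is $\lesssim r$ and whose $\psi_1$-norm is $O(1)$.
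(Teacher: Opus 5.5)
You set up the problem correctly: restricting to $u\in\bS_{d-1}\cap\{w_0^\star\}^\perp$ and $r=\varepsilon_2$ by monotonicity, conditioning on $g_i=x_i^\top w_0^\star$, and bounding the $S_0$ part by $\|X_{S_0}\|_{\mathrm{op}}^2\lesssim n\varepsilon_2+d+\ln(1/\delta)$ all work. The gap is in the peeling, and it is structural rather than a matter of constants. Your decomposition takes the supremum over $u$ separately at each level $j$, and each of these level-wise suprema is already of order $d$. For any $k\in S_j$, the direction $u=P_{w_0^{\star\,\perp}}x_k/\|P_{w_0^{\star\,\perp}}x_k\|$ gives $z_k(u)^2=\|P_{w_0^{\star\,\perp}}x_k\|^2\approx d$, and $|z_k(u)|\ge 2^{j-1}$ holds as long as $4^{j}\lesssim d$. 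Switching to dyadic bands does not help: spreading $u$ over $\approx d/4^j$ nearly orthogonal rows of $S_j$ makes each $|z_i(u)|\approx 2^j$ and again gives a level sum $\approx d$.

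When $n\varepsilon_2\gg 1$, all $S_j$ are nonempty. So the right-hand side of your decomposition is $\gtrsim d\log d$, and this route can at best give $C(\varepsilon_2+\varepsilon_1\log d)$. The geometric weight you hope for does not exist. Your sparsity bound translates into a level contribution $N_j4^j\lesssim d+\ln(1/\delta)$, which is flat in $j$. Moreover, the bound $N_j\lesssim(d+\ln(1/\delta))/4^j$ only holds once $4^j\gtrsim\ln(en/N_j)$, not beyond a constant. This loss matters. Lemma~\ref{lemma:P2conc2} uses the statement with $\varepsilon_2\asymp\varepsilon_1$, so an extra $\log d$ would turn the $n\gtrsim d$ of Theorem~\ref{thm:mainempirical} into $n\gtrsim d\log d$.

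The missing idea is that a single unit vector cannot saturate all levels at once, because the level energies share one budget of order $d$. The supremum therefore has to stay outside the sum over $j$. But then you need uniform control of a $u$-dependent set of active rows. The crude subset bound $\|X_U\|_{\mathrm{op}}^2\lesssim |U|\ln(en/|U|)+d$ with $|U|\approx n\varepsilon_2$ only trades the $\log d$ for a $\ln(1/\varepsilon_2)$. Your sketch of the fallback (a net plus Bernstein) skips over the same difficulty. The summand is discontinuous in $u$, and passing from the net to the whole sphere is exactly where the work lies.

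The paper does not reprove any of this. Its proof is a direct application of Lemma 5.5 of Soltanolkotabi (2017) with $\mathcal C=\{w_0^\star\}^\perp$, $\varepsilon=\varepsilon_2/(\tfrac12+\varepsilon_2)$, $\delta=\sqrt{\varepsilon_1}$ and $w^*=(\tfrac12+\varepsilon_2)w_0^\star$. It first corrects two typos in that lemma: the term $\sqrt{\tfrac{21}{20}\varepsilon}$ and the sample requirement $n\ge c\,\delta^{-2}\omega^2(\mathcal C\cap\bS_{d-1})$. The viable version of your proposal is therefore the fallback, used as a black box with this parameter mapping, exactly as the paper does.
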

\begin{proof}
This is actually a direct application\footnote{Note that there appear to be two typos in the statement of Lemma 5.5 in \citet{soltanolkotabi2017learning}: the term $\sqrt{\frac{21}{20}}\varepsilon$ should read $\sqrt{\frac{21}{20}\varepsilon}$, and the sample-size requirement should be $n\geq c\delta^{-2}\omega^2(\cC\cap\bS_{d-1})$. The first typo appears to be inherited from a similar typo in the proof on page 30 of \citet{soltanolkotabi2017structured}. Moreover, the sample complexity required in \citet{soltanolkotabi2017structured} is indeed $n\geq c\delta^{-2}\omega^2(\cC\cap\bS_{d-1})$.} of \citet[][Lemma 5.5]{soltanolkotabi2017learning} with, following their notations,
\begin{itemize}
\item $\mathcal{C} =  \{w_0^\star\}^\perp$;
\item $\varepsilon = \frac{\varepsilon_2}{\frac{1}{2}+\varepsilon_2}$;
\item $\delta = \sqrt{\varepsilon_1}$;
\item $w^* = \left(\frac{1}{2}+\varepsilon_2\right) w_0^\star$.
\end{itemize}
\end{proof}
\end{document}